\newcommand{\poly}{\operatorname{poly}}
\title{Oracle-Efficient Combinatorial Semi-Bandits}
\author{%
   Jung-hun Kim \\
 CREST, ENSAE, IP Paris\\
 FairPlay joint team,
  France\\
  \texttt{junghun.kim@ensae.fr} \\
  \And
  Milan Vojnović\\
  London School of Economics\\
  United Kingdom\\
  \texttt{m.vojnovic@lse.ac.uk}
    \And
  Min-hwan Oh\\
  Seoul National University\\
  South Korea\\
  \texttt{minoh@snu.ac.kr}
}
\begin{document}

\maketitle

\begin{abstract}
We study the combinatorial semi-bandit problem where an agent selects a subset of base arms and receives individual feedback. While this generalizes the classical multi-armed bandit and has broad applicability, its scalability is limited by the high cost of combinatorial optimization, requiring oracle queries at \textit{every} round. To tackle this, we propose oracle-efficient frameworks that significantly reduce oracle calls while maintaining tight regret guarantees. For the worst-case linear reward setting, our algorithms achieve $\tilde{O}(\sqrt{T})$ regret using only $O(\log\log T)$ oracle queries. We also propose covariance-adaptive algorithms that leverage noise structure for improved regret, and extend our approach to general (non-linear) rewards. Overall, our methods reduce oracle usage from linear to (doubly) logarithmic in time, with strong theoretical guarantees.

\end{abstract}

\section{Introduction}\label{sec:intro}



The combinatorial semi-bandit problem extends the classical multi-armed bandit (MAB) model to settings where an agent selects a subset of base arms (a combinatorial action) and receives individual feedback for each. This general framework captures many real-world scenarios, such as product recommendation, where a set of items is recommended to a user \citep{hao2020p}; ad slot allocation, where multiple ads are displayed on a webpage \citep{feldman2010online}; and network routing, where a path comprising several links is selected in a communication network \citep{vesselinova2020learning}.

Due to its broad applicability, the combinatorial semi-bandit problem has been extensively studied in the literature \citep{chen2013combinatorial, combes2015combinatorial, kveton2015tight, degenne2016combinatorial, perrault2020covariance, zhou2024towards}. However, a central challenge lies in the computational complexity of solving the combinatorial optimization problem, which is often NP-hard. As a result, most existing algorithms assume access to an oracle that returns a solution to the combinatorial problem. These algorithms rely on querying the oracle at every round, leading to excessive oracle usage and substantial computational overhead in practice.

In this work, following the computational complexity notions introduced in~\citet{balkanski18,fahrbach2019non}, we distinguish between two measures of oracle efficiency: \textit{adaptivity complexity} 
and \textit{query complexity}, which are defined later. Our goal is to improve oracle efficiency by substantially reducing the overall oracle adaptivity and query complexities in decision-making over a time horizon~$T$, {while maintaining tight gap-free regret guarantees that do not depend on the suboptimality gaps.}
Our main contributions are summarized below and compared with prior work on gap-free combinatorial  semi-bandits in Table~\ref{tab:algo-comparison}.

\begin{itemize}
    \item \textbf{Oracle-efficient algorithms for worst-case linear rewards:} 
We propose two frameworks that significantly reduce oracle query usage while maintaining tight regret guarantees.
Using an adaptive oracle query framework, \texttt{AROQ-CMAB} achieves near-optimal regret of $\tilde{O}(\sqrt{mdT})$, with both adaptivity and query complexity bounded by $O(d\log\log(Tm/d))$, {where $d$ denotes the number of base arms and $m$ is the maximum number of activated base arms per action.}
To further improve computational practicality by reducing adaptivity complexity, we propose a scheduled oracle query framework that executes multiple independent oracle queries in parallel. Under this framework, \texttt{SROQ-CMAB} achieves regret $\tilde{O}(m\sqrt{dT})$ with adaptivity complexity of $\Theta(\log\log T)$ and query complexity of $O(d\log\log T)$.

    \item \textbf{Covariance-adaptive oracle-efficient algorithms for linear rewards:} 
 Utilizing our proposed frameworks,  we design oracle-efficient algorithms that leverage the {estimated} covariance structure of the reward noise.
    \texttt{AROQ-C-CMAB} achieves near-optimal regret $\tilde{O}\big(\sqrt{\sum_{i\in[d]}\max_{a\in \Acal \text{ s.t. } i\in a}\sigma_i^2(a)T}\big)$, {where $\sigma_i^2(a)$ denotes the variance contribution of base arm $i$ under action $a$}, with adaptivity and query complexities of $O(d^2\log(Tm))$.
    \texttt{SROQ-C-CMAB} achieves regret $\tilde{O}\big(\sqrt{d\max_{a\in \Acal}\sum_{i\in a}\sigma_i^2(a)T}\big)$ with adaptivity complexity of  $\Theta(\log\log T)$ and query complexity of $O(d^2\log\log T)$.

    \item \textbf{Oracle-efficient algorithm for general reward models:} 
    We extend our frameworks to general (non-linear) reward functions. 
    \texttt{AROQ-GR-CMAB} achieves regret $\tilde{O}(L\sqrt{mdT})$ with adaptivity and query complexities of $O(d\log\log(Tm/d))$, {where $L$ denotes the maximum possible value of the reward.}    \texttt{SROQ-GR-CMAB} achieves regret $\tilde{O}(Lm\sqrt{dT})$ with adaptivity complexity of $O(\log\log T)$ and query complexity of $\Theta(d\log\log T)$. 

\end{itemize}

\begin{table}[t]
\centering
\caption{{Gap-free regret bounds for combinatorial semi-bandit algorithms.}
}
\label{tab:algo-comparison}\vspace{-1mm}
\renewcommand{\arraystretch}{1.2}
{\fontsize{8}{10}\selectfont 
\setlength{\tabcolsep}{1pt}
\begin{tabular}{lcccc}
\toprule
\makecell[l]{Combinatorial \\ Reward Model} & \textbf{Algorithm} & \textbf{Regret} & \makecell{\textbf{Adaptivity} \\ \textbf{Complexity}} & \makecell{\textbf{Query} \\ \textbf{Complexity}}\\
\midrule
\multirow{4}{*}{\makecell[l]{Linear \\ (Worst-case)}} 
  & \texttt{CUCB} \citep{chen2013combinatorial} & \( \tilde{O}(m\sqrt{dT})  \) & \( \Theta(T) \) & \( \Theta(T) \) \\
  & \texttt{CUCB} \citep{kveton2015tight} & \( \tilde{O}(\sqrt{mdT}) \) & \( \Theta(T) \) & \( \Theta(T) \) \\
  & \texttt{AROQ-CMAB} \textbf{(our work)} 
    & \( \boldsymbol{\tilde{O}(\sqrt{mdT})} \) 
    & \( \boldsymbol{O(d\log\log(\frac{Tm}{d}))} \) 
    & \( \boldsymbol{O(d\log\log(\frac{Tm}{d}))} \) \\
    & \texttt{SROQ-CMAB} \textbf{(our work)} 
    & \( \boldsymbol{\tilde{O}(m\sqrt{dT})} \) 
    & \( \boldsymbol{\Theta(\log\log T)} \) 
    & \( \boldsymbol{O(d\log\log T)} \) \\
\midrule
\multirow{6}{*}{\makecell[l]{Linear \\ (Covariance\\-dependent)}} 
  & \texttt{OLS-UCB-C} \citep{zhou2024towards} 
    & \( \displaystyle\tilde{O}\Bigg(\sqrt{\sum_{i\in [d]}\max_{a\in \Acal  \text{ s.t. } i\in a}
    \sigma_i^2(a)T}\Bigg) \) 
    & \( \Theta(T) \) 
    & \( \Theta(T) \) \\
  & \texttt{AROQ-C-CMAB} \textbf{(our work)} 
    & \( \boldsymbol{\displaystyle\tilde{O}\Bigg(\sqrt{\sum_{i\in [d]}\max_{a\in \Acal \text{ s.t. }i\in a}
    \sigma_i^2(a)T}\Bigg)} \) 
    & \( \boldsymbol{O(d^2\log(Tm))} \) 
    & \( \boldsymbol{O(d^2\log(Tm))} \) \\
  & \texttt{SROQ-C-CMAB} \textbf{(our work)}  
    & \( \boldsymbol{\displaystyle\tilde{O}\Bigg(\sqrt{d\max_{a\in \Acal}\sum_{i\in a}\sigma_i^2(a)T}\Bigg)} \) 
    & \( \boldsymbol{\Theta(\log\log T)} \) 
    & \( \boldsymbol{O(d^2\log\log T)} \) \\
\midrule
\multirow{3}{*}{General} 
  & \texttt{SDCB} \citep{chen2016combinatorial} & \( \tilde{O}(L\sqrt{mdT}) \) & \( \Theta(T) \) & \( \Theta(T) \) \\
  & \texttt{AROQ-GR-CMAB} \textbf{(our work)} 
    & \( \boldsymbol{\tilde{O}(L\sqrt{mdT})} \) 
    & \( \boldsymbol{O(d\log\log(\frac{Tm}{d}))} \) 
    & \( \boldsymbol{O(d\log\log(\frac{Tm}{d}))} \) \\
      & \texttt{SROQ-GR-CMAB} \textbf{(our work)} 
    & \( \boldsymbol{\tilde{O}(Lm\sqrt{dT})} \) 
    & \( \boldsymbol{\Theta(\log\log T)} \) 
    & \( \boldsymbol{O(d\log\log T)} \) \\
\bottomrule
\end{tabular}}
\end{table}

\paragraph{Related Work.}  The combinatorial semi-bandit problems have been extensively studied, starting from the foundational work of \citet{chen2013combinatorial}. \citet{kveton2015tight} established tight regret bounds that are near-optimal. Further improvements were made by \citet{combes2015combinatorial}, who derived better bounds under the assumption that the feedback from selected arms is independent.

More recently, a unified framework that accounts for both dependent and independent feedback through covariance analysis was introduced by \citet{degenne2016combinatorial}, assuming knowledge of the covariance matrix. This line of research has been further advanced by \citet{perrault2020covariance} and \citet{zhou2024towards}, who developed covariance-adaptive algorithms based on confidence ellipsoids. In addition to linear reward structures, generalized linear reward functions have also been studied in the combinatorial semi-bandit setting by \citet{chen2016combinatorial}.

Despite these advances, all of the aforementioned works require solving a combinatorial optimization problem frequently at every round using an offline oracle, which is generally NP-hard \citep{cuvelier2021statistically}. To alleviate the computational burden, \citet{cuvelier2021statistically} proposed an approximation-based approach that achieves polynomial-time complexity. However, their method introduces a trade-off between regret and computational cost, as achieving near-optimal regret necessitates increasingly accurate approximations—leading to potentially unbounded computational time.\footnote{To reach optimal regret, the approximation level $\delta_t$ must satisfy $\lim_{t \to \infty} \delta_t = 0$, and each round incurs cost $O(1/\delta_t)\rightarrow \infty$} Similarly, \citet{chen2013combinatorial} considered approximation oracles but focused on minimizing approximate regret, rather than the original regret, and their method is limited to cases where such approximation oracles are available. \citet{neu2013efficient} studied efficient algorithms in the adversarial semi-bandit setting, but their method still requires solving the optimization problem at every round and does not attain optimal regret in the stochastic setting. Similarly, \citet{zhou2024towards} proposed an adaptive covariance-based algorithm using ellipsoidal confidence regions, yet it also incurs oracle calls at every round. Lastly, \citet{tzeng2024matroid} studied matroid semi-bandits with sublinear per-round computational complexity by exploiting matroid structure, a direction that is complementary to our focus on reducing the number of oracle calls for combinatorial semi-bandits with arbitrary action sets.

As a related line of research, oracle-efficient algorithms have been studied for submodular function optimization problems \citep{balkanski18,chekuri2019parallelizing,fahrbach2019non}. However, these approaches do not involve latent models that can be learned from stochastic sequential feedback, as in bandit learning. As a result, they differ fundamentally in formulation and are not applicable to the bandit setting. Oracle-efficient bandit algorithms have also been proposed for bandit linear optimization \citep{ito2019oracle}, achieving ${O}(\poly(d, \log T))$ oracle complexity. However, these approaches assume linear rewards with full-arm decisions and do not handle combinatorial action spaces or semi-bandit feedback, which are central to our setting.  To the best of our knowledge, rare oracle queries in combinatorial semi-bandit problems have only been empirically explored by \citet{combes2015combinatorial}, who proposed a heuristic using $O(\log T)$ oracle calls but without theoretical regret guarantees. Furthermore, several variants of combinatorial bandits heavily rely on frequent oracle queries, including Thompson Sampling methods \citep{wang2018thompson}, maximum-reward feedback settings \citep{wang2023combinatorial}, and pure exploration problems \citep{chen2014combinatorial}.





\section{Problem Formulation}


There are $d$ base arms, and let $\Acal \subseteq \{0,1\}^d$ denote the set of available actions, where each action $a \in \Acal$ is a binary vector indicating the activated base arms. We allow $\Acal$ to be an arbitrary subset of $\{0,1\}^d$. Then, we define $m = \max_{a \in \Acal} \|a\|_0$ as the maximum number of activated base arms across all actions.  At each time $t\in[T]$, the environment samples a vector of rewards $y_t\in [0,1]^d$ from a fixed distribution $\Dcal$ that is unknown to the agent, and the agent chooses an action $a_t\in \Acal$. For any vector $x \in \RR^d$, we use $x_i$ to denote its $i$-th entry. Then, the agent receives a reward $r(a_t,y_t)$ where $r:\Acal \times [0,1]^d \to \RR$, 
and observes the values of $y_{t,i}$ for each $i\in[d]$ such that $a_{t,i}=1$ (semi-bandit feedback). The mean of the latent distribution $\Dcal$ is denoted by $\mu=(\mu_1,\dots,\mu_d)$. We first focus on the standard linear reward setting studied in prior work \citep{kveton2015tight, combes2015combinatorial, degenne2016combinatorial, perrault2020covariance, zhou2024towards}, where the reward is given by $r(a,y_t):=\langle a,y_t\rangle$. We will discuss generalizations beyond the linear case later.

\paragraph{Regret.}
Let $a^*$ be an optimal action, $a^*\in\argmax_{a\in \Acal}\bar{r}(a)$, where $\bar{r}(a) = \EE_{y \sim \Dcal}[r(a,y)]$ represents the expected reward function (e.g., $\bar{r}(a)=\langle a,\mu \rangle$ in the linear case). The goal is to minimize the cumulative regret over horizon $T$, defined as 
$\mathcal{R}(T)=\EE[\sum_{t=1}^T (\bar{r}(a^*) -\bar{r}(a_t))].$

\paragraph{Combinatorial Optimization.} For finding an optimal action, it is required to solve the combinatorial optimization problem $\argmax_{a\in \Acal}\bar{r}(a)$, whose computational cost, in general with {arbitrary} $\Acal\subseteq \{0,1\}^d$, is proportional to the size of $\Acal$ which is $O(d^m)$. To address this computational complexity, the previous work on combinatorial semi-bandits \citep{combes2015combinatorial,chen2013combinatorial,kveton2015tight} assumed access to an oracle, which returns a solution for the combinatorial optimization. However, these methods require querying the oracle at every round.  In this work, we aim to substantially reduce the number of oracle queries while achieving tight regret. Formally, as in the previous work, we assume access to an oracle that returns $a^\dagger\in\argmax_{a\in \Acal}f(a)$ for given $f:\mathcal{A}\rightarrow \RR$. Furthermore, our oracle-efficient approach can be incorporated with an approximation oracle, which will be discussed later.

\paragraph{Oracle Efficiency.}

Following the computational complexity notions introduced in~\cite{balkanski18,fahrbach2019non}, we evaluate the oracle efficiency of our algorithms using two key measures, which are described as follows and illustrated in Figure~\ref{fig:complexity}: \textbf{Query complexity} refers to the total number of \textit{individual} oracle queries made over the entire time horizon. This reflects the standard computational workload of the algorithm. \textbf{Adaptivity complexity} captures the number of \textit{sequential} rounds of oracle queries, where each round may consist of a set of queries that can be executed in parallel without depending on each other. 


\begin{figure}
    \centering
   \hspace{-10mm} \includegraphics[width=0.8\linewidth]{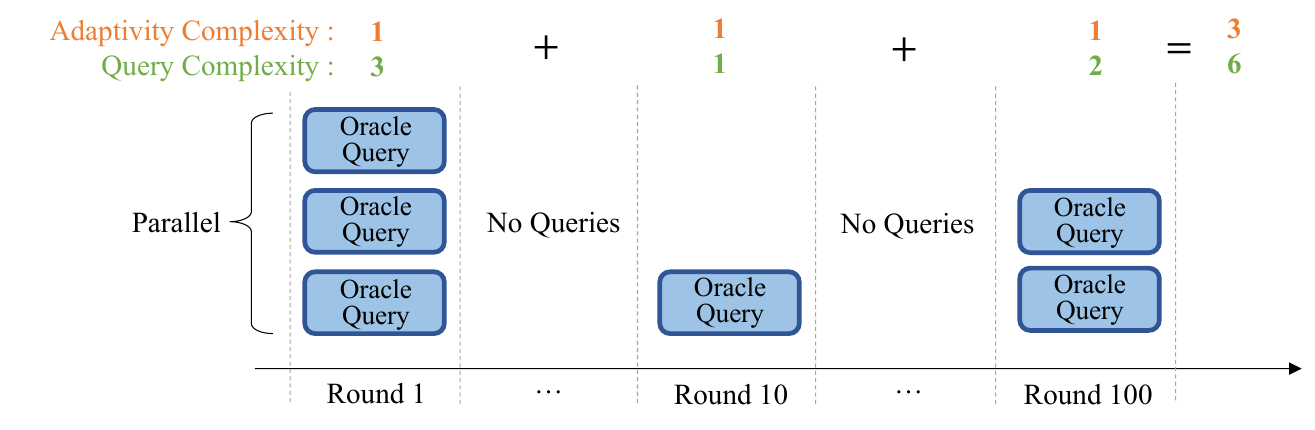}
    \caption{Oracle adaptivity and query complexities.}
    \label{fig:complexity}
\end{figure}

While query complexity is important in general, adaptivity complexity often dominates the actual runtime cost in parallel or distributed environments, where parallelizing queries within a round is easy, but synchronizing between rounds incurs overhead. Our goal is to reduce both measures of oracle complexity while maintaining tight regret guarantees.

\paragraph{Additional Notation.}  
With a slight abuse of notation, we write $i \in a$ for $a \in \Acal$ if the $i$-th coordinate of $a$ satisfies $a_i = 1$.  For any \(b, c \in \mathbb{R}\), we write \(b \lesssim c\) to indicate that \(b\) is upper bounded by \(c\) up to a constant factor. We use $b_+$ for $\max\{b,0\}$.


\section{Oracle-Efficient Algorithms and Regret Analysis}\label{sec:alg-main}
We present two frameworks for combinatorial semi-bandit problems with rare oracle queries: one based on an \textit{adaptive} oracle calls, and the other based on \textit{scheduled} oracle queries, both inspired by batch learning \citep{perchet2015batched,slivkins2019introduction,gao2019batched,chen2024robust,dong2020multinomial,zhang2021almost,hanna2023contexts,ren2024optimal,yu2025optimal,hanna2023efficient}.



\subsection{Adaptive Rare Oracle Queries}

We propose an algorithm (Algorithm~\ref{alg:adaptive}) that leverages adaptive oracle queries. The algorithm employs a UCB-based strategy with adaptive epoch-based updates, enabling efficient exploration despite infrequent oracle access. 
We use the UCB indices defined as: for some constant $C>0$,
\begin{align}\label{eq:ucb-adaptive}
r_t^{UCB}(a)=\sum_{i\in a}\left(\hat{\mu}_{t,i}+\sqrt{\frac{C\log t}{ n_{t,i}}}\right), 
\end{align}
 where $\hat{\mu}_{t,i}=(1/n_{t,i})\sum_{s=1}^{t-1} y_{s,i}\mathbbm{1}(i\in a_s)$ and $n_{t,i}=\sum_{s=1}^{t-1}\mathbbm{1}(i\in a_s)$. {The UCB index for each action is updated when there exists a base arm~$i$ whose selection count exceeds a specified threshold.
More precisely, the indices are updated when the number of rounds in which arm~$i$ has been selected in the current epoch, denoted by~$|\Tcal_i(\tau_i)|$, satisfies $|\Tcal_i(\tau_i)|\ge 1+\sqrt{Tm \cdot |\Tcal_i(\tau_i-1)|/d}$, where~$\tau_i$ is the epoch index for arm~$i$, and~$\Tcal_i(\tau)$ is the set of rounds in epoch~$\tau$ where~$i$ was selected.}


 The intuition for the threshold condition of oracle queries is as follows: for $i\in[d]$ with $\tau_i$, the instance regret is bounded by $\sqrt{1/|\Tcal_{i}(\tau_i-1)|}$ so that with the bound for $|\Tcal_i(\tau_i)|$ from the update condition, the overall regret for $\tau_i$ epochs is bounded by  $|\Tcal_i(\tau_i)|\sqrt{1/|\Tcal_i(\tau_i-1)|}\lesssim \sqrt{Tm|\Tcal_i(\tau_i-1)|/d}\sqrt{1/|\Tcal_i(\tau_i-1)|}=\sqrt{Tm/d}$. By considering all $\tau_i$ for $i\in[d]$, we can obtain the near-optimal regret bound with oracle efficiency.




\begin{algorithm*}[ht]
  \caption{ Adaptive Rare Oracle Queries for Combinatorial MAB (\texttt{AROQ-CMAB})}\label{alg:adaptive}

  \textbf{Initialize:} $\tau_i=1$ for all $i\in[d]$
  
 \For{$t=1,2...,T$}{

\For{ $i\in[d]$ s.t. $|\Tcal_i(\tau_i)|\ge 1+\sqrt{{Tm\cdot |\Tcal_i(\tau_i-1)|}/{d}}$} {

$\tau_i\leftarrow \tau_i+1$, $\Tcal_i(\tau_i)\leftarrow \emptyset$

$Update\leftarrow True$
}
\If{$Update=True$}
{$a_{t}\leftarrow \arg\max_{a\in \Acal}r_t^{UCB}(a)$  with \eqref{eq:ucb-adaptive} 
\hfill \textit{// Oracle Query}

$Update\leftarrow False$
}
\Else
{ $a_t\leftarrow a_{t-1}$}





Play $a_t$ and observe feedback $y_{t,i}$ for $i\in a_t$  

$\Tcal_i(\tau_i)\leftarrow \Tcal_i(\tau_i) \cup \{t\}$ for all $i\in a_t$ 
}
 
\end{algorithm*}

\begin{theorem}\label{thm:adaptive}
With oracle adaptivity and query complexities of $O(d\log \log (Tm/d))$, respectively, 
Algorithm~\ref{alg:adaptive} achieves a regret bound of   \[\mathcal{R}(T)=O\left(\sqrt{mdT\log T}\log\log(Tm/d)\right).\]  
\end{theorem}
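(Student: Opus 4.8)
The plan is to decompose the argument into two essentially independent parts: (i) bounding the oracle adaptivity and query complexities by counting epochs, and (ii) bounding the regret through a UCB confidence analysis that I reorganize as a sum over base arms and epochs. For part (i), I would fix a base arm $i$ and study the recursion governing its epoch sizes $N_\tau := |\mathcal{T}_i(\tau)|$. The update rule forces $N_\tau \approx \sqrt{(Tm/d)\,N_{\tau-1}}$; rewriting this in logarithms shows it converges doubly exponentially to its fixed point $Tm/d$, with $N_\tau \approx (Tm/d)^{1-2^{-\tau}}$, so only $O(\log\log(Tm/d))$ epochs are needed before the epoch size saturates at $\Theta(Tm/d)$. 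After saturation each epoch consumes $\Theta(Tm/d)$ pulls of arm $i$, and since the total number of base-arm pulls satisfies $\sum_i n_i = \sum_t \|a_t\|_0 \le mT$, the number of saturated epochs summed over all arms is at most $mT/(Tm/d) = d$. Hence the total epoch count is $\sum_i K_i = O(d\log\log(Tm/d)) + O(d) = O(d\log\log(Tm/d))$. Because every round that triggers an update issues exactly one oracle query and has at least one arm advancing its epoch, both the query complexity and the adaptivity complexity are bounded by $\sum_i K_i$, giving the claimed $O(d\log\log(Tm/d))$.

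For the regret, I would first fix the good event on which $|\hat{\mu}_{t,i}-\mu_i|\le \sqrt{C\log t / n_{t,i}}$ holds simultaneously for all $t\in[T]$ and $i\in[d]$. A martingale (Azuma--Hoeffding) bound, which is what handles the adaptive, data-dependent sampling of each arm, combined with a union bound over the sample count and over $t$ and a large enough constant $C$, makes the complementary event have probability $O(1/T)$; since each round's regret is at most $m$, the bad event contributes only $O(m)$ to $\mathcal{R}(T)$. On the good event I use that within a block the action equals the one chosen at the last oracle-query time $t'$, i.e. $a_t=a_{t'}$ maximizes $r_{t'}^{UCB}$. The standard UCB sandwich then gives $\bar{r}(a^*)\le r_{t'}^{UCB}(a^*)\le r_{t'}^{UCB}(a_{t'})$ and $\bar{r}(a_{t'})\ge r_{t'}^{UCB}(a_{t'})-2\sum_{i\in a_{t'}}\sqrt{C\log t'/n_{t',i}}$, so the instantaneous regret at round $t$ is at most $2\sqrt{C\log T}\sum_{i\in a_t}\sqrt{1/n_{t',i}}$.

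The crux is to sum this per-round bound by reorganizing it as $\mathcal{R}(T)\lesssim \sqrt{\log T}\sum_i\sum_\tau\sum_{t\in \mathcal{T}_i(\tau)}\sqrt{1/n_{t',i}}$. For a round $t$ inside epoch $\tau$ of arm $i$, the count used in the bonus satisfies $n_{t',i}\ge |\mathcal{T}_i(\tau-1)|$, because the start of epoch $\tau$ is itself an update round (so $t'$ is no earlier than it) and the count is non-decreasing. Together with the threshold $|\mathcal{T}_i(\tau)|\lesssim \sqrt{Tm\,|\mathcal{T}_i(\tau-1)|/d}$, each epoch contributes at most $|\mathcal{T}_i(\tau)|\sqrt{1/|\mathcal{T}_i(\tau-1)|}\lesssim \sqrt{Tm/d}$, exactly the informal calculation in the text. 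Multiplying by the total epoch count $\sum_i K_i=O(d\log\log(Tm/d))$ and using $\sqrt{Tm/d}\cdot d=\sqrt{mdT}$ yields $\mathcal{R}(T)\lesssim \sqrt{mdT\log T}\,\log\log(Tm/d)$, as claimed.

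I expect the \textbf{main obstacle} to be the careful handling of two coupling issues. First, the bonus counts are stale, frozen at the last query $t'$ which may have been triggered by a \emph{different} arm, so I must verify $n_{t',i}\ge |\mathcal{T}_i(\tau-1)|$ uniformly across the possibly several blocks nested inside one epoch of arm $i$, and confirm the epoch-length bound survives the unit overshoot incurred when the threshold is crossed. Second, the first epoch of each arm is degenerate since the bonus is undefined when $n_{t',i}=0$; I would cap the confidence width at $1$ (valid because the means lie in $[0,1]$, which preserves the UCB upper bound) and bound the first-epoch regret trivially by $|\mathcal{T}_i(1)|\lesssim \sqrt{Tm/d}$ per arm. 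This is consistent with the per-epoch budget and contributes only a lower-order $O(\sqrt{mdT})$ term, so it is absorbed into the stated bound.
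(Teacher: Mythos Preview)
Your proposal is correct and follows essentially the same approach as the paper's proof: an epoch-counting argument via the recursion $N_\tau \gtrsim (Tm/d)^{1-2^{1-\tau}}$ combined with the global budget $\sum_i n_{T,i}\le mT$ to bound $\sum_i \tau_i(T)=O(d\log\log(Tm/d))$, followed by an Azuma--Hoeffding concentration event, the UCB sandwich with stale counts, and the per-arm/per-epoch reorganization using $n_{t',i}\ge |\mathcal{T}_i(\tau-1)|$ and $|\mathcal{T}_i(\tau)|\lesssim \sqrt{Tm|\mathcal{T}_i(\tau-1)|/d}$. You even make explicit two points the paper leaves implicit (the cross-arm trigger issue for stale counts and the degenerate first epoch), and your treatment of both is sound.
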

\begin{proof}
The full version of the proof is provided in Appendix~\ref{app:regret-adaptive}.
\end{proof}


\paragraph{Comparison to Previous Work.} The worst-case regret lower bound for this problem is known to be $\Omega(\sqrt{mdT})$ \citep{chen2013combinatorial}, and a near-optimal regret was achieved by the algorithm of \citet{chen2013combinatorial,kveton2015tight}. However, their approach requires $\Theta(T)$ oracle adaptivity and query complexity, respectively. In contrast, our proposed algorithm, \texttt{AROQ-CMAB} (Algorithm~\ref{alg:adaptive}), achieves a near-optimal regret bound while significantly reducing both oracle adaptivity and query complexities to $O(d\log\log(Tm/d))$.



\paragraph{$\alpha$-Approximation Oracle.} 
Prior work on combinatorial bandits \citep{chen2013combinatorial} mitigates the cost of exact oracles by using $\alpha$-approximation oracles, {which return an action~$a^\dagger$ satisfying $f(a^\dagger) \geq \alpha \max_{a \in \Acal} f(a)$ for some approximation factor~$\alpha \in [0, 1]$,} when such oracles are available. This leads to analyzing the regret relative to the best $\alpha$-approximate reward, rather than the true optimal reward. In contrast, our method targets the original (non-approximate) regret while reducing the frequency of oracle queries. Nevertheless, our framework can naturally incorporate $\alpha$-approximation oracles, achieving the same regret guarantee in terms of $\alpha$-approximate regret with only infrequent approximate oracle calls. See Appendix~\ref{app:approx-oracle} for details.

\subsection{Scheduled Rare Oracle Queries}

To further reduce adaptivity complexity and enable more efficient distributed computation, we propose an algorithm that performs scheduled batched oracle queries at predetermined epochs (Algorithm~\ref{alg:elim}), allowing these oracle queries to be synchronized and executed in parallel. At each epoch $\tau$, we define UCB and LCB indices as: for some constant $C>0$
\begin{align}
 r^{UCB}_\tau(a)= \sum_{i\in a}\left( \hat{\mu}_{\tau,i}+\sqrt{\frac{C\log T}{n_{\tau,i}}}\right) \text{ and }  
 r^{LCB}_\tau(a)= \sum_{i\in a}\left( \hat{\mu}_{\tau,i}-\sqrt{\frac{C\log T}{n_{\tau,i}}}\right),   \label{eq:ucb_fixed}
\end{align}
where $\hat{\mu}_{\tau,i}=(1/n_{\tau,i})\sum_{t=1}^{t_{\tau}-1}y_{t,i} \mathbbm{1}(i\in a_t)$, $n_{\tau,i}=\sum_{t=1}^{t_\tau-1}\mathbbm{1}(i\in a_t)$, and $t_\tau$ denotes the start time of the $\tau$-th epoch.  

To schedule oracle queries, we adopt an elimination-based bandit strategy \cite{slivkins2019introduction}.  However, in combinatorial bandits, the exponentially large number of suboptimal actions poses a significant challenge for efficient exploration. To address this, we construct a representative action \(a_\tau^{(i)}\) for each base arm \(i \in [d]\) in epoch \(\tau\). Using elimination conditions applied to these representative actions, we can efficiently eliminate a large number of  suboptimal actions by focusing on suboptimal base arms—those that are not part of the optimal action \(a^*\). In each epoch, these representative actions are selected for exploration, after which the estimators and representative actions are updated. This process requires oracle queries but only at the batch level, keeping the query frequency low.


Let $\Tcal = \{t_1, \dots, t_M\}$ denote the set of time steps at which oracle queries are made, where $M > 0$, $t_1 = 1$,  $t_M = T$, and for $1 < \tau < M$, the sequence is defined recursively as $t_\tau = \eta \sqrt{t_{\tau-1}}$. The scaling factor is set to $\eta = T^{1 / (2 - 2^{1-M})}$. We choose $M = \Theta(\log\log T)$ to ensure doubly logarithmic adaptivity and query complexity. Parallel execution of oracle queries is discussed in Appendix~\ref{app:parallel}.


\begin{algorithm*}[ht]
  \caption{Scheduled Rare Oracle Queries for Combinatorial MAB (\texttt{SROQ-CMAB})}\label{alg:elim}
  
  \KwIn{$\mathcal{T}$}
  
 \For{$\tau=1,2,\dots,M$}{
 
Update $\hat{\mu}_\tau:=(\hat{\mu}_{\tau,1},\dots,\hat{\mu}_{\tau,d})$

$a_\tau^{(i)}:= \argmax_{a\in \Acal_{\tau-1}:i \in a}r_\tau^{UCB}(a)$ for all $i\in \Ncal_{\tau-1}$
 with \eqref{eq:ucb_fixed} \label{line:elim-oracle1}  
  \hfill \textit{// Oracle Queries}

$\Ncal_{\tau} \leftarrow  \{i\in\Ncal_{\tau-1} \mid   r_{\tau}^{UCB}(a^{(i)}_\tau)\ge \max_{a \in  \Acal_{\tau-1}}r^{LCB}_{\tau}(a)\}$  with \eqref{eq:ucb_fixed} \label{line:elim-oracle2}  
 \hfill \textit{//  Oracle Query }


$\Acal_{\tau}\leftarrow \{a\in \Acal_{\tau-1}\mid a_i=0 \text{ for all } i\in [d]/\Ncal_\tau\}$

$\Tcal_\tau\leftarrow [t_{\tau},t_{\tau+1}-1]$

\For{$t\in \Tcal_\tau$}
{

$i \leftarrow \bigl((t-1)\bmod |\mathcal{N}_\tau| + 1\bigr)$-th element of $\mathcal{N}_\tau$.

Play $a_t=a_\tau^{(i)}$ and observe feedback $y_{t,i}$ for $i\in a_t$  
}}
 
\end{algorithm*}

\begin{theorem}\label{thm:elim}
  With oracle adaptivity complexity of $\Theta(\log\log T)$ and oracle query complexity of $O(d\log\log T)$,  Algorithm~\ref{alg:elim} achieves a regret bound of 
\[\mathcal{R}(T)=O\left(m\sqrt{dT\log T\log\log T}\right).\] 
\end{theorem}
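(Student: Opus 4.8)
The plan is to bound the regret by analyzing the elimination procedure and the exploration cost across the $M=\Theta(\log\log T)$ epochs. First I would establish a high-probability \emph{good event} on which every confidence interval is valid: for each epoch $\tau$ and each base arm $i$ that survives to epoch $\tau$, we have $|\hat\mu_{\tau,i}-\mu_i|\le \sqrt{C\log T/n_{\tau,i}}$ simultaneously for all $\tau\le M$ and $i\in[d]$. By a standard Hoeffding/union-bound argument (over the $d$ arms and $M$ epochs) this holds with probability $1-O(1/T)$, and the complementary event contributes at most $O(1)$ to the regret since per-round regret is bounded by $m$. The key structural consequence is that on the good event the elimination in Line~\ref{line:elim-oracle2} never discards a base arm $i^*\in a^*$: since the UCB of the best action through $i^*$ dominates the LCB of $a^*$, the optimal arms always survive, so $a^*\in\Acal_\tau$ for all $\tau$.

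Next I would control the per-epoch regret. Within epoch $\tau$ the algorithm round-robins over the representative actions $a_\tau^{(i)}$ for $i\in\Ncal_\tau$. For a surviving arm $i$, the survival condition $r_\tau^{UCB}(a_\tau^{(i)})\ge \max_a r_\tau^{LCB}(a)\ge r_\tau^{LCB}(a^*)$ combined with the confidence bounds yields that the instantaneous regret $\bar r(a^*)-\bar r(a_\tau^{(i)})$ of playing $a_\tau^{(i)}$ is bounded by the sum of confidence widths $\sum_{j\in a_\tau^{(i)}}\sqrt{C\log T/n_{\tau,j}}$, which is at most $m\sqrt{C\log T/n_{\min,\tau}}$ where $n_{\min,\tau}$ is the smallest selection count among arms active at the start of epoch $\tau$. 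The crucial counting fact is that by the end of epoch $\tau-1$ every active arm has been pulled roughly $t_\tau/|\Ncal|$ times through the round-robin schedule, so each $n_{\tau,j}\gtrsim t_\tau/d$ (up to the $\log\log T$ factor from the number of representatives). Multiplying the per-round regret bound $m\sqrt{d\log T/t_\tau}$ by the epoch length $t_{\tau+1}-t_\tau\le t_{\tau+1}$ and summing over $\tau$ gives the leading term.

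The heart of the calculation is then the telescoping sum $\sum_{\tau=1}^{M} (t_{\tau+1}) \cdot m\sqrt{d\log T/t_\tau}$, which I would simplify using the recursion $t_\tau=\eta\sqrt{t_{\tau-1}}$ with $\eta=T^{1/(2-2^{1-M})}$. The schedule is designed precisely so that each epoch contributes an equal amount $m\sqrt{dT\log T}$ to the regret: the ratio $t_{\tau+1}/\sqrt{t_\tau}=\eta\sqrt{t_\tau}/\sqrt{t_\tau}\cdot\sqrt{t_\tau}$ telescopes to a uniform $\sqrt{T}$-order term. Since there are $M=\Theta(\log\log T)$ epochs, summing yields $\mathcal R(T)=O(m\sqrt{dT\log T}\cdot M)=O(m\sqrt{dT\log T\log\log T})$. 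The adaptivity complexity is $M=\Theta(\log\log T)$ since oracle queries occur only once per epoch boundary, and the query complexity is $O(d\log\log T)$ because each epoch issues $O(d)$ queries (one per surviving base arm in Line~\ref{line:elim-oracle1}, plus one in Line~\ref{line:elim-oracle2}).

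The main obstacle I anticipate is making the per-epoch pull-count lower bound $n_{\tau,j}\gtrsim t_\tau/d$ rigorous, since the round-robin over $\Ncal_\tau$ only guarantees balanced pulls \emph{among surviving arms}, and a base arm $j$ accumulates pulls both from $a_\tau^{(j)}$ and incidentally from other representative actions $a_\tau^{(i)}$ with $j\in a_\tau^{(i)}$. Handling the worst case—where $j$ is pulled only when it is the designated representative—requires that $|\Ncal_\tau|\le d$ and that the epoch length is large enough to complete $\Omega(t_\tau/d)$ full round-robin cycles; verifying this against the specific schedule $t_\tau=\eta\sqrt{t_{\tau-1}}$, and confirming the extra $m$ factor (versus the $\sqrt{m}$ of the adaptive algorithm) arises from bounding each coordinate's confidence width separately rather than exploiting variance cancellation, is where the bookkeeping concentrates.
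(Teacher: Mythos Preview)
Your proposal is correct and follows essentially the same structure as the paper's proof: good event via Hoeffding plus union bound, inductive survival of $a^*$ under elimination, per-epoch instantaneous regret controlled through the survival condition together with the round-robin lower bound $n_{\tau,j}\gtrsim|\mathcal{T}_{\tau-1}|/d$, and the telescoping sum using $t_{\tau+1}/\sqrt{t_\tau}=\eta$. The only detail you glossed over is that the instantaneous regret bound picks up the confidence widths of \emph{both} $a^*$ and $a_\tau^{(i)}$ (the paper writes $\sum_{j\in a^*}\sqrt{\cdot}+\sum_{j\in a_\tau^{(i)}}\sqrt{\cdot}$), not just those of $a_\tau^{(i)}$, though this does not change the order of the final bound.
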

\begin{proof}
The full version of the proof is provided in Appendix~\ref{app:regret-elim}.
\end{proof}

We observe that Algorithm~\ref{alg:elim} improves the adaptivity complexity from $O(d\log\log(Tm/d))$ in Algorithm~\ref{alg:adaptive} to $\Theta(\log\log T)$, at the cost of an additional $\sqrt{m}$ factor in the regret.

\begin{remark}\label{rm:elim}
     In practice, Algorithm~\ref{alg:elim} can be more computationally efficient than Algorithm~\ref{alg:adaptive}. First, due to its reduced adaptivity complexity, Algorithm~\ref{alg:elim} enables more efficient parallel execution of oracle queries. Second, the elimination process progressively discards suboptimal base arms, reducing the oracle query complexity per round from $O(d^m)$ to $O(|\Ncal_\tau|^m)$, where $|\Ncal_\tau| \le d$ denotes the number of remaining base arms at epoch $\tau$. These computational advantages are further supported by our experimental results presented later.
\end{remark}

Our proposed frameworks for combinatorial semi-bandits with rare oracle queries can be extended to variants of the combinatorial semi-bandit, including covariance-dependent CMAB and general-reward CMAB. In the following, we examine each of these settings in turn.



\section{Extension to Covariance-dependent CMAB}
In this section, instead of targeting worst-case regret, we consider covariance-dependent regret, inspired by \cite{degenne2016combinatorial,perrault2020covariance,zhou2024towards}. The covariance-dependent analysis can cover independent or dependent (worst-case) semi-bandit rewards of arms in an action. Here, the covariance matrix for the reward distribution $\Dcal$ is denoted by $\Sigma \in \RR^{d\times d}$, which is assumed to be unknown to the agent.

For simplicity, we assume that for any $1 \le i \le j \le d$, there exists
an action $a \in \mathcal{A}$ such that $a_i = a_j = 1$.
When this assumption does not hold, our algorithms and analysis
extend naturally by restricting attention to \emph{observable pairs},
that is, pairs $(i,j)$ for which there exists an action $a \in \mathcal{A}$
with $a_i = a_j = 1$. All results continue to hold under this extension,
with the same regret guarantees.

In the following, we propose covariance-adaptive algorithms based on our two frameworks—adaptive and scheduled rare oracle queries—to handle this setting.


\subsection{Adaptive Rare Oracle Queries for Covariance-adaptive Approach}

 We first propose an algorithm (Algorithm~\ref{alg:adaptive-cov})  based on the adaptive rare oracle query framework.
Recall that $n_{t,i}=\sum_{s=1}^{t-1}\mathbbm{1}(i\in a_s)$ and $n_t:=(n_{t,1},\ldots, n_{t,d})$.
Let $D_x$ and $D_X$ denote diagonal matrices, where $D_x$ has the entries of vector $x$ on its diagonal, and $D_X$ has the diagonal entries of matrix $X$. 

We define the estimated means as $\hat{\mu}_t=D_{n_t}^{-1}\sum_{s=1}^{t-1}D_{a_s} y_s$.
We also define covariance estimator  $\hat{\Sigma}_t=\hat{S}_t-\hat{\mu}_t \hat{\mu}_t^\top $ where $\hat{S}_{t,(i,j)}=\frac{1}{n_{t,(i,j)}}\sum_{s=1}^{t-1}a_{s,i}a_{s,j}y_{s,i}y_{s,j}$, and confidence bound $\overline{\Sigma}_{t,(i,j)}=\hat{\Sigma}_{t,(i,j)}+\frac{1}{4}\big(\frac{5h_{t}}{\sqrt{n_{t,(i,j)}}}+\frac{h_{t}^2}{n_{t,(i,j)}}+\frac{1}{n_{t,(i,j)}^2}\big),$ where $n_{t,(i,j)}=\sum_{s=1}^{t-1}\mathbbm{1}(i\in a_s)\mathbbm{1}(j\in a_s) $ with $n_{t,(i,i)}=n_{t,i}$, and $h_{t}=O(\sqrt{\log t+\log d})$. Define the gram matrix $\overline{G}_t=\sum_{s=1}^{t-1}D_{a_s}\overline{\Sigma}_t D_{a_s}+D_{\overline{\Sigma}_{t}}D_{n_t}+I$. Then, we utilize the UCB index, defined as
\begin{align}\label{eq:ucb-alg3}
    r^{UCB}_t(a)=\langle a, \hat{\mu}_t\rangle  +f_{t}\|D_{n_t}^{-1}a\|_{\overline{G}_t},
\end{align} where $f_{t}=O(\log t+d\log\log t)$. 

{ To initialize, the algorithm uniformly explores actions as a warm-up phase for the stability of the covariance estimator.} Then in the main stage, for the adaptive update condition, we adopt a stricter criterion than that of (worst-case) Algorithm~\ref{alg:adaptive}, resulting in more frequent updates---though oracle queries remain rare---while handling the covariance-dependent bound to achieve a tighter guarantee. For the covariance dependent regret bound, we define $\sigma^2_i(a)=\sum_{j\in a}(\Sigma_{i,j})_+$. The algorithm achieves a near-optimal regret bound asymptotically as follows.

\begin{algorithm*}[t]
  \caption{ Adaptive Rare Oracle Queries for Covariance-adaptive CMAB (\texttt{AROQ-C-CMAB})}\label{alg:adaptive-cov}
  \textbf{Initialize:} $\tau_{i,j}=0$ for all $i,j\in[d]\times[d]$

 \For{$t=1,2...,T$}{

 \If{$t\le \lceil d(d+1)\log^3(T)/2\rceil$}{
 
Let $(i,j)$ be the $\left((t-1) \bmod \frac{d(d+1)}{2}+1\right)$-th pair in a fixed enumeration of all pairs $(i,j)$ with $1\le i\le j \le d$ 

 $a_t \leftarrow \text{ any } a\in \Acal$ s.t. $i\in a, j\in a$

\If{$t=\lceil d(d+1)\log^3(T)/2\rceil$}{
$\tau_{i,j}\leftarrow\tau_{i,j}+1$ for all $i,j\in[d]\times[d]$
}
 }

\Else{

\For{$i,j\in[d]\times [d]$ s.t. $|\Tcal_{i,j}(\tau_{i,j})|\ge 1+2|\Tcal_{i,j}(\tau_{i,j}-1)|$}
 {


$\tau_{i,j}\leftarrow \tau_{i,j}+1$, 
$\Tcal_{i,j}(\tau_{i,j})\leftarrow \emptyset$

$Update\leftarrow True$
}

\If{$Update=True$}{
$a_{t}\leftarrow \arg\max_{a\in \Acal}r_t^{UCB}(a)$  with \eqref{eq:ucb-alg3} 
 \hfill \textit{// Oracle Query}

$Update\leftarrow False$
}
\Else{ $a_t\leftarrow a_{t-1}$
}
}

Play $a_t$ and observe feedback $y_{i,t}$ for $i\in a_t$  

$\Tcal_{i,j}(\tau_{i,j})\leftarrow \Tcal_{i,j}(\tau_{i,j}) \cup \{t\}$  for all $i\in a_t$ and $j\in a_t$,
}
\end{algorithm*}

\begin{theorem}\label{thm:adaptive-cov}
 With oracle adaptivity and query complexities of $O(d^2\log(Tm))$, respectively, Algorithm~\ref{alg:adaptive-cov} achieves an asymptotic regret bound of \[\Rcal(T)=\tilde{O}\left( \sqrt{T\sum_{i\in[d]}\max_{{a\in\Acal\text{ s.t. } i\in a}}
 \sigma_i^2(a)}\right).\]
\end{theorem}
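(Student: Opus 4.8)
The plan is to proceed in four stages: bound the oracle complexity via the doubling rule, establish a high-probability \emph{good event} on which the covariance-adaptive index is a valid upper confidence bound, decompose the regret through optimism, and finally bound the accumulated confidence widths using the positive-part covariance structure. For the oracle complexity, note that an oracle query is triggered only when some pair $(i,j)$ satisfies $|\Tcal_{i,j}(\tau_{i,j})|\ge 1+2|\Tcal_{i,j}(\tau_{i,j}-1)|$, so the epoch-local count of that pair at least doubles between consecutive updates it triggers; hence each of the $O(d^2)$ pairs triggers at most $O(\log(Tm))$ updates over the horizon, and since each update is a single sequential oracle call, both adaptivity and query complexity equal the number of updates, which is $O(d^2\log(Tm))$.

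For the good event I would first exploit the warm-up phase, which forces $n_{t,(i,j)}=\Omega(d^2\log^3 T)$ at the start of the main stage, to invoke a Bernstein/Freedman-type concentration for the empirical second moments $\hat S_{t,(i,j)}$ and the means $\hat\mu_{t,i}$. Uniformly over $t$ and all pairs, with probability $1-1/\poly(T)$, this yields the entrywise domination $\Sigma_{i,j}\le\overline\Sigma_{t,(i,j)}$, where the additive corrections $\tfrac14\big(5h_t/\sqrt{n_{t,(i,j)}}+h_t^2/n_{t,(i,j)}+1/n_{t,(i,j)}^2\big)$ are precisely the deviation terms the concentration bound produces with $h_t=O(\sqrt{\log t+\log d})$. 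Combining this with a self-normalized concentration for the weighted error $\langle a,\hat\mu_t-\mu\rangle$, whose variance proxy is exactly $(D_{n_t}^{-1}a)^\top\big(\sum_s D_{a_s}\Sigma D_{a_s}\big)(D_{n_t}^{-1}a)$ and is therefore dominated by $\|D_{n_t}^{-1}a\|_{\overline G_t}^2$, I obtain $|\langle a,\hat\mu_t-\mu\rangle|\le f_t\|D_{n_t}^{-1}a\|_{\overline G_t}$ simultaneously for all $a\in\Acal$. In particular $\langle a,\mu\rangle\le r_t^{UCB}(a)$, so $r_t^{UCB}(a^*)\ge\bar r(a^*)$.

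On the good event, let $\tilde t$ be the last update at or before $t$, so $a_t=a_{\tilde t}=\argmax_a r^{UCB}_{\tilde t}(a)$; optimism and the confidence bound give a per-round regret of at most $2f_{\tilde t}\|D_{n_{\tilde t}}^{-1}a_t\|_{\overline G_{\tilde t}}$. The doubling rule is crucial here: within any epoch the relevant counts increase by at most a constant factor, so the stale counts $n_{\tilde t,\cdot}$ match the fresh counts $n_{t,\cdot}$ up to a universal constant and may be interchanged. Expanding the squared width, its dominant term equals $\sum_{i,j\in a}\frac{n_{t,(i,j)}}{n_{t,i}n_{t,j}}\overline\Sigma_{t,(i,j)}$; using $\overline\Sigma_{t,(i,j)}\le(\Sigma_{i,j})_+$ plus lower-order corrections, $n_{t,(i,j)}\le\min(n_{t,i},n_{t,j})$, and splitting by which count is larger, this is at most $2\sum_{i\in a}\frac1{n_{t,i}}\sum_{j\in a}(\Sigma_{i,j})_+=2\sum_{i\in a}\sigma_i^2(a)/n_{t,i}$ (the $D_{\overline\Sigma_t}D_{n_t}$ term contributes comparably and the identity term is lower order). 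Writing $V_i:=\max_{a\in\Acal:\,i\in a}\sigma_i^2(a)$ and applying Cauchy--Schwarz over rounds,
\[
\sum_{t}\sqrt{\sum_{i\in a_t}\frac{\sigma_i^2(a_t)}{n_{t,i}}}\le\sqrt{T}\,\Big(\sum_i V_i\sum_{t:\,i\in a_t}\frac1{n_{t,i}}\Big)^{1/2}\lesssim\sqrt{T\log T\sum_i V_i},
\]
since each inner harmonic sum is $O(\log T)$. Multiplying by $f_T=\tilde O(1)$ and adding the $O(d^2\log^3 T)$ warm-up regret (lower order) gives the claimed $\tilde O\big(\sqrt{T\sum_i\max_{a\in\Acal:\,i\in a}\sigma_i^2(a)}\big)$ asymptotic bound.

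I expect the main obstacle to be establishing the covariance confidence bound with exactly the stated correction terms and verifying that $\overline G_t$ uniformly upper-bounds the true variance proxy of $\langle a,\hat\mu_t-\mu\rangle$: this is where the warm-up phase, the concentration of the empirical covariance, and the precise forms of $\overline\Sigma_t$ and $f_t$ must fit together, and where the positive-part structure $\sigma_i^2(a)=\sum_{j\in a}(\Sigma_{i,j})_+$ is essential to absorb potentially negative off-diagonal covariances without losing the target scaling.
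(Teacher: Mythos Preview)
Your proposal is essentially the same approach as the paper's: bound the number of updates via the per-pair doubling rule, cite/derive the covariance-adaptive confidence ellipsoid (the paper invokes Propositions~1 and~5 of \cite{zhou2024towards} directly rather than re-deriving them), use optimism with the stale index $\tilde r_t^{UCB}$, expand $\|D_{n_t}^{-1}a\|_{\overline G_t}^2$ and bound the dominant cross term by $\sum_{i\in a}\sigma_i^2(a)/n_{t,i}$, and finish with Cauchy--Schwarz over rounds.

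The only substantive variation is in the last summation: the paper does \emph{not} swap stale counts for fresh ones and then use a harmonic-sum bound; instead it partitions the horizon into the epochs $\Tcal_{i,i}(\tau)$, uses $n'_{t,i}\ge|\Tcal_{i,i}(\tau-1)|$ together with $|\Tcal_{i,i}(\tau)|\le 1+2|\Tcal_{i,i}(\tau-1)|$ to collapse each epoch to a constant, and sums $\tau_{i,i}(T)=O(\log(Tm))$ epochs. Your swap is valid (the same doubling condition indeed gives $n_{t,i}\le 1+3n'_{t,i}$), so both routes yield the same bound. One small arithmetic slip: the warm-up guarantees $n_{t,(i,j)}=\Omega(\log^3 T)$, not $\Omega(d^2\log^3 T)$, but this is exactly what is needed for the lower-order correction terms to vanish asymptotically.
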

\begin{proof}
The full version of the proof is provided in Appendix~\ref{app:regret-adaptive-cov}.
\end{proof}
In the worst case of dependent base arm rewards, our regret bound becomes $\tilde{O}(\sqrt{mdT})$, which is the same as that of Algorithm~\ref{alg:adaptive}. However, for the independent reward case across all base arms (i.e., $\Sigma=I$), the regret bound becomes $\tilde{O}(\sqrt{dT})$, which is tighter by a factor of $\sqrt{m}$.

\paragraph{Comparison to Previous Work.} As discussed in \cite{zhou2024towards}, the proposed algorithms in \cite{degenne2016combinatorial,perrault2020covariance} achieved gap-dependent asymptotic regret bound, which is not tight for the gap-free bound with respect to $T$ because of the additional $1/\Delta_{\min}^2$ factor. The regret lower bound of this problem is  $\Omega(\sqrt{T\sum_{i\in[d]}\max_{a\in \Acal\text{ s.t. } i\in a}\sigma_i^2(a)})$ \citep{zhou2024towards} and  \cite{zhou2024towards} propose an algorithm achieving near-optimal gap-free asymptotic regret bound of $\tilde{O}(\sqrt{T\sum_{i\in[d]}\max_{a\in \Acal\text{ s.t. } i \in a}\sigma_i^2(a)})$ with $\Theta(T)$ oracle adaptivity and query complexity, respectively. Our algorithm achieves the near-optimal asymptotic regret bound with reduced oracle adaptivity and query complexity of $O(d^2\log(mT))$, respectively. 


\subsection{Scheduled Rare Oracle Queries for Covariance-adaptive Approach}
\begin{algorithm*}[t!]
  \caption{Scheduled Rare Oracle Queries for 
 Covariance-adaptive CMAB (\texttt{SROQ-C-CMAB})}\label{alg:elim-cov}

  \KwIn{$\mathcal{T}$}

 \For{$t\in [1,\lceil d(d+1)/2\rceil]$}{
 
Let $(i,j)$ be the $\left((t-1) \bmod \frac{d(d+1)}{2}+1\right)$-th pair in a fixed enumeration of all pairs $(i,j)$ with $1\le i\le j \le d$ 

 $a_t \leftarrow \text{ any } a\in \Acal$ s.t. $i\in a, j\in a$
 
 }
 \For{$\tau=1,2,\dots,M$}{

Update $\hat{\mu}_{\tau,i}, \hat{S}_{\tau,(i,j)}$ for all $(i,j)\in [d]\times [d]$


$a_\tau^{(i)}:= \argmax_{a\in \Acal_{\tau-1}: i\in a}r_\tau^{UCB}(a)$ for all $i\in \Ncal_{\tau-1}$ with \eqref{eq:confi_cov}\label{line:oracle-call-elim-cov}  
 \hfill \textit{// Oracle Queries}

$a_\tau^{(i,j)}:= \argmax_{a\in \Acal_{\tau-1}: i,j\in a}r_\tau^{UCB}(a)$ for all $(i,j)\in \Ncal_{\tau-1}^{(2)}: i\neq j$ \label{line:oracle-call2-elim-cov}  
 \hfill \textit{// 
 Oracle Queries}

$\Ncal_{\tau} \leftarrow  \{i\in\Ncal_{\tau-1} \mid   r_{\tau}^{UCB}(a^{(i)}_\tau)\ge \max_{a \in  \Acal_{\tau-1}}r^{LCB}_{\tau}(a)\}$ with \eqref{eq:confi_cov}\label{line:oracle-call3-elim-cov}
 \hfill \textit{// Oracle Query}

$\Acal_{\tau}'\leftarrow \{a\in \Acal_{\tau-1}\mid a_i=0 \text{ for all } i\in [d]/\Ncal_\tau\}$

$\Ncal_{\tau}^{(2)} \leftarrow  \{(i,j) \in\Ncal_{\tau}
\times \Ncal_{\tau}\mid   r_{\tau}^{UCB}(a^{(i,j)}_\tau)\ge \max_{a \in  \Acal_{\tau}'}r^{LCB}_{\tau}(a), i\neq j\}$ \label{line:oracle-call4-elim-cov}  
 \hfill \textit{// Oracle Query}

$\Acal_{\tau}\leftarrow \{a\in \Acal_{\tau}'\mid a_i=0 \text{ or } a_j=0 \text{ for all } (i,j)\in [d]\times [d]/\Ncal_\tau^{(2)}, i\neq j\}$

$\Tcal_\tau^{(1)}\leftarrow [t_{\tau},t_{\tau}+T_\tau-(d^2m^2T_\tau\log T)^{2/3}-1]$, $\Tcal_\tau^{(2)}\leftarrow [t_{\tau}+T_\tau-{(d^2m^2T_\tau\log T )}^{2/3},t_{\tau+1}-1]$

\For{$t\in \Tcal_\tau^{(1)}$}
{

$i\leftarrow$ $(t \mod|\Ncal_\tau|)$-th element in $\Ncal_\tau$

Play $a_t=a_\tau^{(i)}$

Receive reward $\langle a_t, y_t \rangle$ and observe feedback $y_{t,i}$ for $i\in[d] \: s.t.\: a_{t,i}=1$  
}
\For{$t\in \Tcal_\tau^{(2)}$}
{

$(i,j)\leftarrow$ $(t \mod|\Ncal_\tau^{(2)}|)$-th element in $\Ncal_\tau^{(2)}$

Play $a_t=a_\tau^{(i,j)}$

Receive reward $\langle a_t, y_t \rangle$ and observe feedback $y_{t,i}$ for $i\in[d] \: s.t. \: a_{t,i}=1$  
}

}
\end{algorithm*}

Here, we propose a covariance-adaptive algorithm (Algorithm~\ref{alg:elim-cov}) by utilizing the framework of scheduled rare oracle queries.
Recall that for each epoch $\tau$, $\hat{\mu}_{\tau,i}=(1/n_{\tau,i})\sum_{t=1}^{t_{\tau}-1}y_{t,i} \mathbbm{1}(i\in a_t)$ where $n_{\tau,i}=\sum_{t=1}^{t_\tau-1}\mathbbm{1}(i\in a_t)$ and $t_\tau$ is the start time of epoch $\tau$ in the algorithm. For the covariance estimator, we define $\hat{\Sigma}_\tau=\hat{S}_\tau-\hat{\mu}_\tau \hat{\mu}_\tau^\top $ where $\hat{S}_{\tau,(i,j)}=(1/{n_{\tau,i,j}})\sum_{t=1}^{t_\tau-1}a_{t,i}a_{t,j}y_{t,i}y_{t,j}$, and confidence bound $\overline{\Sigma}_{\tau,(i,j)}=\hat{\Sigma}_{\tau,(i,j)}+\frac{1}{4}\big(\frac{5h_{T}}{\sqrt{n_{\tau,(i,j)}}}+\frac{h_{T}^2}{n_{\tau,(i,j)}}+\frac{1}{n_{\tau,(i,j)}^2}\big),$ where $n_{\tau,(i,j)}=\sum_{t=1}^{t_\tau-1}\mathbbm{1}(i\in a_t)\mathbbm{1}(j\in a_t) $, $n_{\tau,(i,i)}=n_{\tau,i}$, and $h_{T}=O(\sqrt{\log T+\log d})$.  Define the gram matrix $\overline{G}_\tau=\sum_{s=1}^{t_\tau-1}D_{a_s}\overline{\Sigma}_\tau D_{a_s}+D_{\overline{\Sigma}_{\tau}}D_{n_\tau}+I$. Then, for the confidence bounds, we utilize
\begin{align}
    r^{UCB}_\tau(a)=\langle a, \hat{\mu}(\tau) \rangle +f_{T}\|D_{n_\tau}^{-1}a\|_{\overline{G}_{\tau}} \text{ and }  
 r^{LCB}_\tau(a)= \langle a, \hat{\mu}(\tau) \rangle -f_{T}\|D_{n_\tau}^{-1}a\|_{\overline{G}_{\tau}}, \label{eq:confi_cov}
\end{align}
where $f_{T}=O(\log T+d\log\log T)$. For scheduled oracle queries, we employ the same time grid $\mathcal{T}= \{t_1, \dots, t_M\}$, as in Algorithm~\ref{alg:elim}, shifted by $\lceil d(d+1)/2 \rceil$ time steps to account for the warm-up phase,
that is,
$t_k \leftarrow t_k + \lceil d(d+1)/2 \rceil$ for all $k \in [M]$. Parallel execution of oracle is discussed in Appendix~\ref{app:parallel}.

\begin{theorem}\label{thm:elim-cov}
  With oracle adaptivity complexity of $\Theta(\log\log T)$  and oracle query complexity of $O(d^2\log\log T)$, Algorithm~\ref{alg:elim-cov} achieves  an asymptotic regret bound of
\[\mathcal{R}(T)=\tilde{O}\left(\sqrt{ d\max_{a\in \Acal}\sum_{i\in a} \sigma_{i}^2(a) T}\right). \] 
\end{theorem}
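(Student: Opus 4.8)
The plan is to merge the elimination-based schedule analysis of Theorem~\ref{thm:elim} with the covariance-adaptive confidence machinery underlying Theorem~\ref{thm:adaptive-cov}. I would first fix a good event $\mathcal{E}$ on which the empirical means $\hat{\mu}_\tau$ and the covariance estimates $\hat{\Sigma}_\tau$ concentrate, making the bounds in~\eqref{eq:confi_cov} valid, i.e. $r_\tau^{LCB}(a)\le\bar{r}(a)\le r_\tau^{UCB}(a)$ for every $a\in\Acal$ and every epoch $\tau$. The key ingredient, inherited from the covariance-adaptive analysis of \cite{perrault2020covariance,zhou2024towards} and reused in the proof of Theorem~\ref{thm:adaptive-cov}, is that $\overline{\Sigma}_\tau$ is an entrywise upper confidence bound on $\Sigma$ (via its correction terms), so that the Gram matrix $\overline{G}_\tau$ dominates the true noise Gram matrix and $f_T\|D_{n_\tau}^{-1}a\|_{\overline{G}_\tau}$ is a legitimate confidence radius. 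The initial warm-up over all pairs (the first $\lceil d(d+1)/2\rceil$ rounds) guarantees $n_{\tau,(i,j)}\ge 1$ so the estimators are well defined; its regret is $O(md^2)$, independent of $T$, and is absorbed into $\tilde{O}$.

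Next I would show that on $\mathcal{E}$ the optimum persists, $a^*\in\Acal_\tau$ for all $\tau$. For $i\in a^*$, since $a^*\in\Acal_{\tau-1}$ contains $i$ and $a_\tau^{(i)}$ maximizes the UCB over such actions, $r_\tau^{UCB}(a_\tau^{(i)})\ge r_\tau^{UCB}(a^*)\ge\bar{r}(a^*)\ge\max_{a\in\Acal_{\tau-1}}r_\tau^{LCB}(a)$, so no base arm of $a^*$ is dropped in Line~\ref{line:oracle-call3-elim-cov}; the same argument for pairs rules out elimination in Line~\ref{line:oracle-call4-elim-cov}, giving $a^*\in\Acal_\tau$. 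This yields an instantaneous regret bound for every played representative action: decomposing $\bar{r}(a^*)-\bar{r}(a_\tau^{(i)})$ as $[\bar{r}(a^*)-r_\tau^{LCB}(a^*)]+[r_\tau^{LCB}(a^*)-r_\tau^{UCB}(a_\tau^{(i)})]+[r_\tau^{UCB}(a_\tau^{(i)})-\bar{r}(a_\tau^{(i)})]$, the middle bracket is $\le 0$ by the survival/elimination condition and the outer brackets are each at most twice the confidence radius, so $\bar{r}(a^*)-\bar{r}(a_\tau^{(i)})\le 2f_T(\|D_{n_\tau}^{-1}a^*\|_{\overline{G}_\tau}+\|D_{n_\tau}^{-1}a_\tau^{(i)}\|_{\overline{G}_\tau})$, and analogously for the pair actions played on $\Tcal_\tau^{(2)}$.

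The core is then to sum these widths. I would expand $\|D_{n_\tau}^{-1}a\|_{\overline{G}_\tau}^2=\sum_{i,j\in a}\frac{\overline{\Sigma}_{\tau,(i,j)}n_{\tau,(i,j)}}{n_{\tau,i}n_{\tau,j}}+\sum_{i\in a}\frac{\overline{\Sigma}_{\tau,(i,i)}}{n_{\tau,i}}+\sum_{i\in a}\frac{1}{n_{\tau,i}^2}$, and, since every factor $a_ia_jn_{\tau,(i,j)}/(n_{\tau,i}n_{\tau,j})$ is nonnegative, drop the negative cross-covariances and use $\overline{\Sigma}_{\tau,(i,j)}\le(\Sigma_{i,j})_+ +\epsilon_{ij}$ together with $n_{\tau,(i,j)}\le\min\{n_{\tau,i},n_{\tau,j}\}$ to bound the leading term by $\sum_{i,j\in a}\frac{(\Sigma_{i,j})_+}{\max\{n_{\tau,i},n_{\tau,j}\}}$ up to the estimation error $\epsilon_{ij}$; this is exactly $\sum_{i\in a}\sigma_i^2(a)$ divided by the counts, and is further bounded uniformly by $\max_{a'\in\Acal}\sum_{i\in a'}\sigma_i^2(a')$. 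Because $\Tcal_\tau^{(1)}$ plays the $|\Ncal_\tau|\le d$ representative actions in round-robin, every surviving arm accrues $n_{\tau,i}\gtrsim t_\tau/|\Ncal_\tau|$, so each width satisfies $f_T\|D_{n_\tau}^{-1}a\|_{\overline{G}_\tau}\lesssim f_T\sqrt{d\,\max_{a'}\sum_{i\in a'}\sigma_i^2(a')/t_\tau}$, the $\sqrt{d}$ arising precisely from this $1/d$ round-robin dilution of the counts. Multiplying by $|\Tcal_\tau^{(1)}|\approx T_\tau\approx\eta\sqrt{t_\tau}$ and telescoping over the $M=\Theta(\log\log T)$ epochs of the schedule $t_\tau=\eta\sqrt{t_{\tau-1}}$ (with $\eta\approx\sqrt{T}$), exactly as in Theorem~\ref{thm:elim}, collapses the sum to the claimed $\tilde{O}(\sqrt{d\,\max_{a\in\Acal}\sum_{i\in a}\sigma_i^2(a)\,T})$. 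The oracle-complexity claims follow by counting $O(d^2)$ queries per epoch (at most $d$ single-arm and $d^2$ pair representative queries plus two eliminations) across $M=\Theta(\log\log T)$ epochs, each epoch being $O(1)$ sequential rounds.

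The main obstacle I anticipate is controlling the covariance-estimation error so that the leading width term is genuinely governed by the true $(\Sigma_{i,j})_+$ rather than by the over-estimation $\epsilon_{ij}$ in $\overline{\Sigma}_\tau$, while simultaneously keeping the regret of the pair-exploration phase $\Tcal_\tau^{(2)}$ lower order. This is exactly what the length $|\Tcal_\tau^{(2)}|=(d^2m^2T_\tau\log T)^{2/3}$ is engineered to balance: the covariance-exploration regret grows like $O(m\,|\Tcal_\tau^{(2)}|)$ while the penalty it removes from the main phase decays like $T_\tau$ times a factor $\propto 1/\sqrt{|\Tcal_\tau^{(2)}|}$ (the covariance error entering $\epsilon_{ij}$), so equating the two gives $|\Tcal_\tau^{(2)}|^{3/2}\propto T_\tau$, i.e. the $2/3$ exponent. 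Verifying this two-sided balance across all epochs, and the bookkeeping by which $n_{\tau,(i,j)}\le\min\{n_{\tau,i},n_{\tau,j}\}$ turns the quadratic covariance form into the $\sigma_i^2(a)$ summation, is the most delicate part of the argument.
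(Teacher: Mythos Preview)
Your proposal is correct and mirrors the paper's proof: establish the good event, show $a^*$ survives both elimination rounds by induction (this is the paper's Lemma~\ref{lem:opt-cov}), bound every played action's instantaneous regret by the confidence width $f_T\|D_{n_\tau}^{-1}a\|_{\overline{G}_\tau}$, expand that width using $n_{\tau,(i,j)}\le n_{\tau,j}$ to isolate $\sum_{i\in a}\sigma_i^2(a)/n_{\tau,i}$, insert the round-robin count lower bounds from the previous epoch, and telescope over the $M=\Theta(\log\log T)$ epochs of the schedule.

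One clarification on your last paragraph: the pair-exploration regret on $\Tcal_\tau^{(2)}$ is \emph{not} handled by a naive $O(m\,|\Tcal_\tau^{(2)}|)$ bound balanced against the penalty---that balance would leave a $T^{2/3}$ term, which dominates $\sqrt{T}$. Rather, as you already noted earlier (``analogously for the pair actions''), the representatives $a_\tau^{(i,j)}$ satisfy the same elimination inequality and hence the same width bound as the single-arm representatives, so the paper treats $t\in\Tcal_\tau^{(1)}\cup\Tcal_\tau^{(2)}$ uniformly. The $2/3$ exponent is chosen purely so that the covariance-error residual $\sum_{(i,j)\in a\times a}h_T/n_{\tau,(i,j)}^{3/2}$ in the width is asymptotically dominated by the leading $d\max_{a}\sum_{i\in a}\sigma_i^2(a)/T_{\tau-1}$ term, while still keeping $|\Tcal_\tau^{(2)}|=o(T_\tau)$ so that $|\Tcal_\tau^{(1)}|\sim T_\tau$.
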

\begin{proof}
The full version of the proof is provided in Appendix~\ref{app:regret-elim-cov}. 
\end{proof}

In the worst case, the scheduled query-based Algorithm~\ref{alg:elim-cov} achieves a regret bound of $\tilde{O}(m\sqrt{dT})$, which matches that of the  scheduled query-based Algorithm~\ref{alg:elim} but is larger than the bounds achieved by the adaptive oracle query framework of Algorithm~\ref{alg:adaptive} and Algorithm~\ref{alg:adaptive-cov} by a factor of $\sqrt{m}$.  
This gap arises from the inefficiency of using a fixed-time framework compared to an adaptive-time framework. However, the oracle complexities of Algorithm~\ref{alg:elim-cov} are significantly lower than the $O(d^2\log(mT))$ complexities of  Algorithm~\ref{alg:adaptive-cov}, achieving $\Theta(\log\log T)$ adaptivity and $O(d^2\log\log T)$ query complexity.

\section{Extension to General-Reward CMAB}
In this section, beyond a linear reward function we explored in the previous sections, we consider general reward functions $r(a,y)$ defined on $\Acal \times [0,1]^d \rightarrow [0,L]$ for $L>0$. We adopt the same setting with assumptions as in \cite{chen2016combinatorial}. Specifically, at each time $t$, the arms produce stochastic outcomes $(y_{t,i})_{i=1}^d\in [0,1]^d$ drawn i.i.d.\ over time from a distribution
$\Dcal$ with finite support\footnote{{The finite-support assumption simplifies the algorithms and analysis but is not essential. As noted in \citet{chen2016combinatorial}, the results can extend to Lipschitz-continuous reward functions by  discretization techniques that preserve the same regret bound (see Appendix~\ref{app:discrete}).}}
and the expected reward is  denote by $\bar{r}(a)=\EE_{y\sim \Dcal}[r(a,y)]$. 
 In this setting, we consider the following assumption.
\begin{assumption}[Monotone reward function]
    For any $y,y'\in [0,1]^d$ satisfying $y_i\le y_i'$ for all $i\in[d]$ and any $a\in \Acal$, we have $r(a,y)\le r(a,y')$.
\end{assumption}


The assumption for the monotone reward function is commonly observed in various combinatorial problems such as $K$-MAX \citep{simchowitz2016best}, $K$-SUM \citep{chen2013combinatorial}, and Expected Utility Maximization \citep{li2011maximizing}.

By adopting our adaptive and 
scheduled frameworks for rare oracle queries, we propose oracle-efficient algorithms for general reward CMAB; Algorithms~\ref{alg:adaptive-general} and \ref{alg:elim-general}, respectively. The details of the algorithms are provided in Appendix~\ref{app:general}. In what follows, we provide theorems for oracle complexities and regret of each algorithm. 
The proofs are provided in Appendices~\ref{app:regret-adaptive-general}, \ref{app:regret-elim-general}.


\begin{theorem}\label{thm:adaptive-general} With oracle adaptivity and  query complexities of  $O(d\log \log (Tm/d))$, respectively, Algorithm~\ref{alg:adaptive-general} achieves a regret bound of $\Rcal(T)=\tilde{O}(L\sqrt{dmT}).$ 
\end{theorem}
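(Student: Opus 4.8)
The plan is to combine the adaptive rare-oracle-query scheduling that underlies Theorem~\ref{thm:adaptive} with the distribution-based (stochastic-dominance) confidence machinery of \citet{chen2016combinatorial}, so that the only genuinely new work is transporting the mean-based optimism of Algorithm~\ref{alg:adaptive} to a CDF-based optimism valid for monotone rewards, while the epoch bookkeeping is inherited essentially verbatim. Concretely, I expect Algorithm~\ref{alg:adaptive-general} to maintain, for each base arm $i$, the empirical distribution of its observed values together with an optimistic distribution obtained by shifting the empirical CDF $\hat{F}_{t,i}$ downward by a Dvoretzky--Kiefer--Wolfowitz (DKW) confidence band of width $\sqrt{C\log(Tdm)/n_{t,i}}$; the UCB index $\bar{r}^{UCB}_t(a)$ is then the expected reward of $a$ computed under these optimistic per-arm distributions, and the oracle returns $a_t\in\argmax_{a\in\Acal}\bar{r}^{UCB}_t(a)$ only at epoch boundaries, otherwise replaying $a_{t-1}$.

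First I would establish the good event. By the DKW inequality with a union bound over the $d$ arms and over the distinct empirical-count values, with probability at least $1-1/T$ we have $\|\hat{F}_{t,i}-F_i\|_\infty \lesssim \sqrt{\log(Tdm)/n_{t,i}}$ simultaneously for all $i\in[d]$ and all $t\in[T]$. On this event the optimistic distribution stochastically dominates the true marginal of each arm, so the monotone-reward assumption yields optimism, $\bar{r}^{UCB}_t(a)\ge\bar{r}(a)$ for every $a\in\Acal$. I would then invoke the structural lemma of \citet{chen2016combinatorial}, which bounds the expected-reward difference between two product distributions by the sum of the per-coordinate $\ell_\infty$ CDF distances scaled by the reward range $L$; this gives the gap control $\bar{r}^{UCB}_t(a)-\bar{r}(a)\lesssim L\sum_{i\in a}\sqrt{\log(Tdm)/n_{t,i}}$, the exact general-reward analogue of the $\sqrt{C\log t/n_{t,i}}$ radius used in \eqref{eq:ucb-adaptive}.

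With optimism and gap control in hand, the regret argument mirrors Theorem~\ref{thm:adaptive}. Writing $t'$ for the last oracle-query time of the epoch containing $t$, the frozen action satisfies $\bar{r}^{UCB}_{t'}(a_t)=\max_{a}\bar{r}^{UCB}_{t'}(a)\ge\bar{r}^{UCB}_{t'}(a^*)\ge\bar{r}(a^*)$, so the per-round regret is bounded by $\bar{r}(a^*)-\bar{r}(a_t)\le\bar{r}^{UCB}_{t'}(a_t)-\bar{r}(a_t)\lesssim L\sum_{i\in a_t}\sqrt{\log(Tdm)/N_i}$, where $N_i$ is the count of arm $i$ frozen at the epoch start. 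Grouping rounds by arm $i$ and epoch $\tau_i$ and using the update threshold $|\Tcal_i(\tau_i)|\le 1+\sqrt{Tm\,|\Tcal_i(\tau_i-1)|/d}$ exactly as in the intuition preceding Algorithm~\ref{alg:adaptive}, each arm contributes $|\Tcal_i(\tau)|/\sqrt{|\Tcal_i(\tau-1)|}\lesssim \sqrt{Tm/d}$ per epoch; since the epoch lengths grow doubly exponentially there are only $O(\log\log(Tm/d))$ epochs per arm, and summing over the $d$ arms gives $\Rcal(T)\lesssim L\sqrt{\log(Tdm)}\cdot d\cdot\log\log(Tm/d)\cdot\sqrt{Tm/d}=\tilde{O}(L\sqrt{dmT})$. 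The same doubly-exponential growth caps the number of epoch boundaries, hence oracle queries, at $O(d\log\log(Tm/d))$, and since the framework issues queries serially this also bounds the adaptivity complexity.

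I expect the main obstacle to be the coupling of the two techniques rather than either technique in isolation: I must verify that the optimistic distributions frozen at the last query time $t'$ still yield valid optimism and a per-round gap controlled by the epoch-start counts $N_i$ throughout the epoch. This requires the good event to hold uniformly over all times (so optimism is valid at the freezing time $t'$) and the monotonicity to be used in its product-distribution form, so that stochastic dominance of each marginal transfers to dominance of the reward. The finite-support assumption, and its relaxation to Lipschitz rewards via discretization, enter precisely here to keep the CDF-distance lemma and the union bound clean.
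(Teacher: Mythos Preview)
Your proposal is correct and follows essentially the same approach as the paper: the paper's proof also combines the DKW-based optimistic CDFs of \citet{chen2016combinatorial} (their Lemma~3, giving both stochastic-dominance optimism and the $L\sum_{i\in a}z_i$ gap bound) with the frozen-count epoch accounting of Theorem~\ref{thm:adaptive}, introducing auxiliary variables $n_{t,i}'$ and $\underline{\Dcal}_t'$ that stay fixed within an epoch exactly as you anticipate. The only cosmetic difference is that the paper uses a time-varying confidence width $\sqrt{C\ln t/n_{t,i}}$ and a per-$t$ failure probability $O(d/t^2)$ rather than your uniform $\sqrt{C\log(Tdm)/n_{t,i}}$ bound, but both choices yield the same $\tilde{O}(L\sqrt{dmT})$ conclusion.
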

\begin{theorem}\label{thm:elim-general}  With oracle adaptivity complexity of $\Theta(\log \log T)$  and oracle query complexity of $O(d\log \log T)$, 
 Algorithm~\ref{alg:elim-general} achieves a regret bound of $\Rcal(T)=\tilde{O}(Lm\sqrt{dT}).$
\end{theorem}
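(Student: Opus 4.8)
The plan is to mirror the elimination-based analysis behind Theorem~\ref{thm:elim}, replacing the additive mean-based confidence intervals with distribution-based (stochastically dominant) confidence bounds appropriate for a monotone, $[0,L]$-bounded reward, in the spirit of the SDCB estimator of \citet{chen2016combinatorial}. Concretely, at each epoch $\tau$ I would form, for every base arm $i$, the empirical CDF $\hat F_{\tau,i}$ of its observed samples together with an $\ell_\infty$ confidence radius $\Lambda_{\tau,i}=O(\sqrt{\log(Td)/n_{\tau,i}})$ via the DKW inequality (the finite support makes a union bound over support points equally valid). Shifting $\hat F_{\tau,i}$ down and up by $\Lambda_{\tau,i}$ yields an optimistic law $\overline{\Dcal}_{\tau,i}$ and a pessimistic law $\underline{\Dcal}_{\tau,i}$, and $r^{UCB}_\tau(a):=\bar r(a,\overline{\Dcal}_\tau)$, $r^{LCB}_\tau(a):=\bar r(a,\underline{\Dcal}_\tau)$ are the confidence bounds fed to the oracle. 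The first step is to define the good event $\Ecal$ on which $\underline{\Dcal}_{\tau,i}\preceq\Dcal_i\preceq\overline{\Dcal}_{\tau,i}$ in the stochastic order, for all $\tau,i$ simultaneously; by a union bound over support points, arms, and epochs this holds with probability $1-O(1/T)$, so the complementary event contributes only a negligible $O(L)$ term to the regret.

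Next I would establish optimism and a confidence-width estimate. Monotonicity of $r$ together with a quantile coupling gives $r^{LCB}_\tau(a)\le\bar r(a)\le r^{UCB}_\tau(a)$ for all $a$ under $\Ecal$. The key quantitative lemma is the sensitivity bound
\[
r^{UCB}_\tau(a)-r^{LCB}_\tau(a)\le 2L\sum_{i\in a}\Lambda_{\tau,i},
\]
which I would prove by a hybrid argument over the coordinates of $a$: replacing one coordinate's law from $\underline{\Dcal}_{\tau,i}$ by $\overline{\Dcal}_{\tau,i}$ and integrating by parts shows the reward changes by at most $\|\underline F_{\tau,i}-\overline F_{\tau,i}\|_\infty$ times the total variation of the (monotone) one-coordinate reward map, which is at most $L$. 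This is precisely where the general-reward structure departs from the linear case; the factor $L$ and the sum over the $\le m$ active arms are what turn the $\tilde O(m\sqrt{dT})$ of Theorem~\ref{thm:elim} into $\tilde O(Lm\sqrt{dT})$.

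The remaining steps reproduce the scheduled-elimination argument. By induction on $\tau$ I would show $a^*\in\Acal_{\tau-1}$: for any $i\in a^*$, since $a_\tau^{(i)}$ maximizes $r^{UCB}_\tau$ over actions in $\Acal_{\tau-1}$ containing $i$, optimism and optimality give $r^{UCB}_\tau(a_\tau^{(i)})\ge r^{UCB}_\tau(a^*)\ge\bar r(a^*)\ge\bar r(a)\ge r^{LCB}_\tau(a)$ for every $a\in\Acal_{\tau-1}$, so $i$ survives the elimination test and $a^*\in\Acal_\tau$. For a played representative $a_\tau^{(i)}$ with surviving $i$, decomposing $\bar r(a^*)-\bar r(a_\tau^{(i)})\le r^{UCB}_\tau(a^*)-r^{LCB}_\tau(a_\tau^{(i)})$ through $r^{LCB}_\tau(a^*)$ and using survival ($r^{LCB}_\tau(a^*)\le r^{UCB}_\tau(a_\tau^{(i)})$) with the sensitivity bound yields $\bar r(a^*)-\bar r(a_\tau^{(i)})\le 2L\sum_{j\in a^*}\Lambda_{\tau,j}+2L\sum_{j\in a_\tau^{(i)}}\Lambda_{\tau,j}$. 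Since each surviving arm is explored $\gtrsim T_{\tau-1}/d$ times in the previous epoch, $\Lambda_{\tau,j}\lesssim\sqrt{d\log(Td)/T_{\tau-1}}$, giving per-round regret $\lesssim Lm\sqrt{d\log(Td)/T_{\tau-1}}$ in epoch $\tau$. Summing over the $T_\tau$ rounds of each of the $M=\Theta(\log\log T)$ epochs of the root grid ($t_\tau=\eta\sqrt{t_{\tau-1}}$ balances the per-epoch contributions and $\eta=O(\sqrt T)$) produces $\tilde O(Lm\sqrt{dT})$. The oracle counts follow immediately: $O(d)$ independent and parallelizable queries per epoch (the representatives and the elimination test) over $M$ epochs give query complexity $O(d\log\log T)$ and adaptivity $\Theta(\log\log T)$.

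The main obstacle is the sensitivity/coupling lemma and the companion optimism argument: unlike the linear case the reward is a non-additive functional of the coordinate laws, so the confidence width cannot be read off coordinate-wise and must instead be controlled through stochastic dominance and an integration-by-parts estimate, which is exactly where the $L$ factor enters. Getting the coupling direction right and the union bound over support points, arms, and epochs exactly right—so that optimism holds uniformly and the DKW radius is simultaneously valid—is the delicate part; once this lemma is in place, the elimination and grid-summation bookkeeping is essentially identical to the linear Theorem~\ref{thm:elim}.
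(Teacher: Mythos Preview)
Your proposal is correct and follows essentially the same route as the paper: a DKW-based good event giving uniform CDF confidence, stochastic-dominance optimism/pessimism for the monotone reward, the sensitivity bound $r^{UCB}_\tau(a)-r^{LCB}_\tau(a)\le 2L\sum_{i\in a}\Lambda_{\tau,i}$ (which the paper imports directly as Lemma~3 of \citet{chen2016combinatorial} rather than re-deriving via your hybrid/integration-by-parts sketch), the same induction that $a^*$ survives elimination, and the identical grid summation and oracle-count bookkeeping as in Theorem~\ref{thm:elim}. The only cosmetic discrepancy is that your bar/underbar convention for the optimistic and pessimistic laws is swapped relative to the paper's, but the directions and conclusions are consistent.
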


\paragraph{Comparison to Previous Work.} \citet{chen2016combinatorial} proposed an algorithm achieving a regret bound of $\tilde{O}(L\sqrt{mdT})$ with oracle adaptivity  and oracle query complexities of $\Theta(T)$. However, our algorithms achieve $\tilde{O}(L\sqrt{mdT})$ and $\tilde{O}(Lm\sqrt{dT})$, respectively, requiring significantly reduced oracle complexities of 
  of order $\log\log T$.



\section{Experiments}\label{sec:exp}
\begin{figure}[h]
\centering
\includegraphics[width=0.34\linewidth]{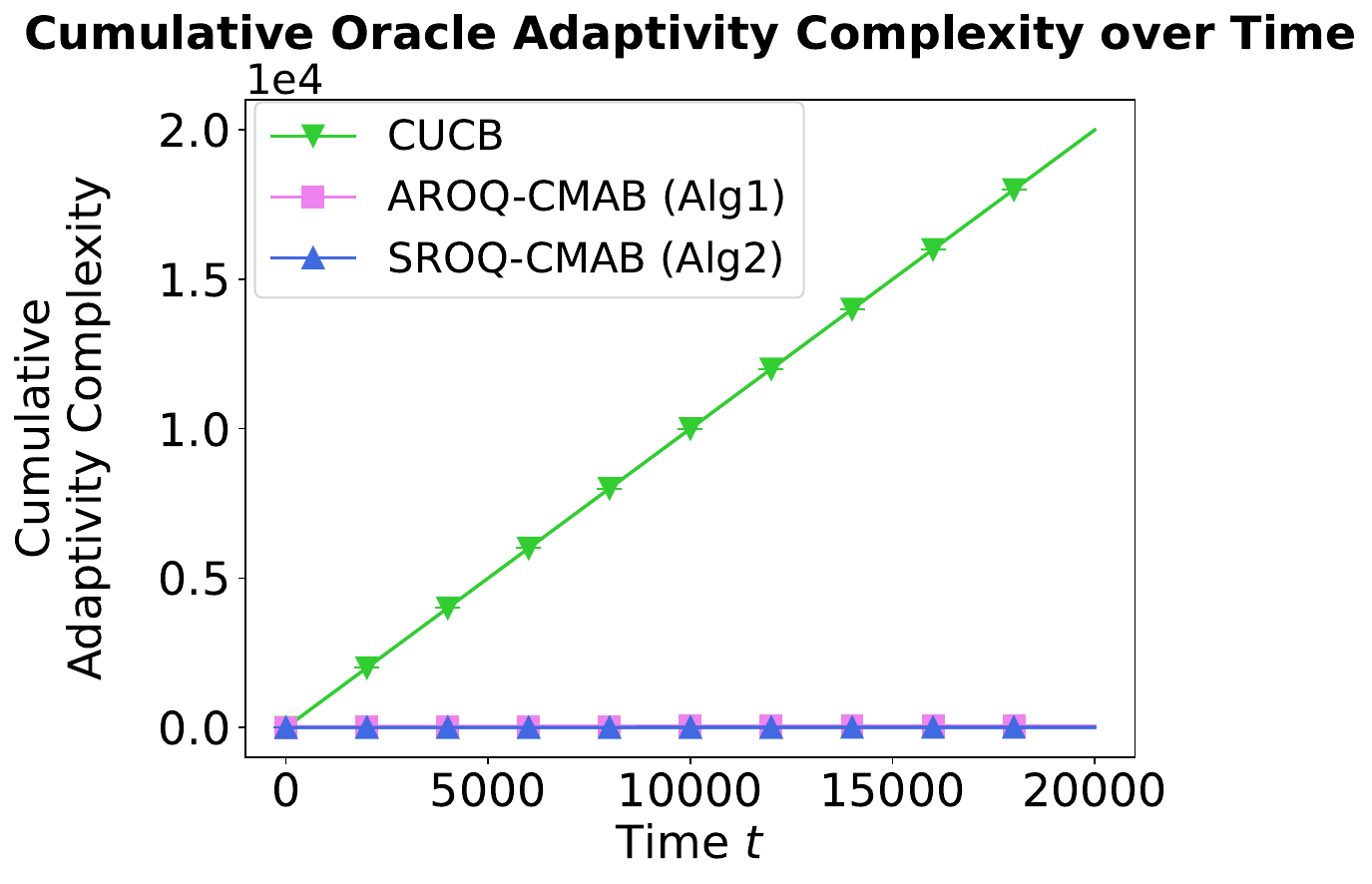}
\hfill
\includegraphics[width=0.33\linewidth]{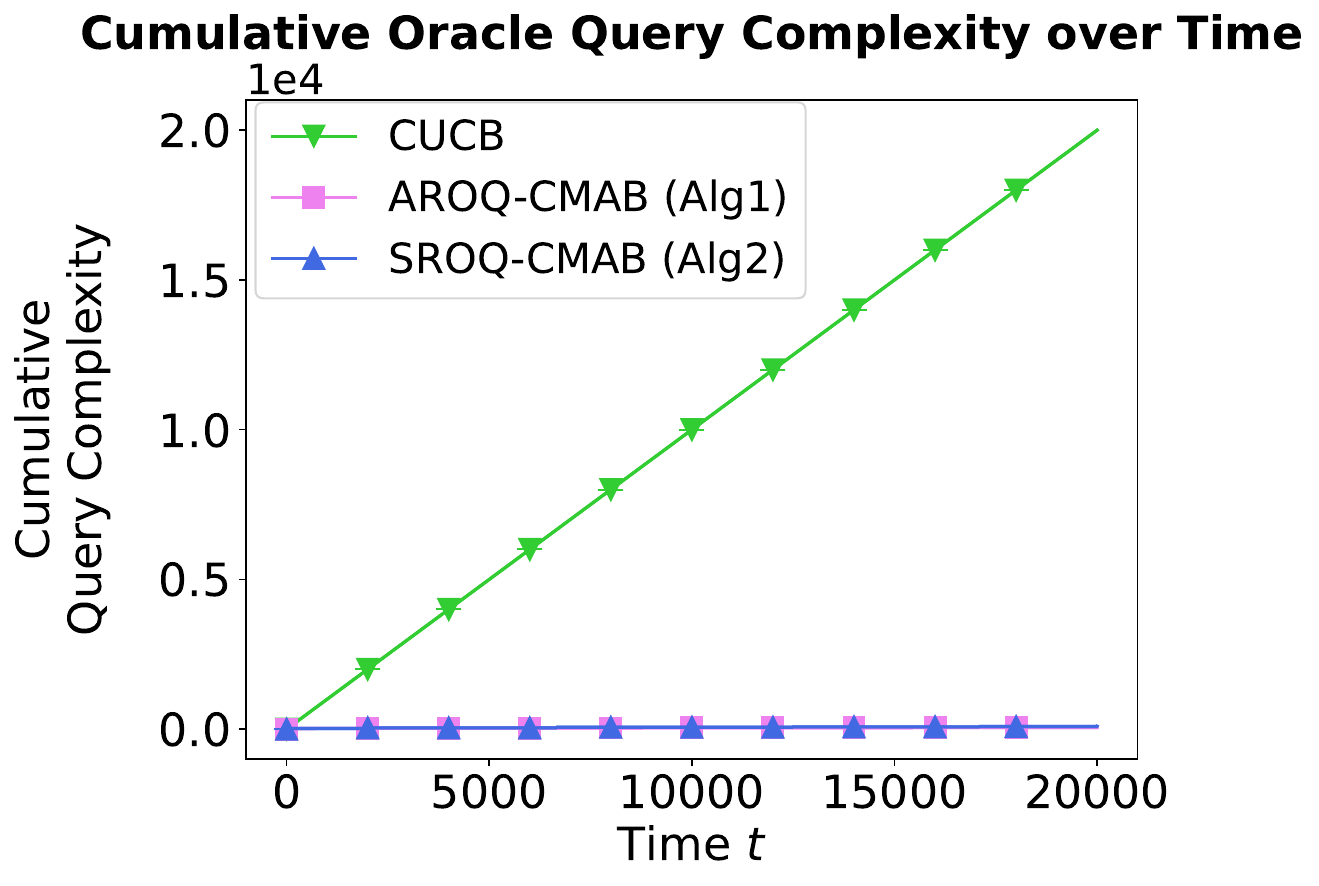}
\hfill
\includegraphics[width=0.31\linewidth]{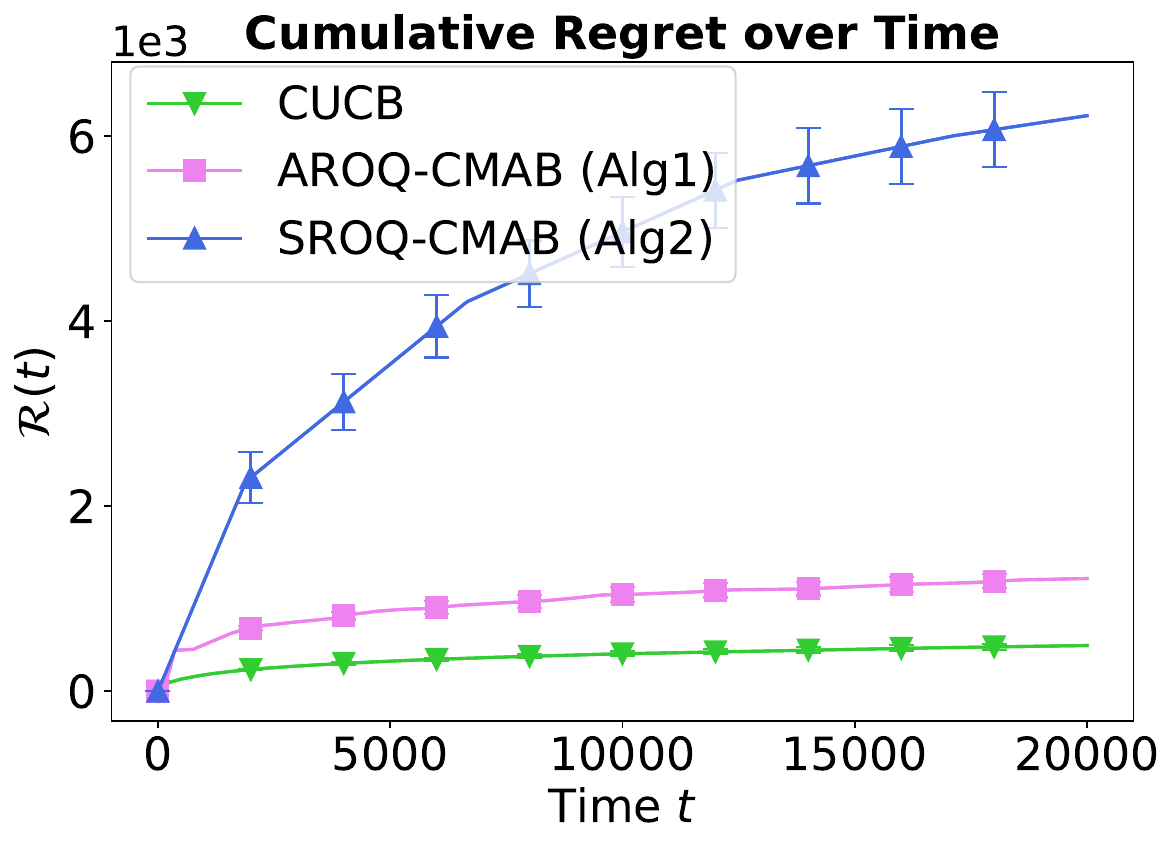}

\par
\makebox[0.3\linewidth]{\centering (a)}
\hfill
\makebox[0.3\linewidth]{\centering (b)}
\hfill
\makebox[0.3\linewidth]{\centering (c)}


\includegraphics[width=0.33\linewidth]{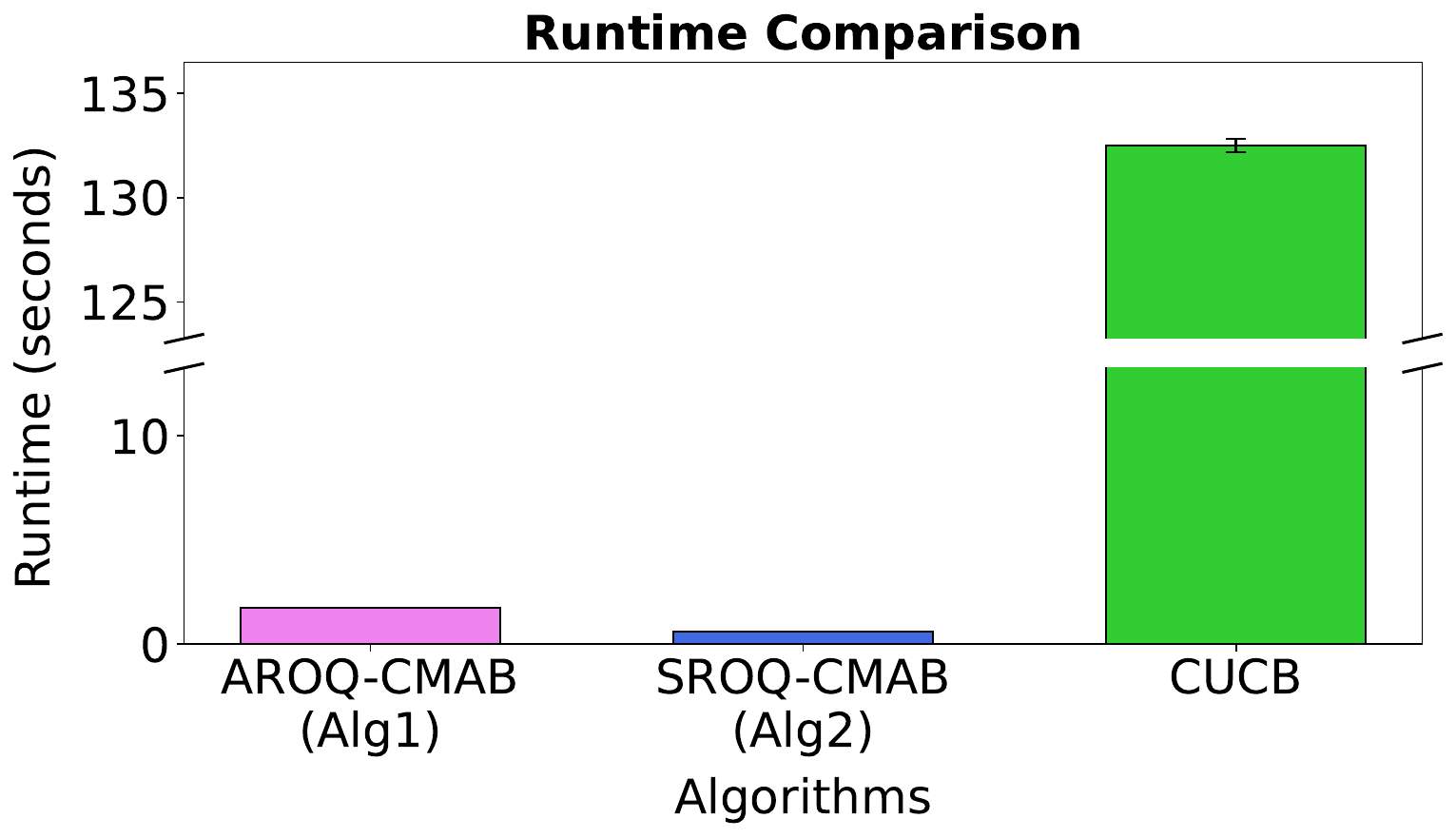}\hfill
\includegraphics[width=0.33\linewidth]{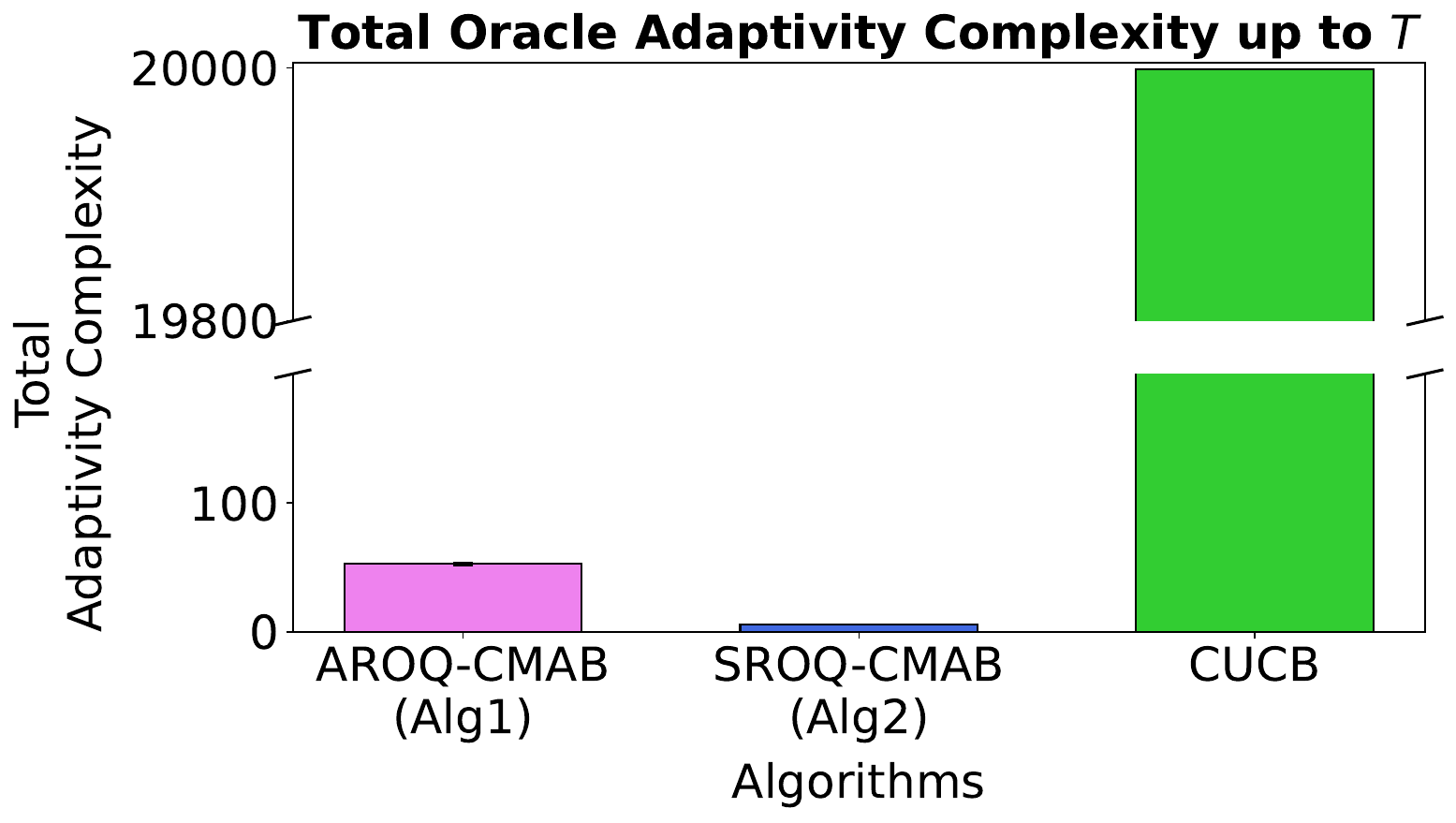}\hfill
\includegraphics[width=0.33\linewidth]{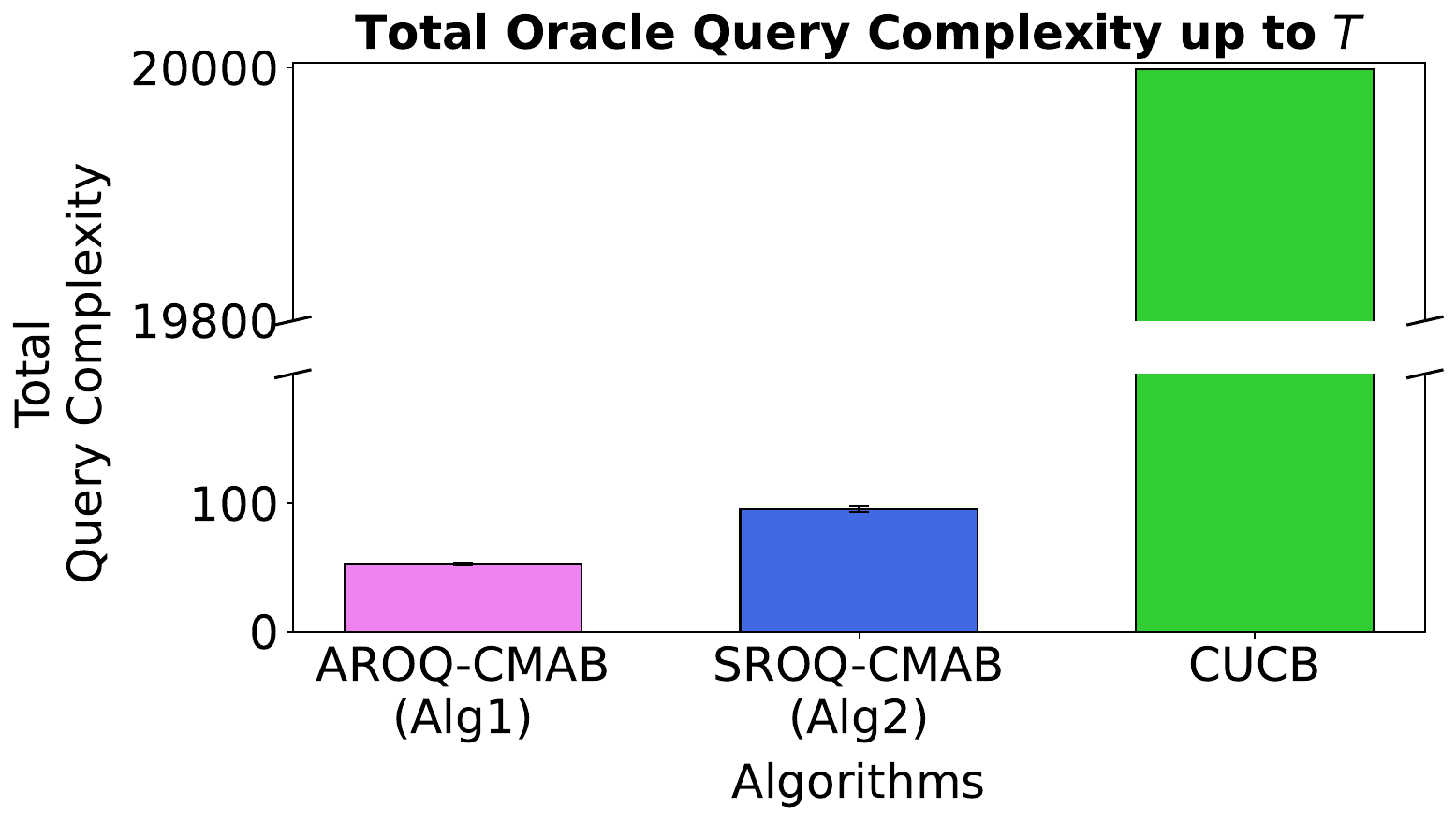}

\par
\makebox[0.3\linewidth]{\centering (d)}
\hfill
\makebox[0.3\linewidth]{\centering (e)}
\hfill
\makebox[0.3\linewidth]{\centering (f)}
\caption{Experimental results for linear rewards with $d=20$ and $m=3$.}
\label{fig:exp}
\end{figure}
We compare our algorithms to benchmarks in terms of oracle efficiency and regret using synthetic datasets\footnote{Source Code: \url{https://github.com/junghunkim7786/OracleEfficientCombinatorialBandits}}. We begin with the linear reward setting, where the mean vector is sampled from $\text{Unif}[0,1]$ with $d = 20$ and $m = 3$, and stochastic rewards are uniformly generated around these means at each round.
As shown in Figure~\ref{fig:exp} (a,b), our algorithms (\texttt{AROQ-CMAB}, \texttt{SROQ-CMAB}) achieve significantly lower oracle adaptivity and query complexities than \texttt{CUCB}~\citep{chen2013combinatorial}, consistent with Theorems~\ref{thm:adaptive} and~\ref{thm:elim}. Importantly, as shown in Figure~\ref{fig:exp} (d), our algorithms achieve faster runtime than the benchmark. In particular, \texttt{SROQ-CMAB} outperforms \texttt{AROQ-CMAB} in runtime, benefiting from a lower total adaptivity complexity up to $T$ (Figure~\ref{fig:exp} (e)), which enables more efficient parallel oracle execution (Remark~\ref{rm:elim}).
 Figure~\ref{fig:exp}(c) demonstrates that \texttt{AROQ-CMAB} incurs slightly higher regret and \texttt{SROQ-CMAB} incurs somewhat larger regret than \texttt{CUCB}, which is consistent with our theoretical predictions: the regret bounds involve an additional logarithmic factor for \texttt{AROQ-CMAB} and a $\sqrt{m}$ factor for \texttt{SROQ-CMAB}. Additional results for the covariance-adaptive variants and general reward functions are provided in Appendix~\ref{app:exp}.

\section{Conclusion}\label{sec:con}
In this work, we proposed oracle-efficient algorithms for semi-combinatorial bandits. We introduced two algorithmic frameworks for handling rare oracle queries—adaptive and scheduled—and demonstrated that our algorithms significantly improve oracle efficiency while maintaining tight regret guarantees for worst-case linear rewards, covariance-dependent linear rewards, and general (non-linear) reward functions.
\paragraph{Societal Impact.} The research is primarily theoretical and does not engage with human subjects, sensitive data, or domains with identifiable risks of negative societal impact.

\section*{Acknowledgements}
J.~Kim acknowledges the support of ANR through the PEPR IA FOUNDRY project (ANR-23-PEIA-0003)  and the Doom project (ANR-23-CE23-0002), as well as the ERC through the Ocean project (ERC-2022-SYG-OCEAN-101071601). M.~Oh was supported by the National Research Foundation of Korea~(NRF) grant funded by the Korea government~(MSIT) (No.  RS-2022-NR071853 and RS-2023-00222663), 
by the Global-LAMP Program of the NRF grant funded by the Ministry of Education (No. RS-2023-00301976), 
and by AI-Bio Research Grant through Seoul National University.

\bibliography{mybib}
\bibliographystyle{apalike}

\newpage
\appendix

\section{Appendix}

\subsection{Details for Parallel Execution of Oracle Queries in the Scheduled Framework}
\label{app:parallel}
This section provides further details on the parallel execution of oracle queries within the scheduled framework. Specifically, we describe how our proposed algorithms (Algorithms~\ref{alg:elim}, \ref{alg:elim-cov}, and \ref{alg:elim-general}) support \textit{parallel execution of oracle queries} to reduce adaptivity complexity.
\paragraph{Worst-Case Linear Rewards.} 
In Algorithm~\ref{alg:elim}, at the beginning of each epoch, the oracle queries in Line~\ref{line:elim-oracle1} can be executed in parallel. Moreover, although Line~\ref{line:elim-oracle2} depends on the result of Line~\ref{line:elim-oracle1}, the oracle queries in Lines~\ref{line:elim-oracle1} and~\ref{line:elim-oracle2} can still be executed in parallel, as they involve independent computations. Specifically, evaluating $\max_{a \in \Acal} r_\tau^{\mathrm{LCB}}(a)$ in Line~\ref{line:elim-oracle2} requires only a single independent oracle call that returns the maximizer $a^\dagger = \argmax_{a \in \Acal} r_\tau^{\mathrm{LCB}}(a)$, allowing us to directly retrieve the maximum value $r_\tau^{\mathrm{LCB}}(a^\dagger)$.
\paragraph{Covariance-Dependent Linear Rewards.}
In Algorithm~\ref{alg:elim-cov}, the oracle queries in Lines~\ref{line:oracle-call-elim-cov}, \ref{line:oracle-call2-elim-cov}, and \ref{line:oracle-call3-elim-cov} can be executed in parallel, following the same principle as in the worst-case linear rewards setting. In contrast, the query in Line~\ref{line:oracle-call4-elim-cov} must be performed \emph{sequentially afterward} since the oracle relies on the updated set $\Acal_\tau'$ resulting from the preceding computation.
\paragraph{General Rewards.}
In Algorithm~\ref{alg:elim-general},  following the same principle as in the worst-case linear rewards setting, the oracle queries in Lines~\ref{line:elim-oracle-call1-general} and~\ref{line:elim-oracle-call2-general} can be executed in parallel.



\subsection{Proof of Theorem~\ref{thm:adaptive}}\label{app:regret-adaptive}

Let $\tau_i(t)$ be the value of $\tau_i$ at time $t$ in the algorithm. Then $\tau_i(T)$ represents the number of updates, each update requiring oracle queries, up to $T$ from arm $i$. Then, for the bound of the total oracle queries up to $T$, by adopting the proof techniques in \cite{dong2020multinomial}, we have the following lemma.

\begin{lemma}[Oracle Queries Bound]
     \label{lem:oracle-call-bd} We have
\begin{align*}
         \EE\left[\sum_{i\in [d]}\tau_i(T)\right]=O(d\log\log(Tm/d)).
   \end{align*}
\end{lemma}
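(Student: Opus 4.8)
The plan is to bound, for each arm $i$, the number of epochs $\tau_i(T)$ by splitting them into a short \emph{growth} phase and a \emph{saturated} phase, and then to sum over arms using the deterministic selection budget $\sum_{i\in[d]} N_i \le mT$, where $N_i$ denotes the total number of rounds in which arm $i$ is played. Set $P := Tm/d$ and write $n_{i,\tau} := |\Tcal_i(\tau)|$ for the size of epoch $\tau$ of arm $i$. By the update rule, every completed epoch satisfies $n_{i,\tau} \ge 1 + \sqrt{P\, n_{i,\tau-1}} \ge \sqrt{P\, n_{i,\tau-1}}$, and since each round increments a given arm's count by at most one, the first completed epoch has $n_{i,1} \ge 1$. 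Unrolling $n_{i,\tau} \ge \sqrt{P\, n_{i,\tau-1}} = P^{1/2} n_{i,\tau-1}^{1/2}$ from $n_{i,1} \ge 1 = P^0$ and tracking the exponent $a_\tau = \tfrac12 + \tfrac12 a_{\tau-1}$ with $a_1 = 0$, I would verify by induction the doubly-exponential lower bound $n_{i,\tau} \ge P^{\,1 - 2^{1-\tau}}$.

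Next I would fix the saturation threshold at a constant fraction of $P$, say $P/e$, and define $\tau^\star$ as the first epoch index with $P^{1 - 2^{1-\tau}} \ge P/e$. Solving the resulting inequality $2^{\tau-1} \ge \log P$ shows $\tau^\star = O(\log\log P)$, so the growth phase contributes at most $O(\log\log(Tm/d))$ epochs per arm. For the saturated phase I would check that the threshold is self-sustaining: if $n_{i,\tau-1} \ge P/e$ then $n_{i,\tau} \ge \sqrt{P \cdot P/e} = P/\sqrt{e} \ge P/e$, so every epoch of index at least $\tau^\star$ has size at least $P/e$. Since all epochs of arm $i$ have total size $N_i$, the number of completed saturated epochs is at most $eN_i/P$, plus the single possibly-incomplete final epoch. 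Hence $\tau_i(T) \le O(\log\log(Tm/d)) + eN_i/P + 1$.

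Finally I would sum over $i \in [d]$. The growth-phase terms and additive constants contribute $O(d\log\log(Tm/d))$, while the saturated-phase terms give $\tfrac{e}{P}\sum_{i\in[d]} N_i$. The budget $\sum_{i\in[d]} N_i = \sum_{t\in[T]} \|a_t\|_0 \le mT$ holds deterministically because each action activates at most $m$ arms, so $\tfrac{e}{P}\sum_i N_i \le e\, mT/(Tm/d) = ed = O(d)$, which is absorbed into the main term. Taking expectations — the bound is in fact pathwise — yields $\EE[\sum_{i\in[d]} \tau_i(T)] = O(d\log\log(Tm/d))$.

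The main obstacle is the correct treatment of the saturated regime. Unlike the clean doubly-logarithmic bound one obtains when epochs grow all the way to size $\Theta(T)$, here epochs saturate at the smaller scale $P = Tm/d$, so growth alone does not bound their number; the count of saturated epochs must instead be controlled through the \emph{global} selection budget, and it is precisely the cancellation $mT/P = d$ that keeps this contribution from dominating. The remaining care is in choosing the saturation threshold so that it is simultaneously reached within $O(\log\log P)$ growth epochs and self-sustaining thereafter, which the choice $P/e$ achieves.
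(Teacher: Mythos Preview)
Your proof is correct and follows essentially the same approach as the paper's: both establish the doubly-exponential growth bound $|\Tcal_i(\tau)|\ge P^{1-2^{1-\tau}}$ with $P=Tm/d$ (the paper states this as a separate Lemma), use it to show only $O(\log\log P)$ epochs are needed before epoch sizes reach a constant fraction of $P$, and then invoke the deterministic budget $\sum_i N_i\le mT$ to cap the total number of large epochs by $O(d)$. The only cosmetic differences are that the paper takes the threshold $P/2$ (arising directly from plugging $\tau_0=\log_2\log_2 P$ into the growth bound) rather than your $P/e$, and therefore does not need your separate self-sustaining check---but both routes yield the same $O(d\log\log(Tm/d))$ conclusion.
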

\begin{proof} We first provide the  following lemma.

\begin{lemma}\label{lem:epoch-length-lbd}
            For $M\ge 0$ and a sequence $x_0,x_1,\dots$ such that $x_i\ge 1+\sqrt{Mx_{i-1}}$ for all $i\ge 1$, we have that $x_\tau\ge M^{1-2^{-\tau+1}}$ for all $\tau\ge 1$.
        \end{lemma}
        \begin{proof}
For $\tau = 1$, we have
\[
x_1 \ge 1 + \sqrt{M x_0} \ge 1 = M^0,
\]
which satisfies the desired inequality. We now proceed by induction. Suppose that for some $\tau \ge 1$, the inequality
$
x_\tau \ge M^{1 - 2^{-\tau + 1}}
$
holds. Then, using the recurrence, we have
\begin{align*}
x_{\tau+1} &\ge 1 + \sqrt{M x_\tau} \\
&\ge 1 + \sqrt{M \cdot M^{1 - 2^{-\tau + 1}}} \\
&= 1 + M^{(2 - 2^{-\tau + 1})/2} \\
&= 1 + M^{1 - 2^{-\tau}} \\
&\ge M^{1 - 2^{-\tau}}.
\end{align*}

Thus, by induction, we conclude that
\[
x_\tau \ge M^{1 - 2^{-\tau + 1}} \quad \text{for all } \tau \ge 1.
\]
        \end{proof}
        
Let $\tau_0=\log\log (Tm/d)$.  From Lemma~\ref{lem:epoch-length-lbd}, if $\tau\ge \tau_0+1$ and $\tau$ is not the last stage, for any $i\in[d]$, we have 
\begin{align*}
    n_{\tau,i}\ge (Tm/d)^{1-2^{-\tau+1}}\ge (Tm/d)^{1-2^{-\log\log(Tm/d)}}=Tm/2d. 
\end{align*}
Therefore, from the fact that $\sum_{t=1}^T\|a_t\|_0\le mT$, there are at most $2d+d$ (including the last stages for all $i\in[d]$) pairs of $(i,\tau)$ for $i\in[d]$ and $\tau\in [\tau_i(T)]$ satisfying $\tau\ge \tau_0+1$. This implies that for $i\in[d]$ s.t. $\tau_i(T)\ge \tau_0+1$, we have $\sum_{i\in[d]}\mathbbm{1}(\tau_i(T)\ge \tau_0+1)\tau_i(T)\le d\tau_0+ 3d$. Therefore, we have 
\begin{align*}
\mathbb{E}\left[\sum_{i\in[d]}\tau_i(T)\right]\le d\tau_0+\EE\left[\sum_{i\in[d]}\mathbbm{1}(\tau_i(T)\ge \tau_0+1)\tau_i(T)\right]\lesssim d\tau_0=d\log\log(Tm/d),
\end{align*}
which concludes the proof.
\end{proof}

For the proof of regret bound, we utilize the Azuma-Hoeffding inequality provided in the following lemma.

\begin{lemma}[Azuma-Hoeffding Inequality] For a martingale difference sequence $X_1,\dots, X_n$ with support of size $1$ for all $X_i$, for $\gamma>0$ we have
\[\PP\left(\sum_{i=1}^n X_i\ge \gamma n\right)\le 2\exp(-2\gamma^2 n).\]\label{lem:azu}
\end{lemma}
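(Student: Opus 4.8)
The plan is to prove this via the standard exponential (Chernoff) method combined with Hoeffding's lemma applied conditionally along the martingale filtration. Let $\Fcal_0 \subseteq \Fcal_1 \subseteq \cdots$ be the filtration with respect to which $(X_i)$ is a martingale difference sequence, so that $\EE[X_i \mid \Fcal_{i-1}] = 0$ and, by the ``support of size $1$'' hypothesis, each $X_i$ almost surely lies in an $\Fcal_{i-1}$-measurable interval of length $1$. First I would fix $\lambda > 0$ and apply Markov's inequality to $e^{\lambda \sum_i X_i}$:
\[
\PP\left(\sum_{i=1}^n X_i \ge \gamma n\right) \le e^{-\lambda \gamma n}\, \EE\!\left[e^{\lambda \sum_{i=1}^n X_i}\right].
\]

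Next, I would control the moment generating function by peeling off one term at a time using the tower property. Conditioning on $\Fcal_{n-1}$ and pulling out the $\Fcal_{n-1}$-measurable factor $e^{\lambda\sum_{i<n}X_i}$ gives
\[
\EE\!\left[e^{\lambda \sum_{i=1}^n X_i}\right] = \EE\!\left[e^{\lambda \sum_{i=1}^{n-1} X_i}\,\EE\!\left[e^{\lambda X_n}\mid \Fcal_{n-1}\right]\right].
\]
The inner conditional MGF is bounded by Hoeffding's lemma: for a random variable with conditional mean zero taking values in an interval of length $1$, one has $\EE[e^{\lambda X_n}\mid \Fcal_{n-1}] \le e^{\lambda^2/8}$ almost surely. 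Substituting and iterating down to $i=1$ yields $\EE[e^{\lambda\sum_i X_i}]\le e^{n\lambda^2/8}$. Then I would finish by optimizing the free parameter: combining the two displays gives the bound $e^{-\lambda\gamma n + n\lambda^2/8}$, and choosing $\lambda = 4\gamma$ (the minimizer of the exponent) produces $e^{-2\gamma^2 n}$, which is at most $2e^{-2\gamma^2 n}$ as claimed.

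The main obstacle is establishing Hoeffding's lemma itself, which is the only non-mechanical ingredient. I would prove it by convexity: for $X \in [a, a+1]$ with $\EE X = 0$, the convexity of $t \mapsto e^{\lambda t}$ gives the chord bound $e^{\lambda X} \le \big((a+1)-X\big)e^{\lambda a} + (X-a)e^{\lambda(a+1)}$, and taking (conditional) expectation removes the linear term since $\EE[X\mid\Fcal_{i-1}]=0$. Writing the resulting upper bound as $e^{\psi(\lambda)}$, I would verify $\psi(0)=\psi'(0)=0$ and $\psi''(\lambda)\le 1/4$ (the variance of the associated two-point distribution on an interval of length $1$ is at most $1/4$), so that Taylor's theorem gives $\psi(\lambda)\le \lambda^2/8$. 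The conditional form holds verbatim because the interval endpoints are $\Fcal_{i-1}$-measurable, which is exactly what the martingale-difference iteration in the second paragraph requires.
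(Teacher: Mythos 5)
Your proof is correct. Note, however, that the paper itself gives no proof of this lemma: it is stated as the classical Azuma--Hoeffding inequality and invoked as a known result, so there is no argument of the paper's to compare yours against. What you supply is the standard textbook derivation --- Chernoff's exponential-moment bound, the conditional Hoeffding lemma $\EE[e^{\lambda X_i}\mid \mathcal{F}_{i-1}]\le e^{\lambda^2/8}$ for differences confined to $\mathcal{F}_{i-1}$-measurable intervals of length $1$, and optimization at $\lambda=4\gamma$ --- and every step, including the convexity proof of Hoeffding's lemma with $\psi''\le 1/4$, is sound; in fact you obtain the sharper one-sided constant $e^{-2\gamma^2 n}$, which trivially implies the stated bound with its factor of $2$ (that factor is only needed for the two-sided version the paper actually uses when bounding $|\mu_i-\hat{\mu}_{s,i}|$).
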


From Lemma~\ref{lem:azu}, we can show that 
 the event $\Ecal_t=\{|\mu_i-\hat{\mu}_{s,i}|\le \sqrt{\frac{1.5\log s}{n_{s,i}}} \: \forall 1\le s\le t\: \forall i\in[d]\}$ holds with probability of at least $1-O(\frac{d}{t^2})$. 
 For ease of presentation, we  define auxiliary variables $n_{t,i}'$ and $\hat{\mu}_{t,i}'$ for each $i \in [d]$ and time step $t\in[T]$ for the analysis on the rare updated indexes as follows:
If the selected action at time 
$t$ is newly updated in the algorithm (i.e., \texttt{Update} = \texttt{True}), then
    \[
    n_{t,i}' = n_{t,i}, \qquad \hat{\mu}_{t,i}' = \hat{\mu}_{t,i}.
    \]
    Otherwise, if the previously selected action is maintained (i.e., \texttt{Update} = \texttt{False}), then
    \[
    n_{t,i}' = n_{t-1,i}', \qquad \hat{\mu}_{t,i}' = \hat{\mu}_{t-1,i}'.
    \]
Using these adjusted statistics, we define the UCB-based optimistic reward estimate for any action $a$  as
\[
\tilde{r}_t^{\mathrm{UCB}}(a) = \sum_{i \in a} \left( \hat{\mu}_{t,i}' + \sqrt{\frac{C \log t}{n_{t,i}'}} \right).
\]
Now we provide a bound for the regret as follows:
\begin{align*}
    \mathcal{R}(T)&=\EE\left[\sum_{t\in[T]}(\bar{r}(a^*)-\bar{r}(a_t))\mathbbm{1}(\Ecal_t) \right]+\EE\left[\sum_{t\in[T]}(\bar{r}(a^*)-\bar{r}(a_t))\mathbbm{1}(\Ecal_t^c) \right]
    \cr &\le\EE\left[\sum_{t\in[T]}(\bar{r}(a^*)-\bar{r}(a_t))\mathbbm{1}(\Ecal_t)\right]+O(dm)
    \cr &\lesssim\EE\left[\sum_{t\in[T]}(\tilde{r}_t^{UCB}(a^*)-\bar{r}(a_t))\mathbbm{1}(\Ecal_t)\right]
        \cr &\le\EE\left[\sum_{t\in[T]}(\tilde{r}_t^{UCB}(a_t)-\bar{r}(a_t))\mathbbm{1}(\Ecal_t)\right]
             \cr &\lesssim\EE\left[\sum_{t\in[T]}\sum_{i\in a_t}\sqrt{\frac{\log T}{n_{t,i}'}}\right],
    \end{align*}
    where the second and last inequalities are obtained from $\Ecal_t$.
    For bounding the last term, we have
    \begin{align*}
\EE\left[\sum_{t\in[T]}\sum_{i\in a_t}\sqrt{\frac{\log T}{n_{t,i}'}}\right]
&=\EE\left[\sum_{i\in[d]}\sum_{\tau\in[\tau_i(T)]}\sum_{t\in \Tcal_i(\tau)}\sqrt{\frac{\log T}{n_{t,i}'}}\right]\cr
&\le\EE\left[\sum_{i\in[d]}\sum_{\tau\in[\tau_i(T)]}\sum_{t\in \Tcal_i(\tau)}\sqrt{\frac{\log T}{|\Tcal_i(\tau-1)|}}\right]
     \cr &= \EE\left[\sum_{i\in[d]}\sum_{\tau\in[\tau_i(T)]}|\Tcal_i(\tau)|\sqrt{\frac{\log T}{|\Tcal_i(\tau-1)|}}\right]
      \cr &\lesssim \EE\left[\sum_{i\in[d]}\sum_{\tau\in[\tau_i(T)]}\sqrt{\frac{Tm \cdot |\Tcal_i(\tau-1)|}{d}}\sqrt{\frac{\log T}{|\Tcal_i(\tau-1)|}}\right]
      \cr &= \EE\left[\sum_{i\in[d]}\tau_i(T)\right]\sqrt{\frac{Tm \log T}{d}}\cr &\lesssim \log\log(Tm/d)\sqrt{dmT\log T},
\end{align*}
where the first inequality is obtained from $n_{t,i}'\ge |\Tcal_i(\tau-1)|$ for $t\in \Tcal_i(\tau)$, the second equality is obtained from the condition of updates in the algorithm, and the last inequality is obtained from Lemma~\ref{lem:oracle-call-bd}.

\paragraph{Oracle Complexity Bounds.} Based on Lemma~\ref{lem:oracle-call-bd}, we can show that the oracle query complexity is bounded by $O(d\log\log(Tm/d))$. Since the epochs for each base arm are updated separately, the adaptivity complexity is also bounded by $O(d\log\log(Tm/d))$.

\subsection{$\alpha$-Approximation Oracle} \label{app:approx-oracle}
In this section, we provide a detailed explanation for $\alpha$-approximation oracle. We focus on the adaptive rare oracle query framework, noting that similar results can be derived for the scheduled framework, which we omit to avoid redundancy.
Instead of obtaining the exact solution, the $\alpha$-approximation oracle, denoted by $\mathbb{O}^\alpha_t$, outputs action $a^\alpha_t\in \Acal$ satisfying \[r_t^{UCB}(a^\alpha_t)\ge  \alpha\max_{a\in \Acal} r_t^{UCB}(a)\] for $\alpha>0$. We investigate the $\alpha$-regret, which is defined as 
\[\Rcal^{\alpha}(T)=\EE\left[\sum_{t\in[T]}\alpha \bar{r}(a^*)-\bar{r}(a_t)\right].\]

\begin{algorithm*}[ht]
  \caption{ $\alpha$-approximated Adaptive Rare Oracle Queries for Combinatorial MAB (\texttt{$\alpha$-AROQ-CMAB})}\label{alg:alg2}

  \textbf{Initialize:} $\tau_i=1$ for all $i\in[N]$
  
 \For{$t=1,2...,T$}{
 $a_t\leftarrow a_{t-1}$

\For{ $i\in[d]$ s.t. $|\Tcal_i(\tau_i)|\ge 1+\sqrt{{Tm\cdot |\Tcal_i(\tau_i-1)|}/{d}}$}
{


$\tau_i\leftarrow \tau_i+1$, $\Tcal_i(\tau_i)\leftarrow \emptyset$

$Update\leftarrow True$

}
\If{$Update=True$}
{$a_{t}\leftarrow \mathbb{O}^\alpha_t$ 

$Update\leftarrow False$
}




Play $a_t$ and observe feedback $y_{t,i}$ for $i\in a_t$  

$\Tcal_i(\tau_i)\leftarrow \Tcal_i(\tau_i) \cup \{t\}$ for all $i\in a_t$ 
}

\end{algorithm*}

\begin{theorem}\label{thm:adaptive-alpha}
With oracle adaptivity and query complexities of  $O(d\log \log (Tm/d))$, respectively, 
Algorithm~\ref{alg:alg2} achieves a $\alpha$-regret bound of   \[\mathcal{R}^{\alpha}(T)=O\left(\sqrt{mdT\log T}\log\log(Tm/d)\right).\]  
\end{theorem}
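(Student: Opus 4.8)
The plan is to adapt the proof of Theorem~\ref{thm:adaptive} almost line by line, exploiting the fact that the only structural change in Algorithm~\ref{alg:alg2} is that the exact maximizer is replaced by the output $a_t^\alpha = \mathbb{O}^\alpha_t$ of the $\alpha$-approximation oracle, while the epoch-update schedule (the threshold condition $|\mathcal{T}_i(\tau_i)| \ge 1 + \sqrt{Tm\cdot|\mathcal{T}_i(\tau_i-1)|/d}$) is left untouched. First I would dispose of the oracle complexity: since the rule governing \emph{when} queries are issued is identical to that of Algorithm~\ref{alg:adaptive}, Lemma~\ref{lem:oracle-call-bd} applies verbatim, giving $\mathbb{E}[\sum_{i\in[d]}\tau_i(T)] = O(d\log\log(Tm/d))$ for the query complexity; because the per-arm epochs advance independently and their queries can be parallelized, the adaptivity complexity is of the same order.

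For the regret I would reuse the same high-probability event $\mathcal{E}_t$, the same frozen auxiliary statistics $n_{t,i}', \hat{\mu}_{t,i}'$, and the frozen index $\tilde{r}_t^{\mathrm{UCB}}$. Splitting $\mathcal{R}^\alpha(T)$ over $\mathcal{E}_t$ and $\mathcal{E}_t^c$, the complement contributes $O(dm)$ exactly as before, using $\alpha\bar{r}(a^*) - \bar{r}(a_t) \le m$ (for $\alpha\le 1$) and $\PP(\mathcal{E}_t^c) = O(d/t^2)$. On $\mathcal{E}_t$ the decisive chain becomes
\begin{align*}
\alpha\bar{r}(a^*) - \bar{r}(a_t) \;\le\; \alpha\tilde{r}_t^{\mathrm{UCB}}(a^*) - \bar{r}(a_t) \;\le\; \tilde{r}_t^{\mathrm{UCB}}(a_t) - \bar{r}(a_t),
\end{align*}
where the first inequality is the frozen UCB validity $\bar{r}(a^*) \le \tilde{r}_t^{\mathrm{UCB}}(a^*)$ (which holds because $n_{t,i}' \le n_{t,i}$ only \emph{widens} the confidence radius), and the second is the crucial substitution: at the last update time $t'$ preceding $t$ the oracle guarantees $r_{t'}^{\mathrm{UCB}}(a_{t'}) \ge \alpha\max_{a} r_{t'}^{\mathrm{UCB}}(a) \ge \alpha\, r_{t'}^{\mathrm{UCB}}(a^*)$, and since $\tilde{r}_t^{\mathrm{UCB}}$ reuses exactly the statistics frozen at $t'$, this transfers to $\tilde{r}_t^{\mathrm{UCB}}(a_t) \ge \alpha\tilde{r}_t^{\mathrm{UCB}}(a^*)$.

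The key observation is that the factor $\alpha$ in the definition of $\alpha$-regret cancels \emph{exactly} against the factor $\alpha$ in the oracle guarantee, so the residual $\tilde{r}_t^{\mathrm{UCB}}(a_t) - \bar{r}(a_t)$ is identical to the quantity analyzed for the exact oracle. From here the remainder of the Theorem~\ref{thm:adaptive} argument carries over unchanged: bounding $\tilde{r}_t^{\mathrm{UCB}}(a_t) - \bar{r}(a_t) \lesssim \sum_{i\in a_t}\sqrt{\log T/n_{t,i}'}$ on $\mathcal{E}_t$, regrouping by arm and epoch, using $n_{t,i}' \ge |\mathcal{T}_i(\tau-1)|$ together with the threshold bound $|\mathcal{T}_i(\tau)| \lesssim \sqrt{Tm|\mathcal{T}_i(\tau-1)|/d}$, and finally invoking Lemma~\ref{lem:oracle-call-bd} yields $\mathcal{R}^\alpha(T) = O(\sqrt{mdT\log T}\log\log(Tm/d))$. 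I do not anticipate a genuine obstacle, since the proof is essentially a clean transcription; the only point requiring care is the substitution step in the second paragraph, namely verifying that the oracle's multiplicative guarantee—stated for the live index $r_{t'}^{\mathrm{UCB}}$ only at update times—legitimately propagates to the frozen index $\tilde{r}_t^{\mathrm{UCB}}$ used throughout each epoch, and that the $\alpha$ cancellation is exact rather than merely up to constants.
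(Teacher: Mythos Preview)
Your proposal is correct and follows essentially the same approach as the paper's proof: split on $\mathcal{E}_t$, use $\alpha\bar r(a^*)\le \alpha\tilde r_t^{\mathrm{UCB}}(a^*)\le \tilde r_t^{\mathrm{UCB}}(a_t)$ via the $\alpha$-oracle guarantee at the last update time, and then invoke the remainder of the Theorem~\ref{thm:adaptive} argument unchanged. Your explicit verification that the oracle's multiplicative guarantee transfers from the live index $r_{t'}^{\mathrm{UCB}}$ to the frozen index $\tilde r_t^{\mathrm{UCB}}$ is a detail the paper leaves implicit, so you are slightly more careful here than the original.
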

\begin{proof}
Here we provide the part that is different from the proof of Theorem~\ref{thm:adaptive}. We provide a bound for the regret as follows:
\begin{align*}
    \Rcal^\alpha(T)&=\EE\left[\sum_{t\in[T]}(\alpha \bar{r}(a^*)-\bar{r}(a_t))\mathbbm{1}(\Ecal_t) \right]+\EE\left[\sum_{t\in[T]}(\alpha \bar{r}(a^*)-\bar{r}(a_t))\mathbbm{1}(\Ecal_t^c) \right]
    \cr &\le\EE\left[\sum_{t\in[T]}(\alpha \bar{r}(a^*)-\bar{r}(a_t))\mathbbm{1}(\Ecal_t)\right]+O(md)
    \cr &\lesssim\EE\left[\sum_{t\in[T]}(\alpha \tilde{r}_t^{UCB}(a^*)-\bar{r}(a_t))\mathbbm{1}(\Ecal_t)\right]
        \cr &\le\EE\left[\sum_{t\in[T]}(\tilde{r}_t^{UCB}(a_t)-\bar{r}(a_t))\mathbbm{1}(\Ecal_t)\right]
             \cr &\lesssim\EE\left[\sum_{t\in[T]}\sum_{i\in a_t}\sqrt{\frac{\log T}{n_{t,i}'}}\right],
    \end{align*}
    where the second last inequality is obtained from $\mathbb{O}^\alpha_t$. The other parts of the proof are the same as those of Theorem~\ref{thm:adaptive}.
\end{proof}

\subsection{Proof of Theorem~\ref{thm:elim}}\label{app:regret-elim}

From Lemma~\ref{lem:azu}, we can show that 
 the event $\Ecal=\{|\mu_i-\hat{\mu}_i(\tau)|\le \sqrt{\frac{1.5\log T}{n_{\tau,i}}} \: \forall \tau\in[M], \forall i\in[d]\}$ holds with probability of at least $1-\frac{d}{T^2}$.  Then by adopting the proof technique in \cite{chen2024robust}, under the event $\Ecal$, we show that $A_\tau$, activated arm set for $\tau$-th epoch, always contains optimal arm $a^*$ in the following lemma.

\begin{lemma}
    Under $\Ecal$, we can show that for all $\tau\in[M]$,
    $a^*\in \mathcal{A}_{\tau}.$\label{lem:opt}
\end{lemma}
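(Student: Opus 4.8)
The plan is to prove Lemma~\ref{lem:opt} by induction on the epoch index $\tau$, showing that conditioned on the good event $\Ecal$, the optimal action $a^*$ survives every elimination step. The base case is immediate since $\Acal_0=\Acal$ by convention, so $a^*\in\Acal_0$. For the inductive step, I would assume $a^*\in\Acal_{\tau-1}$ and argue that no base arm $i\in a^*$ is eliminated when forming $\Ncal_\tau$; since $a^*$ activates only arms in $\Ncal_\tau$, the definition of $\Acal_\tau$ then guarantees $a^*\in\Acal_\tau$. The crux is therefore to verify the per-arm survival condition
\[
r^{UCB}_{\tau}(a_\tau^{(i)})\ge \max_{a\in\Acal_{\tau-1}} r^{LCB}_\tau(a)\qquad\text{for every } i\in a^*,
\]
which is exactly the elimination criterion in Line~3 of Algorithm~\ref{alg:elim}.

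To establish this inequality, I would exploit the optimality of the representative action $a_\tau^{(i)}=\argmax_{a\in\Acal_{\tau-1}:\,i\in a} r^{UCB}_\tau(a)$. Because $a^*\in\Acal_{\tau-1}$ by the inductive hypothesis and $i\in a^*$, the action $a^*$ is feasible in the maximization defining $a_\tau^{(i)}$, so $r^{UCB}_\tau(a_\tau^{(i)})\ge r^{UCB}_\tau(a^*)$. Next, under $\Ecal$ the confidence bounds are valid, meaning $r^{LCB}_\tau(a)\le \bar r(a)\le r^{UCB}_\tau(a)$ coordinatewise for every action $a$; in particular $r^{UCB}_\tau(a^*)\ge \bar r(a^*)$. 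Finally, for any $a\in\Acal_{\tau-1}$ we have $r^{LCB}_\tau(a)\le \bar r(a)\le \bar r(a^*)$ by optimality of $a^*$, so $\max_{a\in\Acal_{\tau-1}} r^{LCB}_\tau(a)\le \bar r(a^*)$. Chaining these gives
\[
r^{UCB}_\tau(a_\tau^{(i)})\ge r^{UCB}_\tau(a^*)\ge \bar r(a^*)\ge \max_{a\in\Acal_{\tau-1}} r^{LCB}_\tau(a),
\]
which is precisely the required survival condition, valid for each $i\in a^*$ simultaneously.

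Having shown that every $i\in a^*$ remains in $\Ncal_\tau$, I would conclude the inductive step by noting that $a^*$ satisfies $a^*_i=0$ for all $i\notin\Ncal_\tau$ (since its support lies inside $\Ncal_\tau$), and hence $a^*\in\Acal_\tau$ by the definition $\Acal_\tau=\{a\in\Acal_{\tau-1}\mid a_i=0\text{ for all } i\in[d]\setminus\Ncal_\tau\}$. By induction the claim holds for all $\tau\in[M]$.

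The main obstacle I anticipate is handling the confidence-bound validity cleanly across the combinatorial structure: the events in $\Ecal$ are stated per base arm and per epoch, so I must ensure that the linearity $r^{UCB}_\tau(a)=\sum_{i\in a}(\hat\mu_{\tau,i}+\cdots)$ lets me lift the per-coordinate guarantees $|\mu_i-\hat\mu_{\tau,i}|\le\sqrt{1.5\log T/n_{\tau,i}}$ into the action-level sandwich $r^{LCB}_\tau(a)\le\bar r(a)\le r^{UCB}_\tau(a)$ with the constant $C$ in \eqref{eq:ucb_fixed} chosen at least $1.5$. A secondary subtlety is confirming that $a^*$ actually stays feasible in the shrinking oracle-search domain $\Acal_{\tau-1}$ at every step, which is precisely what the induction secures, so the argument is self-consistent provided the base case and the monotone nesting $\Acal_\tau\subseteq\Acal_{\tau-1}$ are invoked correctly.
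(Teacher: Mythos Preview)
Your proposal is correct and follows essentially the same inductive argument as the paper: assume $a^*\in\Acal_{\tau-1}$, use the validity of the confidence bounds under $\Ecal$ together with the definition of $a_\tau^{(i)}$ to chain $r^{UCB}_\tau(a_\tau^{(i)})\ge r^{UCB}_\tau(a^*)\ge \bar r(a^*)\ge r^{LCB}_\tau(a)$ for each $i\in a^*$, and conclude that no optimal base arm is eliminated. The only differences are cosmetic (you index the step from $\tau-1$ to $\tau$ rather than $\tau$ to $\tau+1$, and you spell out the final set-membership check in $\Acal_\tau$ a bit more explicitly).
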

\begin{proof}
    This can be shown by induction. Suppose $a^*\in\mathcal{A}_{\tau}$ for fixed $\tau\in[M]$. Under $\Ecal$, we have $\bar{r}(a)\le r_{\tau+1}^{UCB}(a)$ and $\bar{r}(a)\ge r_{\tau+1}^{LCB}(a)$ for any $a\in\Acal_{\tau}$. Then for any fixed $i\in a^*$, for any $a\in\Acal_{\tau}$, we can show that 
    \begin{align*}
        r_{\tau+1}^{UCB}(a^{(i)}_{\tau+1})\ge r_{\tau+1}^{UCB}(a^*)\ge \bar{r}(a^*)\ge r^{LCB}(a),
    \end{align*}
    where the first inequality is obtained from the definition of $a_{\tau+1}^{(i)}$ with $a^*\in \Acal_\tau$. This implies that 
\begin{equation*}
r^{UCB}_{\tau+1}(a_{\tau+1}^{(i)})\ge\max_{a\in \mathcal{A}_{\tau}}r^{LCB}_{\tau+1}(a),
\end{equation*}
 which implies that $i\in a^*$ is not eliminated from the elimination condition at the $\tau+1$-th epoch. This holds for all $i\in a^*$ so that $a^*\in \mathcal{A}_{\tau+1}$. With $a^*\in \mathcal{A}_0=\Acal$,  we can conclude the induction. 
\end{proof}
Under $\Ecal$, we have
\begin{align*}
    {R}(a^*)-\bar{r}(a_{\tau}^{(i)})&\le r^{LCB}_{\tau}(a^*)+2\sum_{j\in a^*}\sqrt{\frac{1.5 \log t_\tau}{n_{\tau,j}}}-r^{UCB}_{\tau}(a_{\tau}^{(i)})+2\sum_{j\in a_\tau^{(i)}}\sqrt{\frac{1.5 \log t_\tau}{n_{\tau,j}}}
    \cr  &\lesssim \sum_{j\in a^*}\sqrt{\frac{ \log t_\tau}{n_{\tau,j}}}+\sum_{j\in a_\tau^{(i)}}\sqrt{\frac{\log t_\tau}{n_{\tau,j}}}
    \cr &\lesssim \sum_{j\in a^*}\sqrt{\frac{d\log t_\tau}{|\Tcal_{\tau-1}|}}+\sum_{j\in a_\tau^{(i)}}\sqrt{\frac{d\log t_\tau}{|\Tcal_{\tau-1}|}}
    \cr &\lesssim m\sqrt{\frac{d\log t_\tau}{|\Tcal_{\tau-1}|}},
\end{align*}
where the first inequality is obtained from $\Ecal$, the second inequality comes from the fact that $a^*\in\mathcal{A}_{\tau-1}$ from Lemma~\ref{lem:opt}, and $\max_{a\in \mathcal{A}_{\tau-1}}r^{LCB}_{\tau}(a)\le r^{UCB}_{\tau}(a_{\tau}^{(i)})$ from the algorithm, and the third inequality is obtained from $n_{\tau,i}\ge \sum_{t\in \Tcal_{\tau-1}}\mathbbm{1}(i\in a_t)\gtrsim |\Tcal_{\tau-1}|/d$ from the exploration in the algorithm at the $\tau-1$-th epoch.

Finally, we can show that
\begin{align*}
    \mathcal{R}(T)&\le\mathbb{E}\left[\sum_{\tau=1}^M \sum_{t\in \Tcal_\tau}\bar{r}(a^*)-\bar{r}(a_t)\mid\Ecal\right]\PP(\Ecal) + \mathbb{E}\left[\sum_{\tau=1}^M \sum_{t\in \Tcal_\tau}\bar{r}(a^*)-\bar{r}(a_t)\mid\Ecal^c\right]\PP(\Ecal^c) \cr&\lesssim\mathbb{E}\left[\sum_{\tau=1}^M \sum_{t\in \Tcal_\tau}\bar{r}(a^*)-\bar{r}(a_t)\mid\Ecal\right] + \frac{md}{T^2}\cr &\lesssim  \sum_{\tau=1}^{M} |\Tcal_\tau| m\sqrt{\frac{d \log T}{|\Tcal_{\tau-1}|}}
    \cr &\le \sum_{\tau=1}^M \eta m\sqrt{d|\Tcal_{\tau-1}|\log T}
    \cr &=m \eta\sqrt{dMT\log T}
    \cr &\lesssim m\sqrt{dT\log (T) \log\log(T)},
\end{align*}
which concludes the proof for the regret bound.

\paragraph{Oracle Complexity Bounds.} Based on the oracle calls in Lines~\ref{line:elim-oracle1} and \ref{line:elim-oracle2} of Algorithm~\ref{alg:elim}, we observe that each epoch involves at most $d$ independent oracle queries (Line~\ref{line:elim-oracle1}) and one sequential oracle query (Line~\ref{line:elim-oracle2}). Since the total number of epochs is $M = \Theta(\log\log T)$, the overall adaptivity complexity is bounded by $\Theta(\log\log T)$, and the query complexity is bounded by $O(d \log\log T)$.

\subsection{Proof of Theorem~\ref{thm:adaptive-cov}}\label{app:regret-adaptive-cov}

Let $\tau_{i,j}(t)$ be the value of $\tau_{i,j}$ at time $t$ in the algorithm. Then $\tau_{i,j}(T)$ represents the number of updates, each update requiring Oracle queries, up to $T$ from a pair of arms $i,j$.
For the bound of the Oracle queries up to $T$ for each arm, we have the following lemma.
\begin{lemma}[Oracle Queries Bound for Each Arm] \label{lem:oracle-call-bd-cov} For $(i,j)\in[d]\times [d]$, we always have
    \[\tau_{i,j}(T)=O(\log(Tm)).\]
\end{lemma}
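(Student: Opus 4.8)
The plan is to exploit the multiplicative nature of the update rule for the pair $(i,j)$: an epoch closes only once its length has at least doubled relative to the previous one, so the lengths of completed epochs grow geometrically, and there can be only logarithmically many such epochs before the total co-selection budget for the pair is exhausted. This is the direct analogue of Lemma~\ref{lem:oracle-call-bd}, but the doubling threshold $|\Tcal_{i,j}(\tau_{i,j})|\ge 1+2|\Tcal_{i,j}(\tau_{i,j}-1)|$ yields $\log$ rather than the $\log\log$ obtained there from the milder $\sqrt{\cdot}$-type threshold.

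First I would establish, by induction on $\tau$, that every \emph{completed} epoch for the pair $(i,j)$ (one whose closing triggered an increment of $\tau_{i,j}$) satisfies $|\Tcal_{i,j}(\tau)|\ge 2^{\tau-1}$. The base case holds because a completed epoch is nonempty, so $|\Tcal_{i,j}(1)|\ge 1=2^0$, and the inductive step is immediate from the update condition: $|\Tcal_{i,j}(\tau)|\ge 1+2|\Tcal_{i,j}(\tau-1)|\ge 2\cdot 2^{\tau-2}=2^{\tau-1}$. (If anything, the warm-up phase makes $|\Tcal_{i,j}(0)|$ already of order $\log^3 T$, which only strengthens the base case.)

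Next, setting $\tau_0=\log(Tm)$, I would argue that no completed epoch can carry an index exceeding $\tau_0$. Indeed, if $\tau\ge \tau_0+1$ were completed, the induction gives $|\Tcal_{i,j}(\tau)|\ge 2^{\tau_0}=Tm\ge T$; but a single epoch cannot contain more rounds than the total number $n_{T,(i,j)}=\sum_{t=1}^{T}\mathbbm{1}(i\in a_t)\mathbbm{1}(j\in a_t)\le T$ of rounds in which $i$ and $j$ are co-selected, a contradiction. Hence only the final, possibly still-open, epoch may have an index above $\tau_0$, giving $\tau_{i,j}(T)\le \tau_0+2=O(\log(Tm))$.

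The main subtlety to handle carefully is the distinction between completed epochs (which obey the doubling bound) and the current open epoch at time $T$ (which need not), together with the warm-up phase during which $\tau_{i,j}$ is frozen at its initial value. Each of these accounts for only an additive $O(1)$ contribution and therefore does not affect the logarithmic order. The only quantity that does real work is the budget inequality $\sum_\tau |\Tcal_{i,j}(\tau)|=n_{T,(i,j)}\le T$, which caps the geometric growth.
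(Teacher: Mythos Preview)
Your proposal is correct and follows essentially the same approach as the paper's proof: both establish the geometric growth $|\Tcal_{i,j}(\tau)|\ge 2^{\tau-1}$ for completed epochs via the doubling threshold, then set $\tau_0=\log(Tm)$ and argue that at most one (the last, open) epoch can have index beyond $\tau_0$, yielding $\tau_{i,j}(T)\le \tau_0+2$. Your budget inequality $n_{T,(i,j)}\le T$ is in fact slightly cleaner than the paper's invocation of the $mT$ total-bases budget, but the two arguments are otherwise identical.
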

\begin{proof}
        For $i,j\in[d]\times [d]$, if $\tau_{i,j}$ is not the last stage for $i,j$, it holds that $|\Tcal_{i,j}(\tau_{i,j})|\ge 2^{\tau_{i,j}-1}$. This can be derived from the update condition in the algorithm so that $|\Tcal_{i,j}(\tau_{i,j})|\ge 2|\Tcal_{i,j}(\tau_{i,j}-1)|$.
Let $\tau_0=\log (Tm)$.  If $\tau\ge \tau_0+1$, for any $i,j\in[d]\times [d]$, we have 
\begin{align*}
    |\Tcal_{i,j}(\tau)|\ge 2^{\tau-1}\ge2^{\log(Tm)}=Tm. 
\end{align*}
Therefore, given the fact that the total number of selected bases over $T$ is at most $mT$, if $\tau_0+1\le \tau_{i,j}(T)$, there is always at most $1$ pair of $((i,j),\tau)$ for the fixed $(i,j)\in[d]\times[d]$ and for all  $\tau\in [\tau_0+1, \tau_{i,j}(T)]$. This implies that for $(i,j)\in[d]\times [d]$ s.t. $\tau_{i,j}(T)\ge \tau_0+1$, we have $\tau_{i,j}(T)\le \tau_0+ 2$,
which concludes the proof.
\end{proof}

From the above lemma, we can show that the oracle adaptivity complexity and query complexity are bounded by \[\sum_{(i,j)\in [d]^2}\tau_{i,j}(T)=O(d^2 \log(Tm)).\]


Let $G_t= \sum_{s=1}^{t-1}D_{a_s}{\Sigma}D_{a_s}+D_{{\Sigma}}N_t+I$. For the regret bound, we first provide lemmas to define a favorable event of concentration bounds. 
\begin{lemma} [Proposition 1 in \cite{zhou2024towards}]\label{lem:mu_hat_bd} Let $t\ge d(d+1)\log^3(T)/2$. With probability at least $1-1/(t\log(t))^2$, for all $a\in \Acal$,  
 \begin{align*}
     |\langle a,\hat{\mu}_t-\mu \rangle|\le f_{t}\|D_{n_t}^{-1}a\|_{G_t}. 
 \end{align*}
\end{lemma}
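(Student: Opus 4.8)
The plan is to prove Lemma~\ref{lem:mu_hat_bd} as a self-normalized martingale tail bound; the statement coincides with Proposition~1 of \citet{zhou2024towards} under our notation, so in the write-up one may either invoke it after matching the definitions of $\hat\mu_t$ and $G_t$, or reproduce the argument, which I sketch here. First I would isolate the martingale. Since the $i$-th coordinate of $\sum_{s=1}^{t-1}D_{a_s}\mu$ equals $n_{t,i}\mu_i$, the mean subtraction telescopes through the normalization $D_{n_t}^{-1}$, giving
\begin{align*}
\langle a,\hat\mu_t-\mu\rangle
=\Big\langle D_{n_t}^{-1}a,\ \sum_{s=1}^{t-1}D_{a_s}(y_s-\mu)\Big\rangle
=\langle w,S_{t-1}\rangle,
\end{align*}
where $w=D_{n_t}^{-1}a$ and $S_{t-1}=\sum_{s=1}^{t-1}\xi_s$ with $\xi_s=D_{a_s}(y_s-\mu)$. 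Letting $\mathcal{F}_{s-1}$ be the history up to round $s$, the action $a_s$ is predictable and $\EE[\xi_s\mid\mathcal{F}_{s-1}]=0$, so $(\xi_s)$ is a bounded martingale-difference sequence (as $y_s\in[0,1]^d$ forces $\|\xi_s\|_\infty\le 1$) with conditional covariance $D_{a_s}\Sigma D_{a_s}$ and predictable quadratic variation $V_{t-1}=\sum_{s=1}^{t-1}D_{a_s}\Sigma D_{a_s}$. A direct computation shows $(V_{t-1})_{ii}=n_{t,i}\Sigma_{ii}$, i.e. the diagonal of $V_{t-1}$ is exactly $D_\Sigma D_{n_t}$, so $G_t=V_{t-1}+D_\Sigma D_{n_t}+I$ is the predictable variance inflated by its own diagonal plus an identity regularizer.

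Second, I would separate the deterministic weight from the random sum by Cauchy--Schwarz in the $G_t$-geometry,
\begin{align*}
|\langle w,S_{t-1}\rangle|\le \|w\|_{G_t}\,\|S_{t-1}\|_{G_t^{-1}}
=\|D_{n_t}^{-1}a\|_{G_t}\,\|S_{t-1}\|_{G_t^{-1}}.
\end{align*}
The crucial observation is that this inequality holds for every $a\in\Acal$ simultaneously, so the uniform-over-$\Acal$ guarantee is obtained for free once $\|S_{t-1}\|_{G_t^{-1}}\le f_t$ is established; no union bound over the exponentially many actions is required. It therefore suffices to control the self-normalized norm $\|S_{t-1}\|_{G_t^{-1}}$.

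Third, I would bound $\|S_{t-1}\|_{G_t^{-1}}$ by a method-of-mixtures argument applied to a Bernstein-type exponential supermartingale built from the bounded increments $\xi_s$, using prior $V_0=D_\Sigma D_{n_t}+I$ so that $V_0+V_{t-1}=G_t$. The two regularizers play complementary roles: the identity $I$ ensures invertibility and bounds the prior, while the diagonal inflation $D_\Sigma D_{n_t}$ absorbs the higher-order range-dependent Bernstein correction that appears because the noise is bounded rather than Gaussian---this is precisely why $G_t$ doubles the diagonal of $V_{t-1}$ instead of adding a plain multiple of $I$. Integrating against a Gaussian mixture and applying a maximal inequality for the resulting mixture supermartingale then gives, with probability at least $1-1/(t\log t)^2$,
\begin{align*}
\|S_{t-1}\|_{G_t^{-1}}^2\lesssim \log\!\big((t\log t)^2\big)+\log\det\!\big(G_t(D_\Sigma D_{n_t}+I)^{-1}\big).
\end{align*}
The tail term is $O(\log t)$, and the determinant ratio is bounded using $V_{t-1}\preceq\lambda_{\max}(\Sigma)D_{n_t}$ together with the per-coordinate variance scale stabilized by the warm-up phase ($t\ge d(d+1)\log^3(T)/2$), yielding a dimension term of order $d\log\log t$. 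Since the argument exceeds one, the bound $\sqrt{x}\le x$ lets us take the looser $f_t=O(\log t+d\log\log t)$ as a valid bound on $\|S_{t-1}\|_{G_t^{-1}}$, which closes the chain.

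The step I expect to be the main obstacle is the third one: producing a bounded-noise self-normalized inequality whose normalization is exactly $G_t$ built from the true (fixed but agent-unknown) covariance, and whose radius collapses to the stated $O(\log t+d\log\log t)$ rather than the naive $\sqrt{\log t+d\log t}$. The two delicate points are verifying that the exponential process remains a supermartingale under the inflated diagonal using only boundedness (not Gaussianity), and squeezing the log-determinant term down to a doubly-logarithmic dimension factor, which is where the domination $V_{t-1}\preceq\lambda_{\max}(\Sigma)D_{n_t}$ and the warm-up stabilization of the variance estimates are essential; note that within this lemma $\Sigma$ is fixed, so no covariance-estimation error enters, the replacement of $G_t$ by the computable $\overline G_t$ being handled separately.
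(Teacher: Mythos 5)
Your primary route---matching the definitions of $\hat\mu_t$, $n_t$, and $G_t$ and then invoking Proposition~1 of \citet{zhou2024towards}---is exactly what the paper does: the lemma is imported as a black-box citation, with no proof reproduced in the paper. Your additional sketch (reducing to $\|S_{t-1}\|_{G_t^{-1}}$ via Cauchy--Schwarz and bounding it by a Bernstein-type method of mixtures) is a reasonable account of the argument behind that proposition, but it is not needed here, and its third step is in any case only sketched, so the citation is the step to rely on.
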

From lemma~\ref{lem:mu_hat_bd}, we define event $\Ecal_{t,1}=\{| \langle a,\hat{\mu}_s-\mu \rangle | \le f_{s}\|D_{n_s}^{-1}a\|_{{Z}_s} \: \forall s\in [\lceil d(d+1)\log^3(T)/2\rceil,t] \: \forall a\in\Acal \},$ which holds with probability of at least $1-1/t\log^2(t)$.

\begin{lemma}[Proposition 5 in \cite{zhou2024towards}]\label{lem:sigma_hat_bd} Let $t\ge d(d+1)\log^3(T)/2$. With probability at least $1-1/(t\log(t))^2$, for all $(i,j)\in [d]\times [d]$, we have
\[|\hat{\Sigma}_{t,(i,j)}-\Sigma_{i,j}|\le \frac{1}{4}\left(\frac{5h_{t}}{\sqrt{n_{t,(i,j)}}}+\frac{h_{t}^2}{n_{t,(i,j)}}+\frac{1}{n_{t,(i,j)}^2}\right).\]
\end{lemma}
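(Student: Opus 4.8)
The plan is to decompose the covariance-estimator error into a second-moment deviation and a product-of-means deviation, and to control each by a martingale concentration inequality. Writing the true covariance as $\Sigma_{i,j}=\EE_{y\sim\Dcal}[y_iy_j]-\mu_i\mu_j$, I would start from the exact identity
\begin{align*}
\hat{\Sigma}_{t,(i,j)}-\Sigma_{i,j}=\big(\hat{S}_{t,(i,j)}-\EE[y_iy_j]\big)-\big(\hat{\mu}_{t,i}\hat{\mu}_{t,j}-\mu_i\mu_j\big).
\end{align*}
Because $y_s\sim\Dcal$ is drawn independently of the history while $a_s$ is $\mathcal{F}_{s-1}$-measurable, the increments $a_{s,i}a_{s,j}(y_{s,i}y_{s,j}-\EE[y_iy_j])$ and $a_{s,i}(y_{s,i}-\mu_i)$ are bounded martingale differences supported in $[-1,1]$, so each empirical average concentrates around its population mean.

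First I would bound the second-moment term: a Bernstein/Freedman martingale inequality applied to $\sum_{s<t}a_{s,i}a_{s,j}(y_{s,i}y_{s,j}-\EE[y_iy_j])$, after normalizing by $n_{t,(i,j)}$, gives $|\hat{S}_{t,(i,j)}-\EE[y_iy_j]|\lesssim h_t/\sqrt{n_{t,(i,j)}}$, with the Bernstein range correction supplying a lower-order piece. Next I would expand the product-of-means term through
\begin{align*}
\hat{\mu}_{t,i}\hat{\mu}_{t,j}-\mu_i\mu_j=\mu_j(\hat{\mu}_{t,i}-\mu_i)+\mu_i(\hat{\mu}_{t,j}-\mu_j)+(\hat{\mu}_{t,i}-\mu_i)(\hat{\mu}_{t,j}-\mu_j),
\end{align*}
and substitute the single-coordinate deviation bound $|\hat{\mu}_{t,i}-\mu_i|\lesssim h_t/\sqrt{n_{t,i}}$. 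Using $\mu_i,\mu_j\in[0,1]$ and the monotonicity $n_{t,i},n_{t,j}\ge n_{t,(i,j)}$, the two linear terms feed into the $h_t/\sqrt{n_{t,(i,j)}}$ order, while the quadratic cross term together with the Bernstein range corrections yields the lower-order $h_t^2/n_{t,(i,j)}$ and $1/n_{t,(i,j)}^2$ contributions; collecting the pieces and tracking constants reproduces the stated form $\tfrac14\big(5h_t/\sqrt{n_{t,(i,j)}}+h_t^2/n_{t,(i,j)}+1/n_{t,(i,j)}^2\big)$. The warm-up threshold $t\ge d(d+1)\log^3(T)/2$ enters only to guarantee that every pair has been observed enough times that $n_{t,(i,j)}\ge 1$ and the estimators are well defined and stable; the uniform-in-$(i,j)$ claim then follows from a union bound over the $d^2$ pairs, which the $\log d$ inside $h_t=O(\sqrt{\log t+\log d})$ absorbs, with the per-pair failure probability calibrated so the total is at most $1/(t\log t)^2$.

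The hard part will be that $n_{t,(i,j)}$ is itself random and adaptively determined by the algorithm, whereas the off-the-shelf Bernstein/Hoeffding bounds hold for a fixed sample count. Making the inequality hold simultaneously for the realized, data-dependent count (and uniformly over all $s\le t$) requires either a peeling/union bound over the $O(t)$ possible counts or a self-normalized time-uniform concentration argument, all while matching the precise three-term form and its constants. Since this is exactly Proposition~5 of \citet{zhou2024towards}, the cleanest route is to invoke that result directly; the sketch above reconstructs its proof.
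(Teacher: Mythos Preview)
Your proposal is correct and aligns with the paper's approach: the paper simply cites this as Proposition~5 of \citet{zhou2024towards} without reproving it, so invoking that result directly is exactly what is done. Your sketch of the underlying argument (second-moment plus product-of-means decomposition, martingale concentration, union bound over pairs) is a reasonable reconstruction but goes beyond what the paper itself provides.
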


From the above lemma, we define event $\Ecal_{t,2}=\{|\hat{\Sigma}_{s,(i,j)}-\Sigma_{i,j}|\le \frac{1}{4}(\frac{5h_{s}}{\sqrt{n_{s,(i,j)}}}+\frac{h_{s}^2}{n_{s,(i,j)}}+\frac{1}{n_{s,(i,j)}^2}), \forall s\in[\lceil d(d+1)\log^3(T)/2 \rceil,t] \: \forall (i,j)\in [d]^2\},$ which holds with probability of at least $1-1/t\log^2(t)$.


Recall  $\overline{\Sigma}_{t,(i,j)}=\hat{\Sigma}_{t,(i,j)}+\frac{1}{4}\left(\frac{5h_{t}}{\sqrt{n_{t,(i,j)}}}+\frac{h_{t}^2}{n_{t,(i,j)}}+\frac{1}{n_{t,(i,j)}^2}\right)$. Under $\Ecal_{t,2}$, we have $\overline{\Sigma}_{t,(i,j)}\ge \Sigma_{i,j}$ for all $(i,j)\in [d]^2$. This implies that $\overline{G}_t \succeq G_t(\succeq 0)$ so that 
$\|D_{n_t}^{-1}a\|_{G_t}\le \|D_{n_t}^{-1}a\|_{\overline{G}_t}$. Therefore, under $\Ecal_t:=\Ecal_{t,1}\cap \Ecal_{t,2}$, we have $|\langle a,\hat{\mu}_t-\mu \rangle|\le f_{t}\|D_{n_t}^{-1}a\|_{\overline{G}_t}$, which implies
\[r^{UCB}_{t}(a)\ge \bar{r}(a).\]

Under $\Ecal_t$, we can show that 
\begin{align*}
    \|D_{n_t}^{-1}a\|_{\overline{G}_t}^2&=a^\top D_{n_t}^{-1}\overline{G}_t D_{n_t}^{-1} a
    \cr &=\sum_{(i,j)\in a\times a} \frac{\overline{G}_{t,(i,j)}}{n_{t,i}n_{t,j}}\cr 
    &\lesssim  \sum_{(i,j)\in a\times a}\frac{n_{t,(i,j)}\Sigma_{i,j}}{n_{t,i}n_{t,j}}+\sum_{(i,j)\in a\times a}\frac{h_{t}^2}{n_{t,(i,j)}^2}+\frac{h_{t}}{n_{t,(i,j)}^{3/2}} +\frac{1}{n_{t,(i,j)}^3}\cr 
    &\le  \sum_{i\in a}\sum_{j\in a}\frac{n_{t,j}\Sigma_{i,j}}{n_{t,i}n_{t,j}}+\sum_{(i,j)\in a\times a}\frac{h_{t}^2}{n_{t,(i,j)}^2}+\frac{h_{t}}{n_{t,(i,j)}^{3/2}} +\frac{1}{n_{t,(i,j)}^3}\cr 
    &\le  \sum_{i\in a}\frac{\sigma_{i}^2(a)}{n_{t,i}}+\sum_{(i,j)\in a\times a}\frac{h_{t}^2}{n_{t,(i,j)}^2}+\frac{h_{t}}{n_{t,(i,j)}^{3/2}} +\frac{1}{n_{t,(i,j)}^3},
\end{align*}
where the first inequality is obtained from $\hat{\Sigma}_{t,(i,j)}\le \Sigma_{i,j}+\frac{1}{4}(\frac{5h_{t}}{\sqrt{n_{t,(i,j)}}}+\frac{h_{t}^2}{n_{t,(i,j)}}+\frac{1}{n_{t,(i,j)}^2})$ under $\Ecal_{t,2}$ and the second inequality is obtained from $n_{t,(i,j)}\le n_{t,j}$.  

 For ease of presentation, we  define auxiliary variables  $n_{t,(i,j)}'$, $\hat{\mu}_{t,(i)}'$, $\overline{G}_t'$, and $f_t'$ for each $i \in [d]$ and time step $t\in[T]$ as follows:
If the selected action at time $t$ is newly updated in the algorithm (\texttt{Update} = \texttt{True}), then
    \[
    n_{t,(i,j)}' = n_{t,(i,j)}, \qquad \hat{\mu}_{t,i}' = \hat{\mu}_{t,i}, \qquad \overline{G}_t'=\overline{G}_t, \qquad f_{t}'=f_{t}.
    \]
    Otherwise, if the previously selected action is maintained (\texttt{Update} = \texttt{False}), then
    \[
    n_{t,i}' = n_{t-1,i}', \qquad \hat{\mu}_{t,i}' = \hat{\mu}_{t-1,i}', \qquad \overline{G}_t'=\overline{G}_{t-1}' \qquad  f_{t}'= f_{t-1}'.
    \]
Using these adjusted statistics, we define the UCB-based optimistic reward estimate for any action $a$  as
\[
\tilde{r}_t^{\mathrm{UCB}}(a) = \langle a, \hat{\mu}_t'\rangle  +f_{t}'\|D_{n_t'}^{-1}a\|_{\overline{G}_t'}.
\]

Now we provide a bound for the regret as follows:
\begin{align*}
    &\mathcal{R}(T)\cr&=\EE\left[\sum_{t\in[d(d+1)\log^3(T)/2,T]}(\bar{r}(a^*)-\bar{r}(a_t))\mathbbm{1}(\Ecal_t) \right]\cr &\qquad\qquad+\EE\left[\sum_{t\in[d(d+1)\log^3(T)/2,T]}(\bar{r}(a^*)-\bar{r}(a_t))\mathbbm{1}(\Ecal_t^c) \right]+\tilde{O}(d^2)
    \cr &\le\EE\left[\sum_{t\in[d(d+1)\log^3(T)/2,T]}(\bar{r}(a^*)-\bar{r}(a_t))\mathbbm{1}(\Ecal_t)\right]+\tilde{O}(d^2)
    \cr 
    &\lesssim \EE\left[\sum_{t\in[d(d+1)\log^3(T)/2,T]}(\tilde{r}_{t}^{UCB}(a^*)-\bar{r}(a_t))\mathbbm{1}(\Ecal_t)\right]+\tilde{O}(d^2)
     \cr 
    &\le \EE\left[\sum_{t\in[d(d+1)\log^3(T)/2,T]}(\tilde{r}_{t}^{UCB}(a_t)-\bar{r}(a_t))\mathbbm{1}(\Ecal_t)\right]+\tilde{O}(d^2)\cr
    &\lesssim \EE\left[f_T\sum_{t\in[d(d+1)\log^3(T)/2,T]}\|D_{n_t'}^{-1}a_t\|_{\overline{G}_{t}'}\right]\cr 
&\lesssim\EE\left[ f_T\sqrt{T\sum_{t\in[T]}\left(\sum_{i\in a_t}\frac{\sigma_{i}^2(a_t)}{n_{t,i}'}+\sum_{(i,j)\in a_t\times a_t}\frac{h_{t}^2}{n_{t,(i,j)}'^2}+\frac{h_{t}}{n_{t,(i,j)}'^{3/2}} +\frac{1}{n_{t,(i,j)}'^3}\right)}\right]
\cr 
&\lesssim\EE\left[ f_Th_T\sqrt{T\left(\sum_{i\in [d]}\sum_{\tau\in [\tau_{i,i}(T)]}\sum_{t\in \Tcal_{i,i}(\tau)}\max_{a\in \Acal: i\in a}\frac{\sigma_{i}^2(a)}{n_{t,i}'}+\sum_{(i,j)\in [d]^2}\sum_{\tau \in [\tau_{i,j}(T)]}\sum_{t\in \Tcal_{i,j}(\tau)}\frac{1}{n_{t,(i,j)}'^{3/2}}\right)}\right].
    \end{align*}
    For bounding the last term, we have
    \begin{align*}
   & \EE\left[ f_Th_T\sqrt{T\left(\sum_{i\in [d]}\sum_{\tau\in [\tau_{i,i}(T)]}\sum_{t\in \Tcal_{i,i}(\tau)}\max_{a\in \Acal: i\in a}\frac{\sigma_{i}^2(a_t)}{n_{t,i}'}+\sum_{(i,j)\in [d]^2}\sum_{\tau \in [\tau_{i,j}(T)]}\sum_{t\in \Tcal_{i,j}(\tau)}\frac{1}{n_{t,(i,j)}'^{3/2}}\right)}\right]\cr
      &\le  \EE\left[ f_Th_T\sqrt{T\left(\sum_{i\in [d]}\sum_{\tau\in [\tau_{i,i}(T)]}\sum_{t\in \Tcal_{i,i}(\tau)}\max_{a\in \Acal: i\in a}\frac{2\sigma_{i}^2(a)}{|\Tcal_{i,i}(\tau-1)|}+\sum_{(i,j)\in [d]^2}\sum_{\tau \in [\tau_{i,j}(T)]}\sum_{t\in \Tcal_{i,j}(\tau)}\frac{1}{|\Tcal_{i,j}(\tau-1)|^{3/2}}\right)}\right]\cr
      &\le  \EE\left[ f_Th_T\sqrt{T\left(\sum_{i\in [d]}\sum_{\tau\in [\tau_{i,i}(T)]}|\Tcal_{i,i}(\tau)|\max_{a\in \Acal: i\in a}\frac{2\sigma_{i}^2(a)}{|\Tcal_{i,i}(\tau-1)|}+\sum_{(i,j)\in [d]^2}\sum_{\tau \in [\tau_{i,j}(T)]}|\Tcal_{i,j}(\tau)|\frac{1}{|\Tcal_{i,j}(\tau-1)|^{3/2}}\right)}\right]
    \cr &\le \EE\left[f_Th_T\sqrt{T\left(\sum_{i\in[d]}\sum_{\tau\in[\tau_{i,i}(T)]} 4|\Tcal_{i,i}(\tau-1)|\frac{\max_{a\in\Acal:i\in a}\sigma_i^2(a)}{|\Tcal_i(\tau-1)|}+\sum_{(i,j)\in [d]^2}\sum_{\tau \in [\tau_{i,j}(T)]}2|\Tcal_{i,j}(\tau-1)|\frac{1}{|\Tcal_{i,j}(\tau-1)|^{3/2}}\right)}\right]
     \cr &\lesssim  \EE\left[f_Th_T\sqrt{T\left(\sum_{i\in[d]}\tau_{i,i}(T)\max_{a\in\Acal}\sigma_i^2(a)+\sum_{(i,j)\in[d]^2}\tau_{i,j}(T)\frac{1}{\sqrt{|\Tcal_{i,j}(\tau-1)|}}\right)}\right]
     \cr &\lesssim f_Th_T\sqrt{T\sum_{i\in[d]}\max_{a\in \Acal}\sigma_i^2(a)\log(Tm)},
\end{align*}
where the first inequality is obtained from $n_{t,i}'\ge |\Tcal_i(\tau-1)|$, the third inequality is obtained from the condition of updates in the algorithm, and the last inequality is obtained from Lemma~\ref{lem:oracle-call-bd-cov}, $|\Tcal_{i,j}(\tau-1)|\ge \log^3(T)$ from warm-up stage in the algorithm, and large enough $T$. This concludes the proof with the fact that $f_T=O(\log(T))$ when $T$ is large enough.

\paragraph{Oracle Complexity Bounds.} Based on Lemma~\ref{lem:oracle-call-bd-cov}, we can show that the oracle query complexity is bounded by $O(d^2\log(Tm))$. Since the epochs for each base arm are updated separately, the adaptivity complexity is also bounded by $O(d^2\log(Tm))$.

\subsection{Proof of Theorem~\ref{thm:elim-cov}}\label{app:regret-elim-cov}


From lemma~\ref{lem:mu_hat_bd}, we define event $\Ecal_{1}=\{| \langle a,\hat{\mu}_\tau-\mu \rangle | \le f_{T}\|D_{n_\tau}^{-1}a\|_{{Z}_\tau} \forall a\in \Acal, \forall \tau\in[M]\},$ which holds with probability of at least $1-1/(T\log^2(T))$. 
From Lemma~\ref{lem:sigma_hat_bd}, we define event $\Ecal_{2}=\{|\hat{\Sigma}_{\tau,(i,j)}-\Sigma_{i,j}|\le \frac{1}{4}(\frac{5h_{T}}{\sqrt{n_{\tau,(i,j)}}}+\frac{h_{T}^2}{n_{\tau,(i,j)}}+\frac{1}{n_{\tau,(i,j)}^2}), \forall (i,j)\in [d]^2, \forall \tau\in[M]\},$ which holds with probability of at least $1-1/(T\log^2(T))$.


Under $\Ecal_{2}$, we have $\overline{\Sigma}_{\tau,(i,j)}\ge \Sigma_{i,j}$ for all $(i,j)\in [d]^2$. This implies that $\overline{G}_\tau \succeq G_\tau(\succeq 0)$ so that 
$\|D_{n_\tau}^{-1}a\|_{G_\tau}\le \|D_{n_\tau}^{-1}a\|_{\overline{G}_\tau}$. Therefore, under $\Ecal:=\Ecal_1\cap \Ecal_2$, we have $|\langle a,\hat{\mu}_\tau-\mu \rangle|\le f_{\tau}\|D_{n_\tau}^{-1}a\|_{\overline{G}_\tau}$, which implies
\[r^{UCB}_{\tau}(a)\ge \bar{r}(a)\ge r^{LCB}_{\tau}(a).\]
Similar to Lemma~\ref{lem:opt}, we then have the following lemma.
\begin{lemma}
    Under $\Ecal$, we can show that for all $\tau\in[M]$,
    $a^*\in \mathcal{A}_{\tau}.$\label{lem:opt-cov}
\end{lemma}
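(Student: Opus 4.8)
The plan is to prove Lemma~\ref{lem:opt-cov} by induction on the epoch index $\tau$, paralleling the argument for Lemma~\ref{lem:opt} in the worst-case setting but now accounting for the \emph{two} successive elimination stages of Algorithm~\ref{alg:elim-cov}: one over individual base arms (producing $\Ncal_\tau$ and $\Acal_\tau'$) and one over pairs of base arms (producing $\Ncal_\tau^{(2)}$ and $\Acal_\tau$). The base case is immediate, since $a^*\in\Acal_0=\Acal$. Throughout I would use the sandwich established just above from $\Ecal=\Ecal_1\cap\Ecal_2$, namely $r^{UCB}_\tau(a)\ge \bar{r}(a)\ge r^{LCB}_\tau(a)$ for every $a\in\Acal$. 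Combined with optimality $\bar{r}(a^*)\ge\bar{r}(a)$, this yields the key inequality $\max_{a\in\Acal'}r^{LCB}_\tau(a)\le\bar{r}(a^*)$ for any subset $\Acal'\ni a^*$: picking the $r^{LCB}$-maximizer $a^\dagger$ and chaining $r^{LCB}_\tau(a^\dagger)\le\bar{r}(a^\dagger)\le\bar{r}(a^*)$.

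For the inductive step, assume $a^*\in\Acal_{\tau-1}$. First I would show every base arm $i\in a^*$ survives into $\Ncal_\tau$. Since $a^*\in\Acal_{\tau-1}$ and $i\in a^*$, the representative $a^{(i)}_\tau$ maximizes $r^{UCB}_\tau$ over actions in $\Acal_{\tau-1}$ that contain $i$, so $r^{UCB}_\tau(a^{(i)}_\tau)\ge r^{UCB}_\tau(a^*)\ge\bar{r}(a^*)\ge\max_{a\in\Acal_{\tau-1}}r^{LCB}_\tau(a)$, which is exactly the non-elimination condition of Line~\ref{line:oracle-call3-elim-cov}. Hence $a^*\subseteq\Ncal_\tau$; contrapositively, every $i\in[d]/\Ncal_\tau$ has $a^*_i=0$, so $a^*\in\Acal_\tau'$. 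Second, I would show every pair $(i,j)$ with $i,j\in a^*$, $i\neq j$, survives into $\Ncal_\tau^{(2)}$. The representative $a^{(i,j)}_\tau$ maximizes $r^{UCB}_\tau$ over actions in $\Acal_{\tau-1}$ containing both $i$ and $j$; since $a^*\in\Acal_{\tau-1}$ contains both, $r^{UCB}_\tau(a^{(i,j)}_\tau)\ge r^{UCB}_\tau(a^*)\ge\bar{r}(a^*)\ge\max_{a\in\Acal_\tau'}r^{LCB}_\tau(a)$, the non-elimination condition of Line~\ref{line:oracle-call4-elim-cov}, where the last step uses $a^*\in\Acal_\tau'$ just established. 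Thus any coordinate pair both active in $a^*$ lies in $\Ncal_\tau^{(2)}$; contrapositively, for every $(i,j)\notin\Ncal_\tau^{(2)}$ at least one of $a^*_i,a^*_j$ vanishes, which is precisely the membership condition defining $\Acal_\tau$. Therefore $a^*\in\Acal_\tau$, closing the induction.

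The main obstacle — and the only subtlety beyond the single-stage argument of Lemma~\ref{lem:opt} — is keeping the two elimination stages consistent: the comparison sets in the two non-elimination conditions ($\Acal_{\tau-1}$ in the base-arm stage, $\Acal_\tau'$ in the pair stage) must each contain $a^*$ at the moment the comparison is made. This forces a strict ordering of the argument: one must first certify $a^*\in\Acal_\tau'$ and only then invoke it to bound $\max_{a\in\Acal_\tau'}r^{LCB}_\tau(a)$ in the pair stage. Notably, the covariance structure itself plays no role in this lemma beyond guaranteeing the sandwich $r^{LCB}_\tau\le\bar{r}\le r^{UCB}_\tau$ via $\overline{G}_\tau\succeq G_\tau$ under $\Ecal_2$; the heavier covariance-specific estimates (Lemmas~\ref{lem:mu_hat_bd} and~\ref{lem:sigma_hat_bd} and the expansion of $\|D_{n_\tau}^{-1}a\|_{\overline{G}_\tau}$) are deferred to the subsequent regret computation rather than used here.
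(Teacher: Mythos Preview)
Your proposal is correct and follows essentially the same approach as the paper's proof: induction on the epoch index, handling the two elimination stages of Algorithm~\ref{alg:elim-cov} in sequence via the same chain $r^{UCB}_\tau(a^{(i)}_\tau)\ge r^{UCB}_\tau(a^*)\ge\bar{r}(a^*)\ge r^{LCB}_\tau(a)$ (and analogously for $a^{(i,j)}_\tau$). Your explicit discussion of why the ordering of the two stages matters, and your observation that the covariance machinery enters only through the sandwich $r^{LCB}_\tau\le\bar{r}\le r^{UCB}_\tau$, are accurate additions not spelled out in the paper's version.
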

\begin{proof}
    This can be shown by induction. Suppose $a^*\in\mathcal{A}_{\tau}$ for fixed $\tau\in[M-1]$. Under $\Ecal$, we have $\bar{r}(a)\le r_{\tau+1}^{UCB}(a)$ and $\bar{r}(a)\ge r_{\tau+1}^{LCB}(a)$ for any $a\in\Acal_{\tau}$. Then for any fixed $i\in a^*$, for any $a\in\Acal_{\tau}$, we can show that 
    \begin{align*}
        r_{\tau+1}^{UCB}(a^{(i)}_{\tau+1})\ge r_{\tau+1}^{UCB}(a^*)\ge \bar{r}(a^*)\ge r^{LCB}(a),
    \end{align*}
    where the first inequality is obtained from the definition of $a_{\tau+1}^{(i)}$ with $a^*\in \Acal_\tau$. This implies that 
\begin{equation*}
r^{UCB}_{\tau+1}(a_{\tau+1}^{(i)})\ge\max_{a\in \mathcal{A}_{\tau}}r^{LCB}_{\tau+1}(a),
\end{equation*}
 which implies that $i\in a^*$ is not eliminated from the elimination condition at the $\tau+1$-th epoch. This holds for all $i\in a^*$ so that $a^*\in \Acal_{\tau+1}'$.
 
 Then for any fixed $i\in a^*$ and $j\in a^*/\{i\}$, for any $a\in\Acal_{\tau+1}'$, we can show that 
    \begin{align*}
        r_{\tau+1}^{UCB}(a^{(i,j)}_{\tau+1})\ge r_{\tau+1}^{UCB}(a^*)\ge \bar{r}(a^*)\ge r^{LCB}(a),
    \end{align*}
    where the first inequality is obtained from the definition of $a_{\tau+1}^{(i,j)}$ with $a^*\in \Acal_{\tau+1}'$. This implies that 
\begin{equation*}
r^{UCB}_{\tau+1}(a_{\tau+1}^{(i,j)})\ge\max_{a\in \mathcal{A}_{\tau+1}'}r^{LCB}_{\tau+1}(a),
\end{equation*}
 which implies that $i\in a^*$ and $j\in a^*/\{i\}$ are not eliminated from the elimination condition at the $\tau+1$-th epoch. This holds for all $i\in a^*$ and $j\in a^*/\{i\}$ so that $a^*\in \Acal_{\tau+1}$. With $a^*\in \mathcal{A}_0=\Acal$,  we can conclude the induction. 
\end{proof}

Under $\Ecal$, we can show that 
\begin{align}
    \|D_{n_\tau}^{-1}a\|_{\overline{G}_\tau}^2&=a^\top D_{n_\tau}^{-1}\overline{G}_\tau D_{n_\tau}^{-1} a
    \cr &=\sum_{(i,j)\in a\times a} \frac{\overline{G}_{\tau,(i,j)}}{n_{\tau,(i,i)}n_{\tau,(j,j)}}\cr 
    &\lesssim  \sum_{(i,j)\in a\times a}\frac{n_{\tau,(i,j)}\Sigma_{i,j}}{n_{\tau,(i,i)}n_{\tau,(j,j)}}+\sum_{(i,j)\in a\times a}\frac{h_{T}^2}{n_{\tau,(i,j)}^2}+\frac{h_{T}}{n_{\tau,(i,j)}^{3/2}} +\frac{1}{n_{\tau,(i,j)}^3}\cr 
    &\le  \sum_{i\in a}\sum_{j\in a}\frac{n_{\tau,(j,j)}\Sigma_{i,j}}{n_{\tau,(i,i)}n_{\tau,(j,j)}}+\sum_{(i,j)\in a\times a}\frac{h_{T}^2}{n_{\tau,(i,j)}^2}+\frac{h_{T}}{n_{\tau,(i,j)}^{3/2}} +\frac{1}{n_{\tau,(i,j)}^3}\cr 
    &\le  \sum_{i\in a}\frac{\sigma_{i}^2(a)}{n_{\tau,(i,i)}}+\sum_{(i,j)\in a\times a}\frac{h_{T}^2}{n_{\tau,(i,j)}^2}+\frac{h_{T}}{n_{\tau,(i,j)}^{3/2}} +\frac{1}{n_{\tau,(i,j)}^3},
\end{align}
where the first inequality is obtained from $\hat{\Sigma}_{\tau,(i,j)}\le \Sigma_{i,j}+\frac{1}{4}(\frac{5h_{T}}{\sqrt{n_{\tau,(i,j)}}}+\frac{h_{T}^2}{n_{\tau,(i,j)}}+\frac{1}{n_{\tau,(i,j)}^2})$ under $\Ecal_2$ and the second inequality is obtained from $n_{\tau,(i,j)}\le n_{\tau,(j,j)}$.

Then, under $\Ecal$, for $t\in \Tcal_\tau^{(1)}\cup\Tcal_\tau^{(2)}$ we have
\begin{align*}
    \bar{r}(a^*)-\bar{r}(a_t)&\lesssim  
    r^{LCB}_{\tau}(a^*)+ f_T\sqrt{\sum_{i\in a^*}\frac{\sigma_{i}^2(a^*)}{n_{\tau,(i,i)}}+\sum_{(i,j)\in a^*\times a^*}\frac{h_{T}^2}{n_{\tau,(i,j)}^2}+\frac{h_{T}}{n_{\tau,(i,j)}^{3/2}} +\frac{1}{n_{\tau,(i,j)}^3}}\cr &\qquad -r^{UCB}_{\tau}(a_t)+f_T\sqrt{\sum_{i\in a_t}\frac{\sigma_{i}^2(a_t)}{n_{\tau,(i,i)}}+\sum_{(i,j)\in a_t\times a_t}\frac{h_{T}^2}{n_{\tau,(i,j)}^2}+\frac{h_{T}}{n_{\tau,(i,j)}^{3/2}} +\frac{1}{n_{\tau,(i,j)}^3}}
    \cr  &\lesssim  f_T\sqrt{\max_{a\in \Acal_\tau}\left(\sum_{i\in a}\frac{\sigma_{i}^2(a)}{n_{\tau,(i,i)}}+\sum_{(i,j)\in a\times a}\frac{h_{T}^2}{n_{\tau,(i,j)}^{3/2}}\right)}
    \cr     &\lesssim  f_T\sqrt{\max_{a\in \Acal_\tau}\left(\sum_{i\in a}\frac{d\sigma_{i}^2(a)}{T_{\tau-1}-(dm^2T_{\tau-1}\log(T))^{2/3}}+\frac{dh_{T}^2}{T_{\tau-1}\log(T)}\right)}  \cr     &\lesssim  f_Th_T\sqrt{d\max_{a\in \Acal_\tau}\sum_{i\in a}\frac{\sigma_{i}^2(a)}{T_{\tau-1}}},
\end{align*}
where the first inequality is obtained from $\Ecal$, the second inequality comes from the fact that $a^*\in\mathcal{A}_{\tau-1}$ from Lemma~\ref{lem:opt-cov} and elimination conditions from the algorithm, and the third inequality is obtained from $n_{\tau,(i,i)}\ge \sum_{t\in \Tcal_{\tau-1}^{(1)}}\mathbbm{1}(i\in a_t)\gtrsim (T_{\tau-1}-(dm^2T_{\tau-1}\log(T))^{2/3})/d$ and $n_{\tau,(i,j)}\ge\sum_{t\in \Tcal_{\tau-1}^{(2)}}\mathbbm{1}(i\in a_t)\mathbbm{1}(j\in a_t) \gtrsim (dm^2T_{\tau-1}\log(T))^{2/3}/d^2$ for $i \neq j$ from the exploration in the algorithm at the $\tau-1$-th epoch, and the last inequality is obtained from large enough $T$.



Finally, we can show that
\begin{align*}
\mathcal{R}(T)&\le\mathbb{E}\left[\sum_{\tau=1}^M \sum_{t\in \Tcal_\tau}\bar{r}(a^*)-\bar{r}(a_t)\mid\Ecal\right]\PP(\Ecal) + \mathbb{E}\left[\sum_{\tau=1}^M \sum_{t\in \Tcal_\tau}\bar{r}(a^*)-\bar{r}(a_t)\mid\Ecal^c\right]\PP(\Ecal^c) +O(d^2)\cr&\lesssim\mathbb{E}\left[\sum_{\tau=1}^M \sum_{t\in \Tcal_\tau}\bar{r}(a^*)-\bar{r}(a_t)\mid\Ecal\right] + \frac{md}{T}+O(d^2)\cr &\lesssim  \sum_{\tau=1}^{M} T_\tau f_Th_T\sqrt{d\left(\max_{a\in \Acal}\left(\sum_{i\in a}\frac{\sigma_i^2(a)}{T_{\tau-1}}\right)+\frac{1}{T_{\tau-1}}\right)} 
    \cr &\le \sum_{\tau=1}^M \eta f_Th_T\sqrt{d\max_{a\in \Acal}\sum_{i\in a}\sigma_i^2(a)}
    \cr &\le \eta M f_Th_T\sqrt{d\max_{a\in \Acal}\sum_{i\in a}\sigma_i^2(a)}
    \cr &\lesssim f_Th_T\log\log(T)\sqrt{d\max_{a\in \Acal}\sum_{i\in a}\sigma_i^2(a)T},
\end{align*}
which concludes the proof with the fact that $f_T=O(\log(T))$ when $T$ is large enough.

\paragraph{Oracle Complexity Bounds.} Based on the oracle calls in Lines~\ref{line:oracle-call-elim-cov},\ref{line:oracle-call2-elim-cov}, \ref{line:oracle-call3-elim-cov}, and \ref{line:oracle-call4-elim-cov} of Algorithm~\ref{alg:elim-cov}, we observe that each epoch involves at most $d+d^2$ independent oracle queries (Lines~\ref{line:oracle-call-elim-cov},\ref{line:oracle-call2-elim-cov}) and two sequential oracle queries (Lines~\ref{line:oracle-call3-elim-cov},\ref{line:oracle-call4-elim-cov}). Since the total number of epochs is $M = \Theta(\log\log T)$, the overall adaptivity complexity is bounded by $\Theta(\log\log T)$, and the query complexity is bounded by $O(d^2 \log\log T)$.

\subsection{Rare Oracle Queries for General-Reward CMAB}\label{app:general}

\subsubsection{Adaptive Rare Oracle Queries for General Reward CMAB}

We first propose an algorithm (Algorithm~\ref{alg:adaptive-general}) for rare oracle queries for general reward CMAB using the adaptive framework as in Algorithm~\ref{alg:adaptive}. For $i\in[d]$, let $\hat{F}_{\tau,i}(x)$ be the fraction of the observed feedback from arm $i$ that is no longer than $0\le x\le 1$ before time $t$. By inspired by \cite{chen2016combinatorial}, for $i\in[d]$, we define  $\underline{\Dcal}_{t,i}$ to be the distribution whose CDF is, for some constant $C>0$,
\begin{align*}
    \underline{F}_{t,i}(x)=
\begin{cases}
 \max\{\hat{F}_{t,i}(x)-\sqrt{\frac{C\ln t}{n_{t,i}}},0\} & \text{if } 0\le x<1  \\
1  & \text{if } x =1.
\end{cases}
\end{align*}

Then we construct UCB for each action based on $\underline{\Dcal}_{t}$ as follows:

\begin{align}
    \label{eq:ucb-gen}
    r_t^{UCB}(a)=\mathbb{E}_{x\sim \underline{\Dcal}_{t}}[{r}(x,a)]
\end{align}

\begin{algorithm*}[ht]
  \caption{ Adaptive Rare Oracle Queries for General-Reward CMAB (\texttt{AROQ-GR-CMAB})}\label{alg:adaptive-general}
  \textbf{Init:} $\tau_i=1$ for all $i\in[d]$

 \For{$t=1,2...,T$}{

\For{ $i\in[d]$ s.t. $|\Tcal_i(\tau_i)|\ge 1+\sqrt{{Tm\cdot |\Tcal_i(\tau_i-1)|}/{d}}$}
{

$\tau_i\leftarrow \tau_i+1$, $\Tcal_i(\tau_i)\leftarrow \emptyset$

$Update\leftarrow True$

}
\If{$Update=True$}
{$a_{t}\leftarrow \arg\max_{a\in \Acal}r_t^{UCB}(a)$  with \eqref{eq:ucb-gen}  
 \hfill \textit{// Oracle Query}


$Update\leftarrow False$
}
\Else{$a_t\leftarrow a_{t-1}$
}

Play $a_t$ and observe feedback $y_{t,i}$ for $i\in a_t$

$\Tcal_i(\tau_i)\leftarrow \Tcal_i(\tau_i) \cup \{t\}$ for all $i\in a_t$
 
}
 
\end{algorithm*}

\subsubsection{Scheduled Rare Oracle Queries for General Reward CMAB}
    For $i\in[d]$, let $\hat{F}_{\tau,i}(x)$ be the fraction of the observed feedback from arm $i$  that is no longer than $0\le x\le 1$ before epoch $\tau$. Then, for some constant $C>0$, we define  $\underline{\Dcal}_{\tau,i}$ to be the distribution whose CDF is
\begin{align*}
    \underline{F}_{\tau,i}(x)=
\begin{cases}
 \max\{\hat{F}_{\tau,i}(x)-\sqrt{\frac{C\ln T}{n_{\tau,i}}},0\} & \text{if } 0\le x<1  \\
1  & \text{if } x =1,
\end{cases}
\end{align*}
and define  $\overline{\Dcal}_{\tau,i}$ to be the distribution whose CDF is
\begin{align*}
    \overline{F}_{\tau,i}(x)=
\begin{cases}
 \min\{\hat{F}_{\tau,i}(x)+\sqrt{\frac{C\ln T}{n_{\tau,i}}},1\} & \text{if } 0\le x<1  \\
1  & \text{if } x =1.
\end{cases}
\end{align*}

We construct UCB  and LCB for each action based on  $\underline{\Dcal}_{\tau}$ and $\overline{\Dcal}_{\tau}$, respectively, as follows:

\begin{align}
    \label{eq:ucb-gen2}
    r_\tau^{UCB}(a)=\mathbb{E}_{x\sim \underline{\Dcal}_{\tau}}=[\bar{r}(x,a)] \text{ and }
    r_\tau^{LCB}(a)=\mathbb{E}_{x\sim \overline{\Dcal}_{\tau}}=[\bar{r}(x,a)]
\end{align}

 Let grid $\Tcal=\{t_1(=1),\dots,t_M(=T)\}$, where $t_{\tau}=\eta \sqrt{t_{\tau-1}}$ and $\eta=T^{\frac{1}{2-2^{1-M}}}$ for $M>0$. We set $M=\Theta(\log\log(T))$.
\begin{algorithm*}[ht]
  \caption{Scheduled Rare Oracle Queries for General-Reward CMAB (\texttt{SROQ-GR-CMAB})}\label{alg:elim-general}
  \KwIn{$\mathcal{T}$}

 \For{$\tau=1,2,\dots,M$}{
 
Update $\hat{F}_\tau(x)$

$a_\tau^{(i)}:= \argmax_{a\in \Acal_{\tau-1}:i \in a}r_\tau^{UCB}(a)$ for all $i\in \Ncal_{\tau-1}$
 with \eqref{eq:ucb-gen2} \label{line:elim-oracle-call1-general}  
  \hfill \textit{// Oracle Queries}


$\Ncal_{\tau} \leftarrow  \{i\in\Ncal_{\tau-1} \mid   r_{\tau}^{UCB}(a^{(i)}_\tau)\ge \max_{a \in  \Acal_{\tau-1}}{r}^{LCB}_{\tau}(a)\}$  with \eqref{eq:ucb-gen2} \label{line:elim-oracle-call2-general}  
 \hfill \textit{// Oracle Query}


$\Acal_{\tau}\leftarrow \{a\in \Acal_{\tau-1}\mid a_i=0 \text{ for all } i\in [d]/\Ncal_\tau\}$

$\Tcal_\tau\leftarrow [t_{\tau},t_{\tau+1}-1]$

\For{$t\in \Tcal_\tau$}
{

$i\leftarrow$ $\bigl((t-1) \bmod|\Ncal_\tau|+1\bigr)$-th element in $\Ncal_\tau$

Play $a_t=a_\tau^{(i)}$ and observe feedback $y_{t,i}$ for $i\in a_t$  
}}
 
\end{algorithm*}

\subsection{Proof of Theorem~\ref{thm:adaptive-general}}\label{app:regret-adaptive-general}
Let $\tau_i(t)$ be the value of $\tau_i$ at time $t$ in the algorithm. Then $\tau_i(T)$ represents the number of updates, each update requiring Oracle queries, up to $T$ from arm $i$. For a slight abuse of notation, we use $\Tcal_i(\tau_i)$ for the set $\Tcal_i(\tau_i)$ in the algorithm at the last time step $T$. For the bound of the total Oracle queries up to $T$, from Lemma~\ref{lem:oracle-call-bd}, we have
\begin{align}
    \EE\left[\sum_{i\in [d]}\tau_i(T)\right]=O(d\log\log(Tm/d)). \label{eq:oracle-call-bd-general}
\end{align}

For ease of presentation, we use $\bar{r}_\Dcal(a)=\EE_{X\sim \Dcal}[r(a,X)]$.
\begin{lemma}[Lemma 3 in \cite{chen2016combinatorial}]\label{lem:distribution-bd} Let $\PP=\PP_1\times \dots\times \PP_d$ and $\PP'=\PP_1' \times \dots \times \PP_d'$ be two probability distributions $\Dcal$ and $\Dcal'$, respectively, over $[0,1]^d$. Let $F_i$ and $F_i'$ be the CDFs of $\PP_i$ and $\PP_i'$, respectively for $i\in[d]$.
Suppose each $\PP_i$ is a discrete distribution with finite support.
\begin{enumerate}[label=(\alph*)]
    \item If we have $F_i'(x)\le F_i(x)$ for any $i\in[d], x\in[0,1]$, then for any $a\in \Acal$, we have $\bar{r}_{\Dcal'}(a)\ge \bar{r}_\Dcal(a)$.
    \item  If we have $F_i(x)-F_i'(x)\le z_i$ with $z_i>0$ for any $i\in[d], x\in[0,1]$, then for any $a\in\Acal$, we have $\bar{r}_{\Dcal'}(a)-\bar{r}_{\Dcal}(a)\le 2L\sum_{i\in a}z_i$.
\end{enumerate}
\end{lemma}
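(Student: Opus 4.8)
I would prove the two parts separately, treating (a) as a standard first-order stochastic dominance statement and (b) as the substantive one, where the key realization is that a direct coupling bound fails and a layer-cake plus hybrid argument is needed instead. Throughout I use that $r(a,\cdot)$ is coordinatewise nondecreasing (the monotone-reward assumption), takes values in $[0,L]$, and depends only on the coordinates $\{y_i : i\in a\}$ (as is standard for semi-bandit rewards and consistent with the $\sum_{i\in a}$ form of the bound), and that each $\PP_i,\PP_i'$ is supported on $[0,1]$.

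For part (a), the plan is to build the \emph{monotone (quantile) coupling}. Draw independent $U_1,\dots,U_d\sim\mathrm{Unif}[0,1]$ and set $X_i=F_i^{-1}(U_i)$ and $X_i'=F_i'^{-1}(U_i)$, where $F^{-1}(u)=\inf\{x:F(x)\ge u\}$ is the generalized inverse. Then $X=(X_1,\dots,X_d)\sim\Dcal$ and $X'=(X_1',\dots,X_d')\sim\Dcal'$ since each $U_i$ is uniform. The hypothesis $F_i'(x)\le F_i(x)$ for all $x$ gives $\{x:F_i'(x)\ge u\}\subseteq\{x:F_i(x)\ge u\}$, hence $F_i'^{-1}(u)\ge F_i^{-1}(u)$ for every $u$, i.e.\ $X_i'\ge X_i$ almost surely for every $i$. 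Monotonicity of $r(a,\cdot)$ then yields $r(a,X)\le r(a,X')$ almost surely, and taking expectations gives $\bar{r}_{\Dcal}(a)\le\bar{r}_{\Dcal'}(a)$.

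For part (b) I would \emph{not} couple directly: under the quantile coupling $\PP(X_i'>X_i)$ can be $\Theta(1)$ (for instance when $F_i=\min\{F_i'+z_i,1\}$ uniformly), so any bound of the form $L\cdot\PP(\text{mismatch})$ is useless. Instead the plan is a layer-cake decomposition. Since $r(a,X)\in[0,L]$, I write $\bar{r}_{\Dcal'}(a)-\bar{r}_{\Dcal}(a)=\int_0^L\big(\PP_{\Dcal'}(X'\in A_t)-\PP_{\Dcal}(X\in A_t)\big)\,dt$, where $A_t=\{y:r(a,y)>t\}$ is, by monotonicity, an \emph{up-set} depending only on the coordinates in $a$. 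It then suffices to show that for every such up-set $A$ one has $\PP_{\Dcal'}(A)-\PP_{\Dcal}(A)\le\sum_{i\in a}z_i$; integrating over $t\in[0,L]$ gives $\bar{r}_{\Dcal'}(a)-\bar{r}_{\Dcal}(a)\le L\sum_{i\in a}z_i\le 2L\sum_{i\in a}z_i$.

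The up-set inequality I would establish by a \emph{hybrid argument} that swaps marginals one coordinate at a time. Define $\Dcal^{(k)}$ with marginals $\PP_1',\dots,\PP_k',\PP_{k+1},\dots,\PP_d$, so $\Dcal^{(0)}=\Dcal$ and $\Dcal^{(d)}=\Dcal'$, and telescope $\PP_{\Dcal'}(A)-\PP_{\Dcal}(A)=\sum_{k=1}^d\big(\PP_{\Dcal^{(k)}}(A)-\PP_{\Dcal^{(k-1)}}(A)\big)$. For $k\notin a$ the term vanishes since $\mathbbm{1}_A$ does not depend on $y_k$. For $k\in a$, condition on the remaining coordinates $y_{-k}$; because $A$ is an up-set its section $\{y_k:(y_k,y_{-k})\in A\}$ is an up-ray $(c,1]$ or $[c,1]$, and the two hybrids differ only in the $k$-th marginal, so the term equals $\EE_{y_{-k}}[(F_k-F_k')(c)]$ (or its left-limit version), which is at most $z_k$ by the hypothesis $F_k(x)-F_k'(x)\le z_k$ for all $x$. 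Summing over $k\in a$ closes the argument. I expect this to be the main obstacle: the point is to recognize that the probability \emph{mass} crossing any single threshold in one coordinate is controlled by the uniform gap $z_k$, whereas the probability that the coupled samples differ at all is not—so it is the level-set/hybrid route, not a direct coupling, that delivers the $z_k$ (hence $2Lz_k$) scaling.
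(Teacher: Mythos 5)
Your proof is correct, but there is nothing in the paper to compare it against line-by-line: the paper never proves this statement, it imports it verbatim as Lemma 3 of \cite{chen2016combinatorial} and uses it as a black box in the proofs of Theorems~\ref{thm:adaptive-general} and \ref{thm:elim-general}. Judged against the original argument it replaces, your route is genuinely different and in two respects stronger. Part (a) via the quantile (monotone) coupling is the standard stochastic-dominance argument and is unobjectionable. For part (b), your layer-cake decomposition over the up-sets $A_t=\{y: r(a,y)>t\}$ combined with the one-coordinate-at-a-time hybrid swap is clean, and--crucially--it never uses the finite-support hypothesis, so your version of the lemma holds for arbitrary product distributions on $[0,1]^d$, whereas the original proof works through the discrete support structure (which is why the lemma statement carries that hypothesis). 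Your argument also yields the sharper constant $L\sum_{i\in a}z_i$ in place of $2L\sum_{i\in a}z_i$; the factor $2$ in the quoted statement is inherited from the original proof and your derivation shows it is not needed, which of course still implies the stated bound. Two hypotheses you invoke deserve flagging because the lemma statement does not repeat them: coordinatewise monotonicity of $r(a,\cdot)$ (this is the paper's stated monotone-reward assumption, so fine), and the fact that $r(a,y)$ depends only on the coordinates $\{y_i : i\in a\}$ -- the latter is implicit in the semi-bandit model and is precisely what makes the bound scale as $\sum_{i\in a}z_i$ rather than $\sum_{i\in[d]}z_i$, so you were right to state it explicitly. Finally, your diagnosis that a direct coupling bound of the form $L\cdot\PP(X\neq X')$ must fail (e.g., under a uniform CDF shift the mismatch probability is $\Theta(1)$) is accurate, and correctly motivates the level-set route.
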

    

\begin{lemma}[Dvoretzky-Kiefer-Wolfowitz inequality]\label{lem:cdf-con} For i.i.d. samples of $X_1,\dots,X_n$ drawn from a distribution $\Dcal$, let empirical CDF $\hat{F}_n(x)=\frac{1}{n}\sum_{i=1}^n \mathbbm{1}(X_i\le x)$. Then,
    for any $\epsilon>0$ and any $n\in \NN$, we have
    \[\PP\left[\sup_{x\in \RR}|\hat{F}_n(x)-F(x)|\ge \epsilon\right]\le 2 \exp^{-2n\epsilon^2}.\]
\end{lemma}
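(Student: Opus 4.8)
The plan is to prove the two-sided bound through three reductions, the last of which carries essentially all the difficulty. First I would reduce to the uniform distribution on $[0,1]$. Writing $F^{-1}(u)=\inf\{x:F(x)\ge u\}$ for the generalized inverse and coupling $X_i=F^{-1}(U_i)$ with $U_1,\dots,U_n$ i.i.d.\ uniform on $[0,1]$, a standard monotonicity argument shows that $\sup_{x\in\RR}|\hat{F}_n(x)-F(x)|$ is stochastically dominated by $\sup_{u\in[0,1]}|\hat{G}_n(u)-u|$, where $\hat{G}_n$ is the empirical CDF of the $U_i$; equality holds when $F$ is continuous, and the domination is what lets the bound persist for the general (including discrete) $\Dcal$ in the statement. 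It therefore suffices to prove the inequality for the uniform empirical process, where the supremum is attained at a jump and equals a finite maximum over the order statistics $U_{(1)}\le\cdots\le U_{(n)}$.

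Second, I would split the two-sided deviation into its upper and lower parts. Setting $D_n^{+}=\sup_u(\hat{G}_n(u)-u)$ and $D_n^{-}=\sup_u(u-\hat{G}_n(u))$, the reflection $U_i\mapsto 1-U_i$ shows that $D_n^{+}$ and $D_n^{-}$ are equal in distribution, so a union bound gives $\PP[\sup_u|\hat{G}_n(u)-u|\ge\epsilon]\le 2\,\PP[D_n^{+}\ge\epsilon]$. This is precisely where the factor $2$ in the stated bound originates, and it reduces everything to the one-sided estimate $\PP[D_n^{+}\ge\epsilon]\le e^{-2n\epsilon^2}$.

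Third, and this is the main obstacle, I would establish the sharp one-sided bound. The event $\{D_n^{+}\ge\epsilon\}$ is the event that the empirical distribution function crosses the line $u\mapsto u+\epsilon$, i.e.\ that $U_{(k)}\le k/n-\epsilon$ for some $k\in[n]$. Obtaining the correct exponential \emph{rate} $e^{-\Theta(n\epsilon^2)}$ is routine: a pointwise Hoeffding bound on the Bernoulli average $\hat{G}_n(u)$ yields a tail $e^{-2n\epsilon^2}$ at each fixed $u$, and since $\hat{G}_n(u)-u$ increases by at most $\delta$ across an interval of length $\delta$, a union bound over an $O(1/\epsilon)$-point grid already gives a bound of the right form, but with a degraded exponent constant and a polynomial-in-$1/\epsilon$ prefactor. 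The difficulty is removing this prefactor and recovering the exact constant $2$ with no polynomial loss. I would do this via Massart's argument: invoke the cyclic/ballot-type exact expression for the crossing probability of the uniform empirical process (Daniels--Smirnov--Takács) and bound the resulting sum by $e^{-2n\epsilon^2}$ through a careful convexity estimate. Controlling the supremum with \emph{no} union-bound loss is the delicate step, and the sharp constant is exactly the content of Massart's theorem; I note that any fixed constant in the exponent would already suffice for the way this lemma is used in Appendix~\ref{app:regret-adaptive-general}, where it only enters through the choice of the confidence width $\sqrt{C\ln T/n_{\tau,i}}$.
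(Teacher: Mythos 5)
There is nothing to compare against on the paper's side: the paper states this lemma without proof, as a citation of a classical result (the Dvoretzky--Kiefer--Wolfowitz inequality with Massart's tight constant), exactly as it does with the Azuma--Hoeffding bound (Lemma~\ref{lem:azu}); the lemma is then used as a black box to define the high-probability events in Appendices~\ref{app:regret-adaptive-general} and~\ref{app:regret-elim-general}. Your outline is a correct reconstruction of how that classical result is actually proven: the quantile coupling $X_i=F^{-1}(U_i)$ indeed gives $\hat{F}_n(x)=\hat{G}_n(F(x))$ and hence stochastic domination of the supremum for arbitrary (including discrete) $F$; the reflection symmetry correctly accounts for the factor $2$; and you correctly locate all of the genuine difficulty in the sharp one-sided bound $\PP[D_n^{+}\ge\epsilon]\le e^{-2n\epsilon^2}$. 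The one caveat is that your third step \emph{invokes} Massart's theorem (via the exact crossing-probability formula and his convexity estimate) rather than proving it, so as a proof your proposal bottoms out in the same literature citation the paper relies on --- which is acceptable here, and your closing observation is also accurate: for the way this lemma enters the analysis (only through confidence widths of the form $\sqrt{C\ln T/n_{\tau,i}}$), any universal constant in the exponent and any constant prefactor would serve equally well.
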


From the above lemma, we define favorable event $\Ecal_t=\{\sup_{x\in[0,1]}|\hat{F}_{i,n_{s,i}}(x)-F_i(x)|\le \sqrt{\frac{3\ln s}{2 n_{s,i}}} \:  \forall s\in [1,t] \:  \forall i\in[d]\}$, which holds with  probability at least $1-O(d/t^2)$.
 Recall that $r_t^{UCB}(a)=\EE_{y\sim \underline{\Dcal}_t}[\bar{r}(y,a)]$.

  For ease of presentation, we  define auxiliary variables  $\underline{\Dcal}_{t}'$ and $n_{t,i}'$ for $i\in[d]$ and time step $t\in[T]$ as follows:
If the selected action at time $t$ is newly updated in the algorithm (\texttt{Update} = \texttt{True}), then
    $
    \underline{\Dcal}_{t}' = \underline{\Dcal}_{t}$ and $n_{t,i}'=n_{t,i}$.
    Otherwise, if the previously selected action is maintained (\texttt{Update} = \texttt{False}), then
    $
    \underline{\Dcal}_{t}' = \underline{\Dcal}_{t-1}'
    $ and  $n_{t,i}'=n_{t-1,i}'$.
Using these adjusted statistics, we define the UCB-based optimistic reward estimate for any action $a$  as
\begin{align*}
    \tilde{r}_t^{UCB}(a)=\mathbb{E}_{x\sim \underline{\Dcal}_{t}'}[\bar{r}(x,a)].
\end{align*}
 
Now we provide a bound for the regret as follows:
\begin{align*}
    &\mathcal{R}(T)=\EE\left[\sum_{t\in[T]}(\bar{r}(a^*)-\bar{r}(a_t))\mathbbm{1}(\Ecal_t) \right]+\EE\left[\sum_{t\in[T]}(\bar{r}(a^*)-\bar{r}(a_t))\mathbbm{1}(\Ecal^c_t) \right]
    \cr &\le\EE\left[\sum_{t\in [T]}(\bar{r}(a^*)-\bar{r}(a_t))\mathbbm{1}(\Ecal_t)\right]+L\sum_{t\in[T]}\sum_{l=1}^{t-1}\sum_{i\in[d]}\PP\left(\sup_{x\in[0,1]}|\hat{F}_{i,l}(x)-F_i(x)|\ge \sqrt{\frac{3\ln t}{2l}}\right)
    \cr &\lesssim\EE\left[\sum_{t\in[T]}(\tilde{r}_t^{UCB}(a^*)-\bar{r}(a_t))\mathbbm{1}(\Ecal_t)\right]+Ld
        \cr &\lesssim\EE\left[\sum_{t\in[T]}(\tilde{r}_t^{UCB}(a_t)-\bar{r}(a_t))\mathbbm{1}(\Ecal_t)\right]
             \cr &\lesssim\EE\left[L\sum_{t\in[T]}\sum_{i\in a_t}\sqrt{\frac{\log T}{n_{t,i}'}}\right],
    \end{align*}
    where the second inequality is obtained from (a) in Lemma~\ref{lem:distribution-bd}  and the last inequality comes from $\Ecal_t$ and  (b) in Lemma~\ref{lem:distribution-bd}. 
    For bounding the last term, we have
    \begin{align*}
&\EE\left[L\sum_{t\in[T]}\sum_{i\in a_t}\sqrt{\frac{\log T}{n_{t,i}'}}\right]\cr
&=\EE\left[L\sum_{i\in[d]}\sum_{\tau\in[\tau_i(T)]}\sum_{t\in \Tcal_i(\tau)}\sqrt{\frac{\log T}{n_{t,i}'}}\right]\cr
&\le\EE\left[L\sum_{i\in[d]}\sum_{\tau\in[\tau_i(T)]}\sum_{t\in \Tcal_i(\tau)}\sqrt{\frac{\log T}{|\Tcal_i(\tau-1)|}}\right]
     \cr &= \EE\left[L\sum_{i\in[d]}\sum_{\tau\in[\tau_i(T)]}|\Tcal_i(\tau)|\sqrt{\frac{\log T}{|\Tcal_i(\tau-1)|}}\right]
      \cr &= \EE\left[L\sum_{i\in[d]}\sum_{\tau\in[\tau_i(T)]}\sqrt{\frac{Tm \cdot |\Tcal_i(\tau-1)|}{d}}\sqrt{\frac{\log T}{|\Tcal_i(\tau-1)|}}\right]
      \cr &= \EE\left[L\sum_{i\in[d]}\tau_i(T)\right]\sqrt{\frac{Tm \log T}{d}}\cr &\lesssim L\log\log(Tm/d)\sqrt{dmT\log T},
\end{align*}
where the first inequality is obtained from $n_{t,i}'\ge |\Tcal_i(\tau-1)|$ for $t\in \Tcal_i(\tau)$, the second equality is obtained from the condition of updates in the algorithm, and the last inequality is obtained from \eqref{eq:oracle-call-bd-general}.

\paragraph{Oracle Complexity Bounds.} Based on \eqref{eq:oracle-call-bd-general}, we can show that the oracle query complexity is bounded by $O(d\log\log(Tm/d))$. Since the epochs for each base arm are updated separately, the adaptivity complexity is also bounded by $O(d\log\log(Tm/d))$.

\subsection{Proof of Theorem~\ref{thm:elim-general}}\label{app:regret-elim-general}

From Lemma~\ref{lem:cdf-con}, we define 
 the event $\Ecal=\{\sup_{x\in[0,1]}|\hat{F}_{\tau,i}(x)-F_i(x)|\le \sqrt{\frac{3\ln T}{2 n_{\tau,i}}}  \: \forall \tau\in[M], \forall i\in[d]\}$.  Then, similar to Lemma~\ref{lem:opt}, under the event $\Ecal$, we show that $A_\tau$, activated arm set for $\tau$-th epoch, always contains the optimal arm $a^*$.

\begin{lemma}
    Under $\Ecal$, we can show that for all $\tau\in[M]$,
    $a^*\in \mathcal{A}_{\tau}.$\label{lem:opt-general}
\end{lemma}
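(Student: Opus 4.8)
The plan is to reproduce, essentially verbatim, the inductive argument behind the linear-reward Lemma~\ref{lem:opt}, so that the only genuinely new work is verifying that the distribution-based indices in \eqref{eq:ucb-gen2} are valid confidence bounds, namely that $r_\tau^{LCB}(a)\le \bar r(a)\le r_\tau^{UCB}(a)$ for all $a\in\Acal$ and all $\tau$ on the event $\Ecal$. Everything downstream of this sandwich is identical to the linear case.

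First I would establish the sandwich. On $\Ecal$ we have $\sup_{x}|\hat F_{\tau,i}(x)-F_i(x)|\le \sqrt{3\ln T/(2n_{\tau,i})}$, and choosing the constant $C\ge 3/2$ ensures $\sqrt{C\ln T/n_{\tau,i}}\ge\sqrt{3\ln T/(2n_{\tau,i})}$. For $0\le x<1$ the clipped lower CDF then satisfies $\underline F_{\tau,i}(x)=\max\{\hat F_{\tau,i}(x)-\sqrt{C\ln T/n_{\tau,i}},0\}\le F_i(x)$, since $\hat F_{\tau,i}(x)-\sqrt{C\ln T/n_{\tau,i}}\le F_i(x)$ on $\Ecal$ and $0\le F_i(x)$; at $x=1$ both CDFs equal $1$. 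Symmetrically $\overline F_{\tau,i}(x)\ge F_i(x)$ for all $x$. Applying Lemma~\ref{lem:distribution-bd}(a) with $\underline{\Dcal}_\tau$ playing the role of $\Dcal'$ and the true product distribution as $\Dcal$ gives $r_\tau^{UCB}(a)=\bar r_{\underline{\Dcal}_\tau}(a)\ge \bar r(a)$; the same lemma applied in the reverse direction with $\overline{\Dcal}_\tau$ gives $r_\tau^{LCB}(a)=\bar r_{\overline{\Dcal}_\tau}(a)\le \bar r(a)$. This yields the desired sandwich for every $a\in\Acal$ and every $\tau\in[M]$.

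With the sandwich in hand, I would prove the claim by induction on $\tau$. The base case is $a^*\in\Acal_0=\Acal$. For the inductive step, assume $a^*\in\Acal_\tau$ and fix any $i\in a^*$. Since $a^*\in\Acal_\tau$ contains $i$, the definition of $a_{\tau+1}^{(i)}$ as the $r_{\tau+1}^{UCB}$-maximizer over $\{a\in\Acal_\tau:i\in a\}$ gives $r_{\tau+1}^{UCB}(a_{\tau+1}^{(i)})\ge r_{\tau+1}^{UCB}(a^*)\ge \bar r(a^*)$. Because $a^*$ is optimal over $\Acal\supseteq\Acal_\tau$ and $r_{\tau+1}^{LCB}(a)\le \bar r(a)$ for all $a$, we also have $\bar r(a^*)\ge \max_{a\in\Acal_\tau}\bar r(a)\ge \max_{a\in\Acal_\tau}r_{\tau+1}^{LCB}(a)$. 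Chaining these,
\[
r_{\tau+1}^{UCB}(a_{\tau+1}^{(i)})\ge \max_{a\in\Acal_\tau}r_{\tau+1}^{LCB}(a),
\]
so the elimination rule in Line~\ref{line:elim-oracle-call2-general} retains $i$ in $\Ncal_{\tau+1}$. As this holds for every $i\in a^*$, no coordinate of $a^*$ is zeroed out when forming $\Acal_{\tau+1}$, hence $a^*\in\Acal_{\tau+1}$, completing the induction.

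The main obstacle is the first step rather than the induction: one must correctly route the pointwise CDF concentration of $\Ecal$ through the stochastic-dominance structure of Lemma~\ref{lem:distribution-bd}(a), which in turn relies on the monotonicity of the reward function (Assumption~1). This requires matching the tuning constant $C$ against the DKW deviation width and handling the boundary clipping at $x=1$ carefully. Once the confidence bounds are validated, the combinatorial elimination argument is a direct transcription of the linear-reward proof.
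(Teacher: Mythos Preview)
Your proposal is correct and follows essentially the same approach as the paper: establish the sandwich $r_\tau^{LCB}(a)\le\bar r(a)\le r_\tau^{UCB}(a)$ on $\Ecal$ via stochastic dominance (Lemma~\ref{lem:distribution-bd}(a)), then run the identical induction used for Lemma~\ref{lem:opt}. The paper's own proof simply asserts the sandwich under $\Ecal$ and transcribes the linear-reward induction, so your more explicit justification of the confidence bounds is, if anything, more thorough than what appears there.
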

\begin{proof}
The proof is the same as that of Lemma~\ref{lem:opt}.
    This can be shown by induction. Suppose $a^*\in\mathcal{A}_{\tau}$ for fixed $\tau\in[M]$. Under $\Ecal$, we have $\bar{r}(a)\le r_{\tau+1}^{UCB}(a)$ and $\bar{r}(a)\ge r_{\tau+1}^{LCB}(a)$ for any $a\in\Acal_{\tau}$. Then for any fixed $i\in a^*$, for any $a\in\Acal_{\tau}$, we can show that 
    \begin{align*}
        r_{\tau+1}^{UCB}(a^{(i)}_{\tau+1})\ge r_{\tau+1}^{UCB}(a^*)\ge \bar{r}(a^*)\ge r^{LCB}(a),
    \end{align*}
    where the first inequality is obtained from the definition of $a_{\tau+1}^{(i)}$ with $a^*\in \Acal_\tau$. This implies that 
\begin{equation*}
r^{UCB}_{\tau+1}(a_{\tau+1}^{(i)})\ge\max_{a\in \mathcal{A}_{\tau}}r^{LCB}_{\tau+1}(a),
\end{equation*}
 which implies that $i\in a^*$ is not eliminated from the elimination condition at the $\tau+1$-th epoch. This holds for all $i\in a^*$ so that $a^*\in \mathcal{A}_{\tau+1}$. With $a^*\in \mathcal{A}_0=\Acal$,  we can conclude the induction. 
\end{proof}
Under $\Ecal$, we have
\begin{align*}
    \bar{r}(a^*)-\bar{r}(a_{\tau}^{(i)})&\le r^{LCB}_{\tau}(a^*)+2L\sum_{j\in a^*}\sqrt{\frac{1.5 \log T}{n_{\tau,j}}}-r^{UCB}_{\tau}(a_{\tau}^{(i)})+2L\sum_{j\in a_\tau^{(i)}}\sqrt{\frac{1.5 \log T}{n_{\tau,j}}}
    \cr  &\lesssim L\sum_{j\in a^*}\sqrt{\frac{ \log T}{n_{\tau,j}}}+L\sum_{j\in a_\tau^{(i)}}\sqrt{\frac{\log T}{n_{\tau,j}}}
    \cr &\lesssim L\sum_{j\in a^*}\sqrt{\frac{d\log T}{|\Tcal_{\tau-1}|}}+L\sum_{j\in a_\tau^{(i)}}\sqrt{\frac{d\log T}{|\Tcal_{\tau-1}|}}
    \cr &\lesssim Lm\sqrt{\frac{d\log T}{|\Tcal_{\tau-1}|}},
\end{align*}
where the first inequality is obtained from $\Ecal$, the second inequality comes from the fact that $a^*\in\mathcal{A}_{\tau-1}$ from Lemma~\ref{lem:opt-general}, and $\max_{a\in \mathcal{A}_{\tau-1}}r^{LCB}_{\tau}(a)\le r^{UCB}_{\tau}(a_{\tau}^{(i)})$ from the algorithm, and the third inequality is obtained from $n_{\tau,i}\ge \sum_{t\in \Tcal_{\tau-1}}\mathbbm{1}(i\in a_t)\gtrsim |\Tcal_{\tau-1}|/d$ from the exploration in the algorithm at the $\tau-1$-th epoch.

Finally, we can show that
\begin{align*}
    \mathcal{R}(T)&\le\mathbb{E}\left[\sum_{\tau=1}^M \sum_{t\in \Tcal_\tau}\bar{r}(a^*)-\bar{r}(a_t)\mid\Ecal\right]\PP(\Ecal) + \mathbb{E}\left[\sum_{\tau=1}^M \sum_{t\in \Tcal_\tau}\bar{r}(a^*)-\bar{r}(a_t)\mid\Ecal^c\right]\PP(\Ecal^c) \cr&\lesssim\mathbb{E}\left[\sum_{\tau=1}^M \sum_{t\in \Tcal_\tau}\bar{r}(a^*)-\bar{r}(a_t)\mid\Ecal\right] +L\sum_{t\in[T]}\sum_{l=1}^{t-1}\sum_{i\in[d]}\PP\left(\sup_{x\in[0,1]}|\hat{F}_{i,l}(x)-F_i(x)|\ge L\sqrt{\frac{3\ln t}{2l}}\right)\cr &\lesssim  \sum_{\tau=1}^{M} |\Tcal_\tau| Lm\sqrt{\frac{d \log T}{|\Tcal_{\tau-1}|}}
    \cr &\le \sum_{\tau=1}^M \eta Lm\sqrt{d|\Tcal_{\tau-1}|\log T}
    \cr &=Lm \eta\sqrt{dMT\log T}
    \cr &\lesssim Lm\sqrt{dT\log (T) \log\log(T)},
\end{align*}
which concludes the proof.

\paragraph{Oracle Complexity Bounds.} Based on the oracle calls in Lines~\ref{line:elim-oracle-call1-general} and \ref{line:elim-oracle-call2-general} of Algorithm~\ref{alg:elim-general}, we observe that each epoch involves at most $d$ independent oracle queries (Line~\ref{line:elim-oracle-call1-general}) and one sequential oracle query (Line~\ref{line:elim-oracle-call2-general}). Since the total number of epochs is $M = \Theta(\log\log T)$, the overall adaptivity complexity is bounded by $\Theta(\log\log T)$, and the query complexity is bounded by $O(d \log\log T)$.

\subsection{Extension to Continuous Distributions for General Reward Functions}\label{app:discrete}

We now consider the setting where each $y_{t,i} \in [0,1]$ is drawn from a continuous distribution $\Dcal$. In this case, we impose the additional Lipschitz-continuity assumption on the reward function as follows.

\begin{assumption}
There exists $C>0$ such that for any $a\in \Acal$ and any $y,y'\in[0,1]^m$, we have $|r(a,y)-r(a,y')|\le C\sum_{i\in a}|y_i-y_i'|$.
\end{assumption}

\begin{algorithm*}[ht]
  \caption{ Discretizations  \citep{chen2016combinatorial}}\label{alg:discrete}
 Set the number of intervals  $s\leftarrow \lceil C\sqrt{mT}\rceil$
 
\For{$j = 1$ to $s$}{
    Define interval 
    $
    I_j \gets 
    \begin{cases}
    [0, \frac{1}{s}], & \text{if } j = 1 \\
    \left( \frac{j-1}{s}, \frac{j}{s} \right], & \text{if } j = 2, \dots, s
    \end{cases}
    $
}

 Invoke Algorithm~\ref{alg:adaptive-general} or Algorithm~\ref{alg:elim-general} for $T$ rounds with the following modification:

 \For{$i\in a_t$}
 { Upon observing an outcome $y_{t,i} \in [0,1]$,  identify $j \in [s]$ such that $y_{t,i} \in I_j$

 Treat the observation as $\frac{j}{s}$
  }
 
\end{algorithm*}
Here we provide a regret bound for the discretization of Algorithm~\ref{alg:adaptive-general}, and that of Algorithm~\ref{alg:elim-general} is omitted due to its redundancy.

\begin{theorem}
    Algorithm~\ref{alg:discrete} with Algorithm~\ref{alg:adaptive-general} achieves a regret bound of  $\Rcal(T)=\tilde{O}(L\sqrt{mdT})$
\end{theorem}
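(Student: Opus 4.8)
The plan is to treat the discretized process as a genuine finite-support general-reward CMAB instance and reduce to Theorem~\ref{thm:adaptive-general}, paying only a lower-order discretization penalty. First I would introduce the discretized feedback $\tilde{y}_{t,i} := j/s$ whenever $y_{t,i}\in I_j$, and let $\tilde{\Dcal}$ be the induced (discrete, finite-support) distribution on the grid $\{1/s,\dots,1\}^d$. By construction of the intervals $I_j$, every coordinate satisfies $0\le \tilde{y}_{t,i}-y_{t,i}\le 1/s$; in particular $\tilde{y}_t\ge y_t$ coordinate-wise. Writing $\bar{r}(a)=\EE_{y\sim\Dcal}[r(a,y)]$ and $\bar{r}^{\mathrm{disc}}(a)=\EE_{\tilde{y}\sim\tilde{\Dcal}}[r(a,\tilde{y})]$, the Lipschitz assumption yields, for every $a\in\Acal$,
$$0\le \bar{r}^{\mathrm{disc}}(a)-\bar{r}(a)\le C\,\EE\Big[\sum_{i\in a}|\tilde{y}_i-y_i|\Big]\le \frac{Cm}{s},$$
where the lower bound uses monotonicity together with $\tilde{y}\ge y$, and the upper bound uses $|\tilde{y}_i-y_i|\le 1/s$ with $\|a\|_0\le m$.

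Next I would decompose the true regret against the true optimum $a^*$ by inserting the discretized rewards of both the comparator and the played action:
$$\bar{r}(a^*)-\bar{r}(a_t)=\underbrace{\big(\bar{r}(a^*)-\bar{r}^{\mathrm{disc}}(a^*)\big)}_{\le 0}+\big(\bar{r}^{\mathrm{disc}}(a^*)-\bar{r}^{\mathrm{disc}}(a_t)\big)+\underbrace{\big(\bar{r}^{\mathrm{disc}}(a_t)-\bar{r}(a_t)\big)}_{\le Cm/s}.$$
The middle term is exactly the regret incurred by Algorithm~\ref{alg:adaptive-general} when run on the discretized instance, measured against $a^*$, which is a feasible (possibly suboptimal) comparator; hence it is upper bounded by the regret against the discretized optimum $\tilde{a}^*=\argmax_{a}\bar{r}^{\mathrm{disc}}(a)$. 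Since $\tilde{\Dcal}$ is discrete with finite support and $r$ remains monotone with values in $[0,L]$, the discretized instance satisfies all the hypotheses of Theorem~\ref{thm:adaptive-general} (so the CDF concentration via Lemma~\ref{lem:cdf-con} and the reward-transfer bounds of Lemma~\ref{lem:distribution-bd} apply verbatim to $\tilde{\Dcal}$), giving cumulative discretized regret $\tilde{O}(L\sqrt{mdT})$ together with the claimed $O(d\log\log(Tm/d))$ oracle adaptivity and query complexities.

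Finally I would collect the error terms. Summing over $t\in[T]$, the two one-sided discretization contributions total at most $T\cdot Cm/s$, and with $s=\lceil C\sqrt{mT}\rceil\ge C\sqrt{mT}$ this is at most $T\cdot\sqrt{m/T}=\sqrt{mT}$, which is dominated by $\tilde{O}(L\sqrt{mdT})$. Adding this to the discretized regret bound yields $\Rcal(T)=\tilde{O}(L\sqrt{mdT})$, completing the argument. The only point requiring care, rather than genuine difficulty, is verifying that running the algorithm on rounded observations corresponds precisely to a general-reward CMAB instance with distribution $\tilde{\Dcal}$, and confirming that the discretization error stays one-sided and of lower order; there is no substantive new obstacle beyond bookkeeping the comparator and action error terms.
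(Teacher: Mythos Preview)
Your proposal is correct and follows essentially the same route as the paper: introduce the discretized distribution $\tilde{\Dcal}$, bound the per-round discretization error by $Cm/s\le\sqrt{m/T}$ via Lipschitz continuity, reduce the remaining term to the finite-support setting of Theorem~\ref{thm:adaptive-general}, and absorb the accumulated $\sqrt{mT}$ penalty. The paper invokes the discretization bound as a cited lemma (Lemma~7 of \citet{chen2016combinatorial}) and re-runs the proof of Theorem~\ref{thm:adaptive-general} inline rather than citing it as a black box, whereas you derive the one-sided discretization estimate directly from monotonicity and Lipschitzness and then appeal to Theorem~\ref{thm:adaptive-general}; your explicit handling of the comparator mismatch (noting that regret against $a^*$ is dominated by regret against $\tilde{a}^*$) is exactly the right observation, and the two arguments are otherwise interchangeable.
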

\begin{proof}
We define $\tilde{\Dcal}$ to be the discretized distribution of ${\Dcal}$. For ease of presentation, we use $\bar{r}_\Dcal(a)=\EE_{X\sim \Dcal}[r(a,X)]$.
    \begin{lemma}[Lemma 7 in \cite{chen2016combinatorial}] For any $a\in \Acal$, we have
        $|\bar{r}_\Dcal(a)-\bar{r}_{\tilde{\Dcal}}(a)|\le \sqrt{\frac{m}{T}}$.
    \end{lemma}

Then, from the above lemma, we have 
\begin{align}
\Rcal(T)&=\EE\left[\sum_{t\in[T]}\bar{r}_\Dcal(a^*)-\bar{r}_\Dcal(a_t)\right]\le \EE\left[\sum_{t\in[T]}\bar{r}_{\tilde{\Dcal}}(a^*)-\bar{r}_{\tilde{\Dcal}}(a_t)\right]+\sum_{t\in[T]}\sqrt{\frac{m}{T}}\cr &= \EE\left[\sum_{t\in[T]}\bar{r}_{\tilde{\Dcal}}(a^*)-\bar{r}_{\tilde{\Dcal}}(a_t)\right]+\sqrt{mT}.\label{eq:R-decom}
\end{align}

For the regret bound of the first term in the above, we treat $\tilde{\Dcal}$ as the true distribution. By following the proof steps in Theorem~\ref{thm:adaptive-general},  we provide a bound for the regret as follows:
\begin{align*}
    &\EE\left[\sum_{t\in[T]}\bar{r}_{\tilde{\Dcal}}(a^*)-\bar{r}_{\tilde{\Dcal}}(a_t)\right]\cr &=\EE\left[\sum_{t\in[T]}(\bar{r}_{\tilde{\Dcal}}(a^*)-\bar{r}_{\tilde{\Dcal}}(a_t))\mathbbm{1}(\Ecal_t) \right]+\EE\left[\sum_{t\in[T]}(\bar{r}_{\tilde{\Dcal}}(a^*)-\bar{r}_{\tilde{\Dcal}}(a_t))\mathbbm{1}(\Ecal^c_t) \right]
    \cr &\le\EE\left[\sum_{t\in [T]}(\bar{r}_{\tilde{\Dcal}}(a^*)-\bar{r}_{\tilde{\Dcal}}(a_t))\mathbbm{1}(\Ecal_t)\right]+L\sum_{t\in[T]}\sum_{l=1}^{t-1}\sum_{i\in[d]}\PP\left(\sup_{x\in[0,1]}|\hat{F}_{i,l}(x)-F_i(x)|\ge \sqrt{\frac{3\ln t}{2l}}\right)
    \cr &\lesssim\EE\left[\sum_{t\in[T]}(\tilde{r}_t^{UCB}(a^*)-\bar{r}_{\tilde{\Dcal}}(a_t))\mathbbm{1}(\Ecal_t)\right]+Ld
        \cr &\lesssim\EE\left[\sum_{t\in[T]}(\tilde{r}_t^{UCB}(a_t)-\bar{r}_{\tilde{\Dcal}}(a_t))\mathbbm{1}(\Ecal_t)\right]
             \cr &\lesssim\EE\left[L\sum_{t\in[T]}\sum_{i\in a_t}\sqrt{\frac{\log T}{n_{t,i}'}}\right],
    \end{align*}
    where the second inequality is obtained from (a) in Lemma~\ref{lem:distribution-bd}  and the last inequality comes from $\Ecal_t$ and  (b) in Lemma~\ref{lem:distribution-bd}. 
    For bounding the last term, we have
    \begin{align*}
&\EE\left[L\sum_{t\in[T]}\sum_{i\in a_t}\sqrt{\frac{\log T}{n_{t,i}'}}\right]\cr
&=\EE\left[L\sum_{i\in[d]}\sum_{\tau\in[\tau_i(T)]}\sum_{t\in \Tcal_i(\tau)}\sqrt{\frac{\log T}{n_{t,i}'}}\right]\cr
&\le\EE\left[L\sum_{i\in[d]}\sum_{\tau\in[\tau_i(T)]}\sum_{t\in \Tcal_i(\tau)}\sqrt{\frac{\log T}{|\Tcal_i(\tau-1)|}}\right]
     \cr &= \EE\left[L\sum_{i\in[d]}\sum_{\tau\in[\tau_i(T)]}|\Tcal_i(\tau)|\sqrt{\frac{\log T}{|\Tcal_i(\tau-1)|}}\right]
      \cr &= \EE\left[L\sum_{i\in[d]}\sum_{\tau\in[\tau_i(T)]}\sqrt{\frac{Tm \cdot |\Tcal_i(\tau-1)|}{d}}\sqrt{\frac{\log T}{|\Tcal_i(\tau-1)|}}\right]
      \cr &= \EE\left[L\sum_{i\in[d]}\tau_i(T)\right]\sqrt{\frac{Tm \log T}{d}}\cr &\lesssim L\log\log(Tm/d)\sqrt{dmT\log T},
\end{align*}
where the first inequality is obtained from $n_{t,i}'\ge |\Tcal_i(\tau-1)|$ for $t\in \Tcal_i(\tau)$, the second equality is obtained from the condition of updates in the algorithm, and the last inequality is obtained from \eqref{eq:oracle-call-bd-general}. This concludes the proof with \eqref{eq:R-decom}.

\end{proof}
\subsection{Additional Experiments}\label{app:exp}
Here, we present additional experimental results for the covariance-adaptive algorithms and for the setting with a general reward function. The following results confirm that, consistent with the observations in the worst-case linear reward setting (Section~\ref{sec:exp}), our algorithms achieve significantly reduced oracle usage and improved computational efficiency, while maintaining tight regret performance.
\subsubsection{Covariance-adaptive}
Here, we present experiments (Figure~\ref{fig:exp2}) on covariance-adaptive frameworks under linear reward settings. The mean reward of each base arm is independently sampled from a uniform distribution over $[0,1]$, with $d=10$ base arms and cardinality constraint $m=3$. 
The reward noise is correlated according to a $d \times d$ positive semi-definite covariance matrix $\boldsymbol{\Sigma}$, constructed as  $AA^\top + I_d$ with normalization, where $A \in \mathbb{R}^{d \times d}$ is a randomly generated matrix. The stochastic rewards are then sampled from a multivariate Gaussian distribution with the specified mean vector and covariance matrix $\boldsymbol{\Sigma}$.


\begin{figure}[h]
\centering
\includegraphics[width=0.34\linewidth]{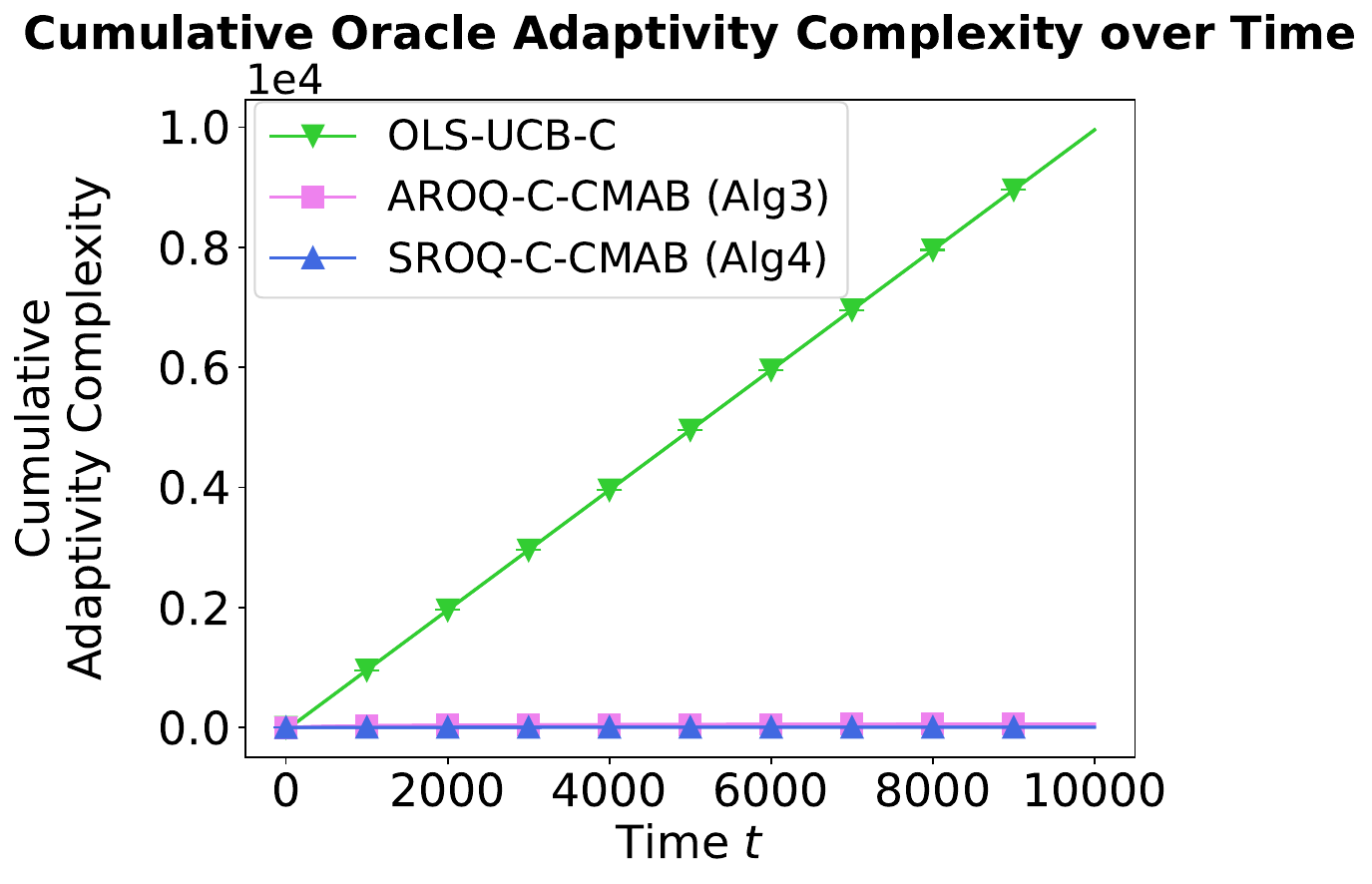}
\hfill
\includegraphics[width=0.33\linewidth]{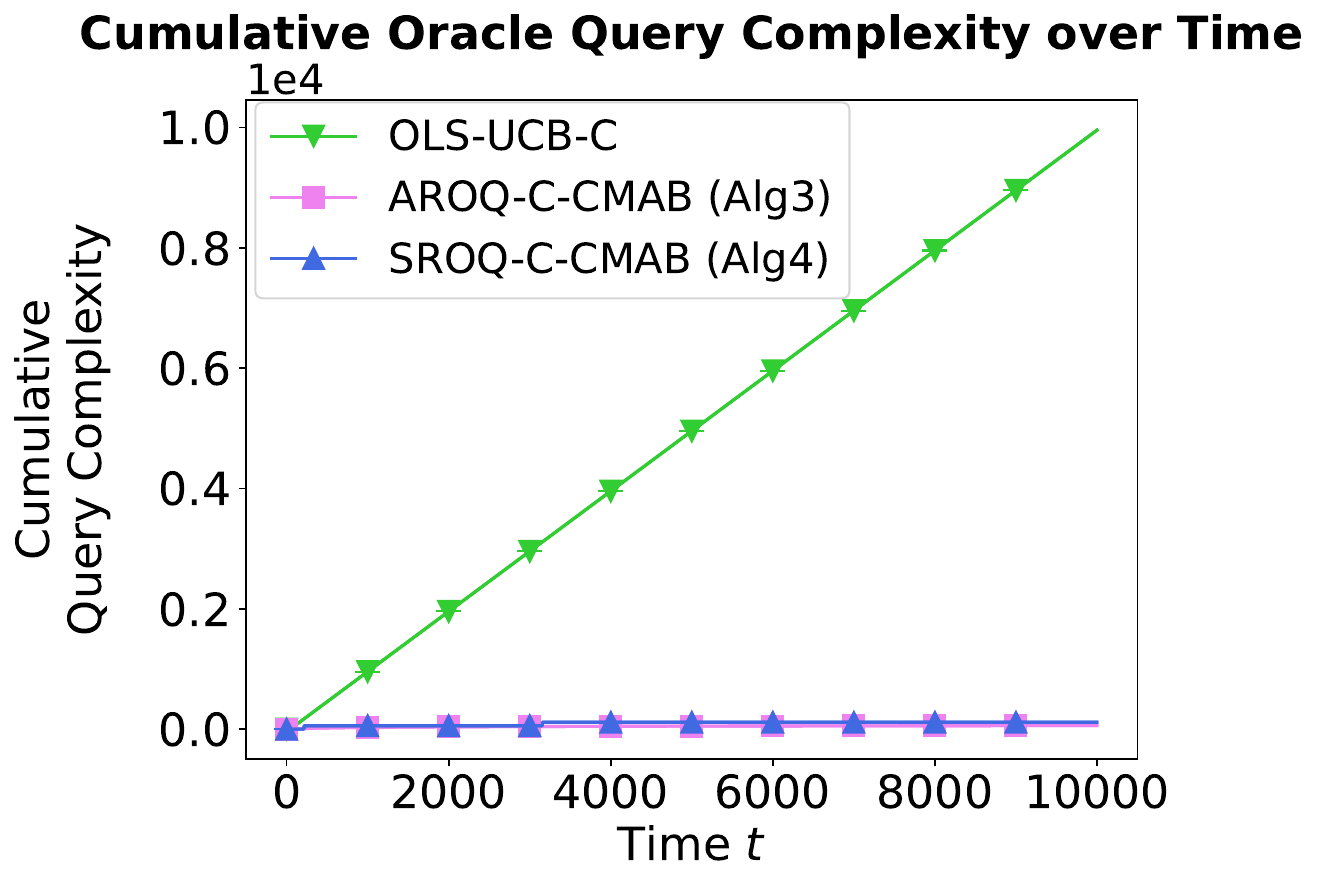}
\hfill
\includegraphics[width=0.31\linewidth]{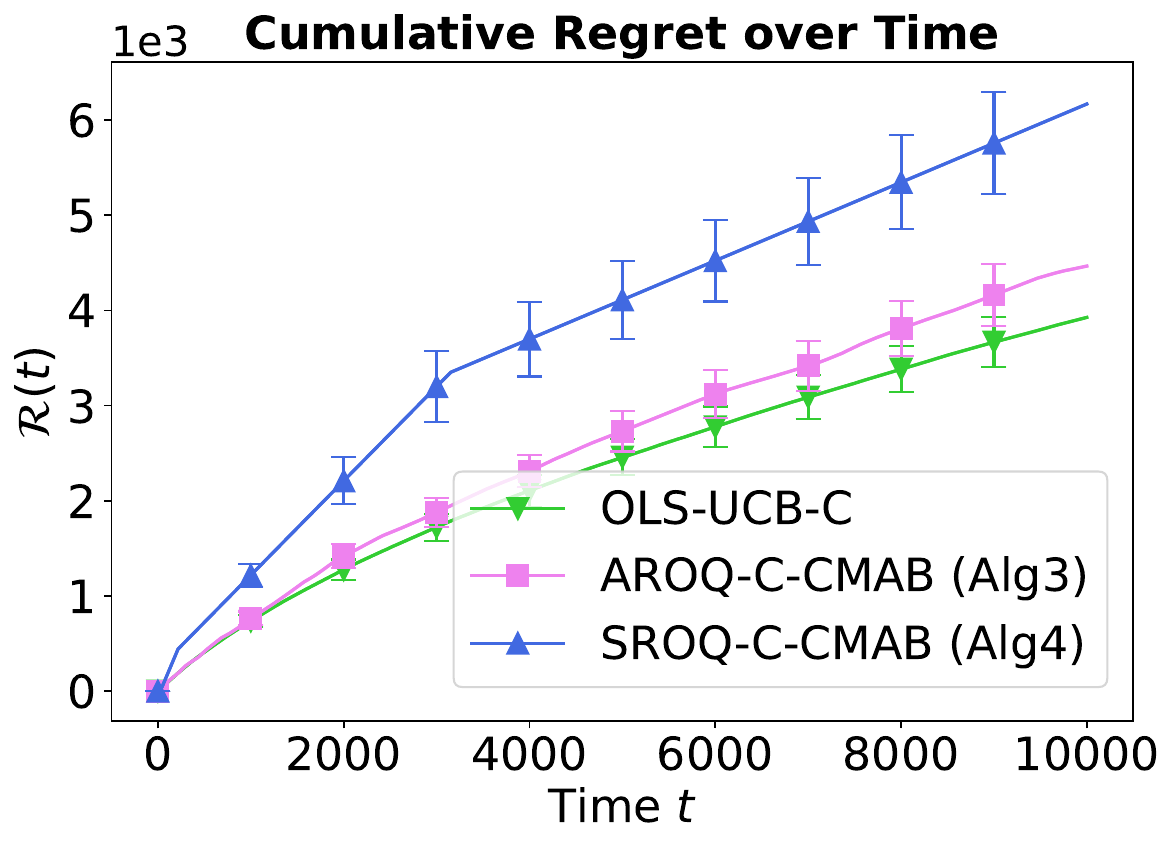}

\par\vspace{0.5em}
\makebox[0.32\linewidth]{\centering (a)}
\hfill
\makebox[0.32\linewidth]{\centering (b)}
\hfill
\makebox[0.32\linewidth]{\centering (c)}

\vspace{1em}
\includegraphics[width=0.32\linewidth]{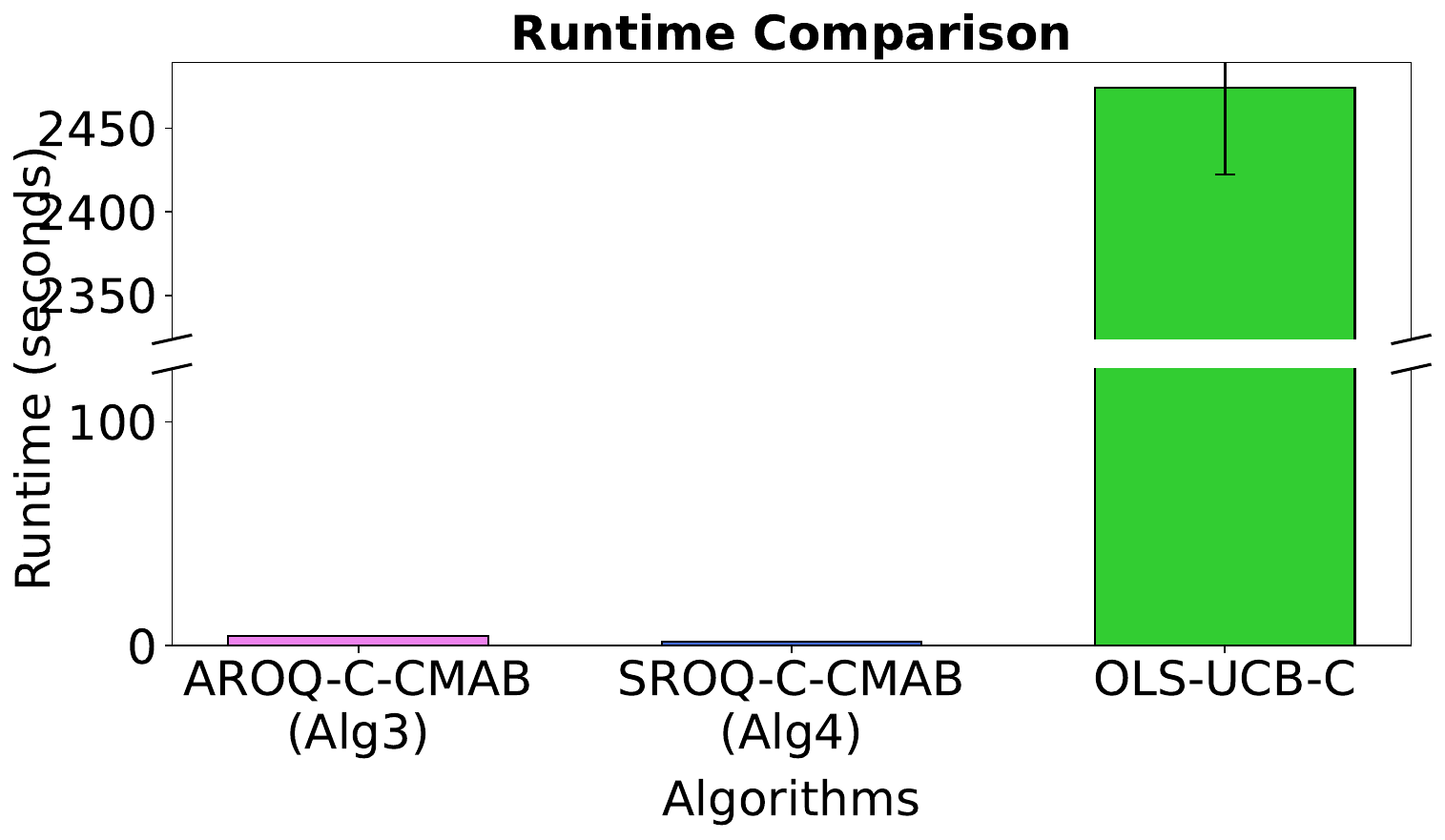}\hfill
\includegraphics[width=0.33\linewidth]{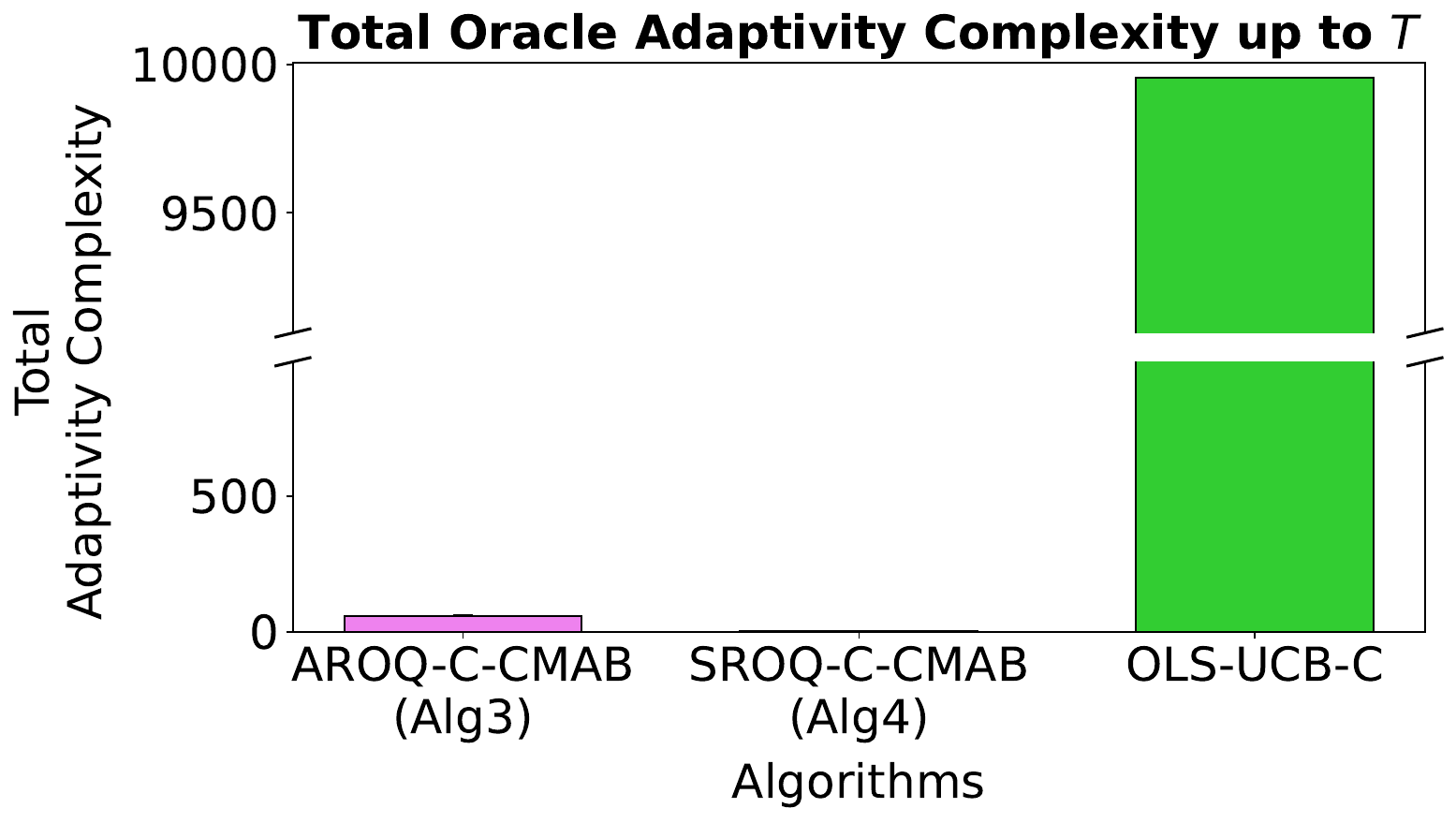}\hfill
\includegraphics[width=0.33\linewidth]{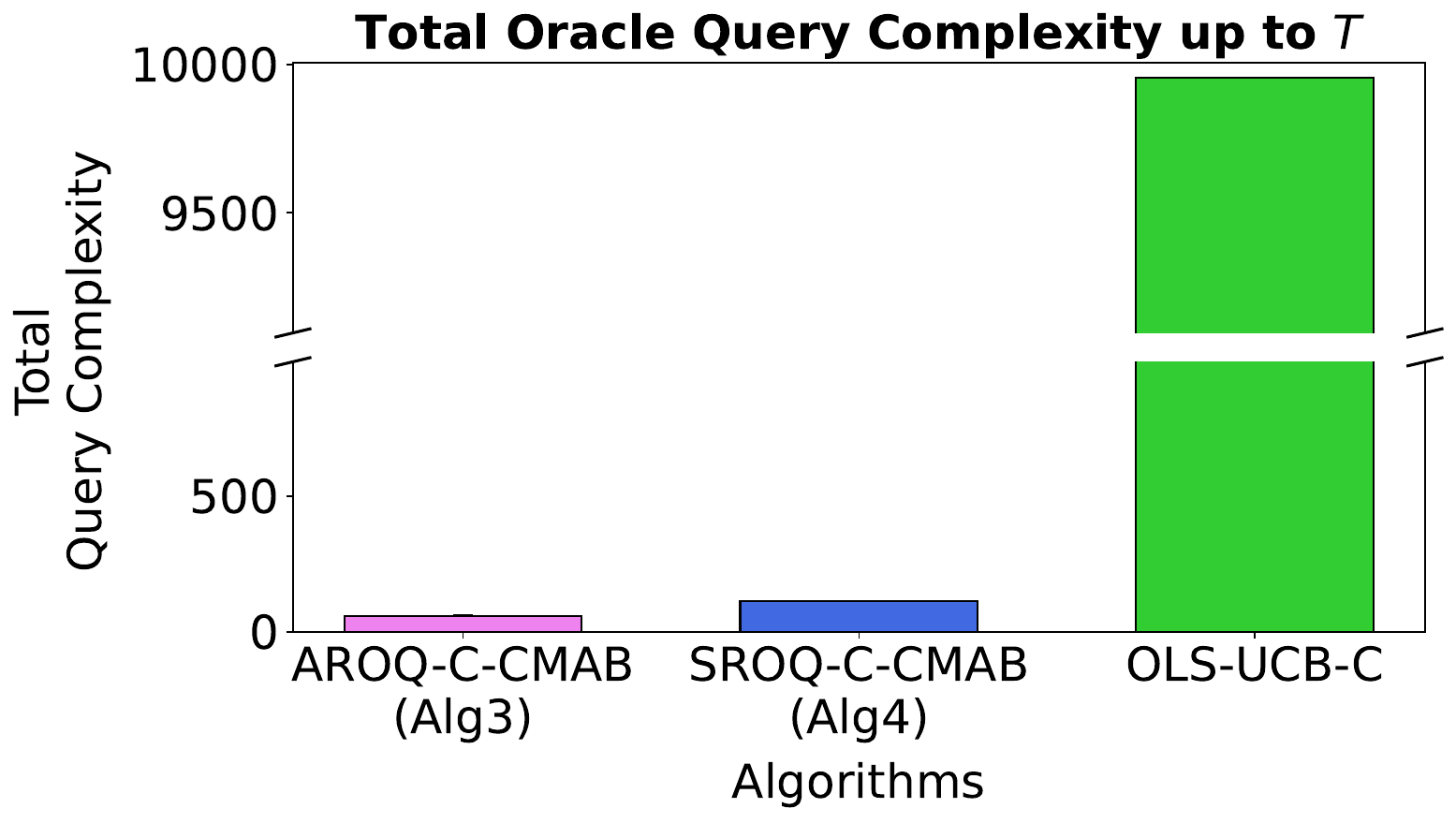}

\par\vspace{0.5em}
\makebox[0.32\linewidth]{\centering (d)}
\hfill
\makebox[0.32\linewidth]{\centering (e)}
\hfill
\makebox[0.32\linewidth]{\centering (f)}

\caption{Experimental results for covariance-adaptivity under linear reward with $d=10$ and $m=3$: (a) cumulative oracle adaptivity complexity, (b) cumulative oracle query complexity, (c) regret, (d) runtime, (e) overall oracle adaptivity complexity, and (f) overall oracle query complexity of algorithms.}
\label{fig:exp2}
\end{figure}

\subsubsection{General Reward}

Next, we present experiments (Figure~\ref{fig:exp3}) on general (non-linear) reward settings with $d=5$ and $m=2$. For each arm $i \in [d]$, the reward is sampled from a discrete distribution supported on the finite set $\{0.2, 0.4, 0.6, 0.8, 1\}$. The probability distribution for each arm is generated as follows: one value is randomly assigned to each arm. Then the value is assigned a large probability mass of $0.99$, while the remaining values share the remaining $0.01$ probability mass equally. The reward for an action is defined as the square root of the sum of the sampled rewards from the selected arms.

\begin{figure}[t]

\centering
\includegraphics[width=0.33\linewidth]{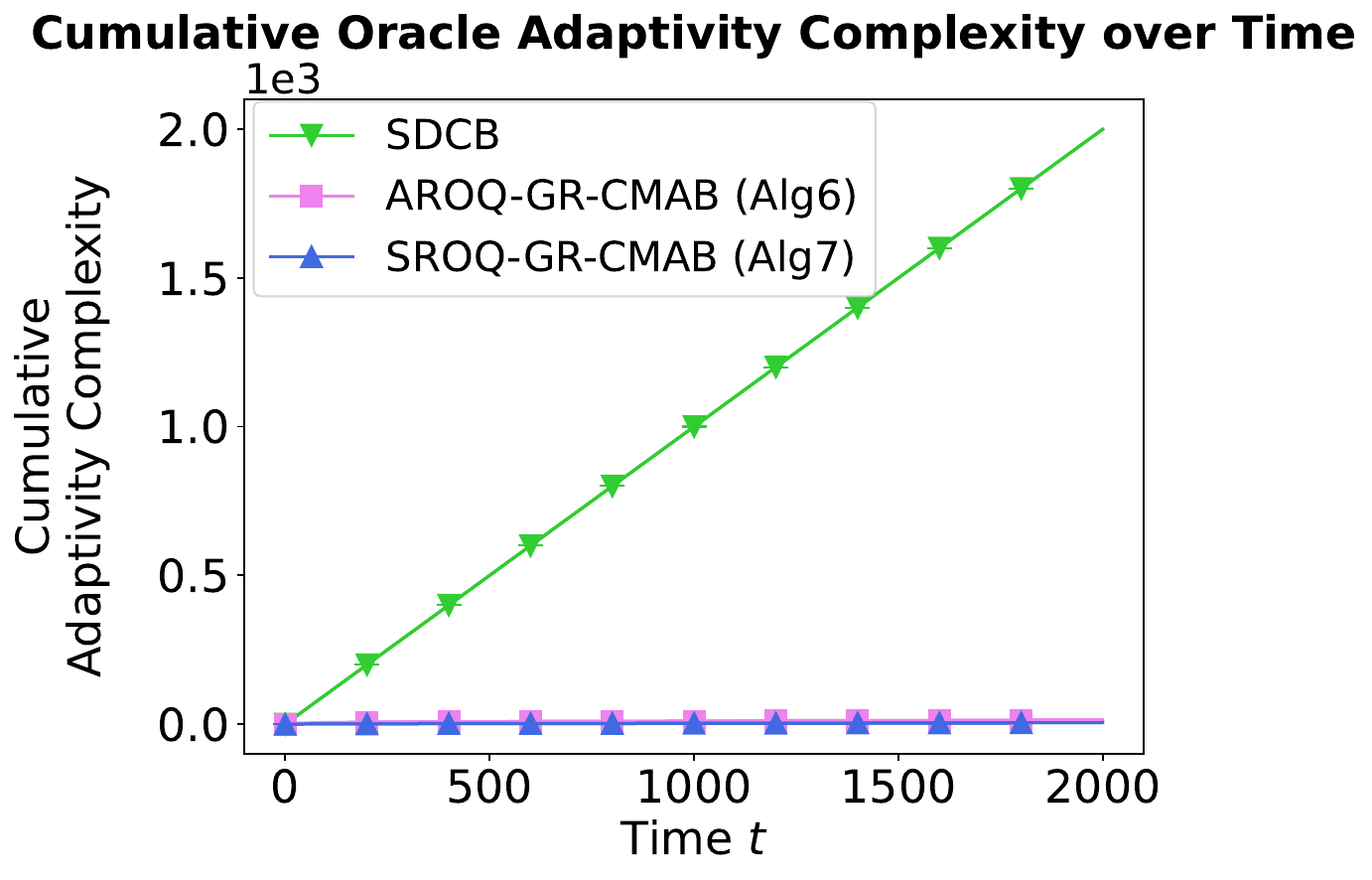}
\hfill
\includegraphics[width=0.32\linewidth]{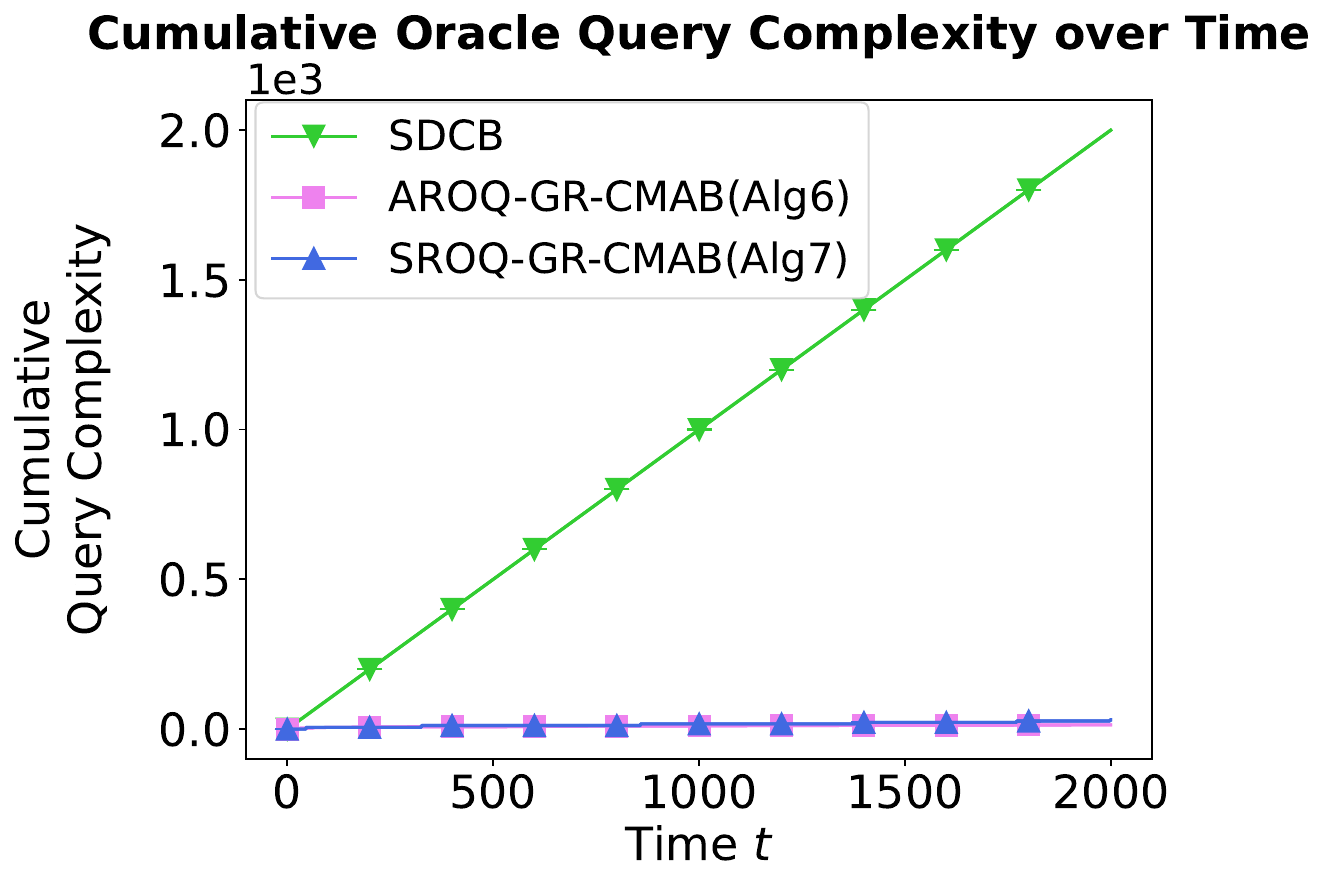}
\hfill
\includegraphics[width=0.31\linewidth]{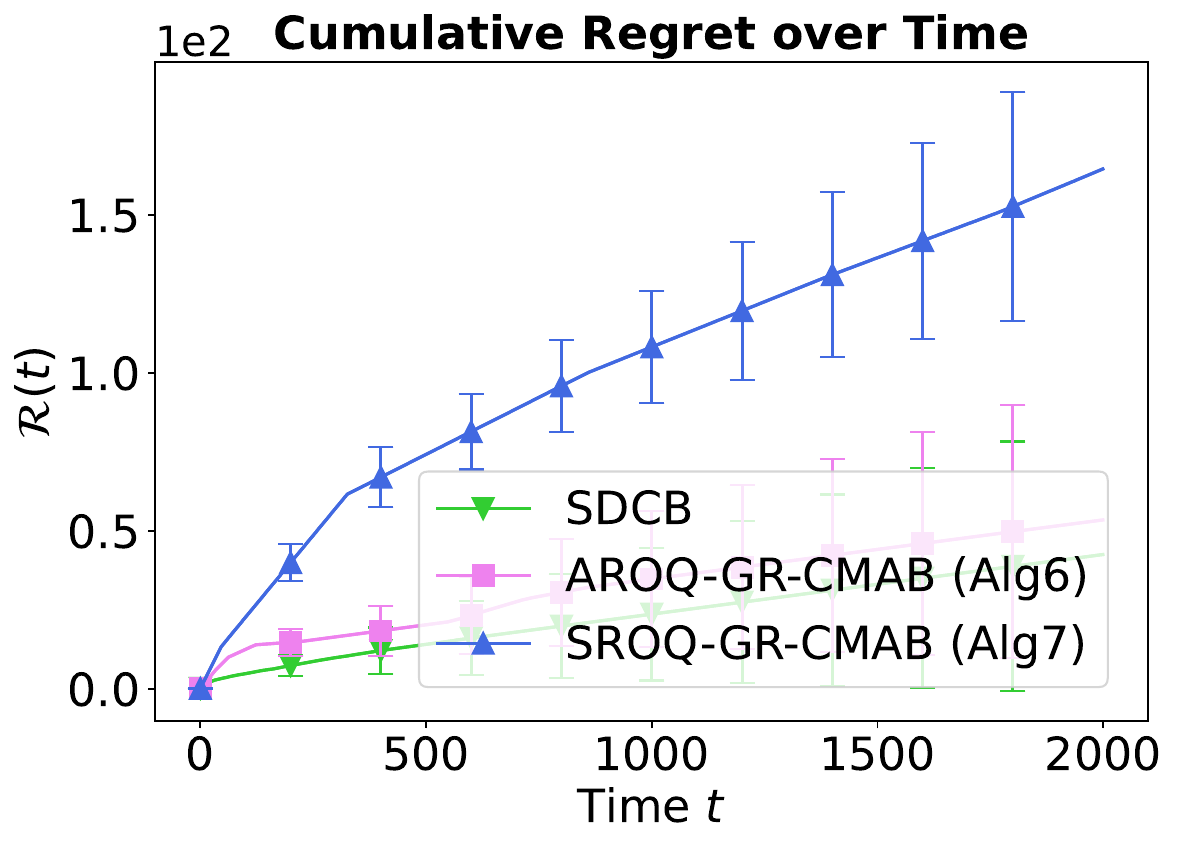}

\par\vspace{0.5em}
\makebox[0.32\linewidth]{\centering (a)}
\hfill
\makebox[0.32\linewidth]{\centering (b)}
\hfill
\makebox[0.32\linewidth]{\centering (c)}

\vspace{1em}
\includegraphics[width=0.33\linewidth]{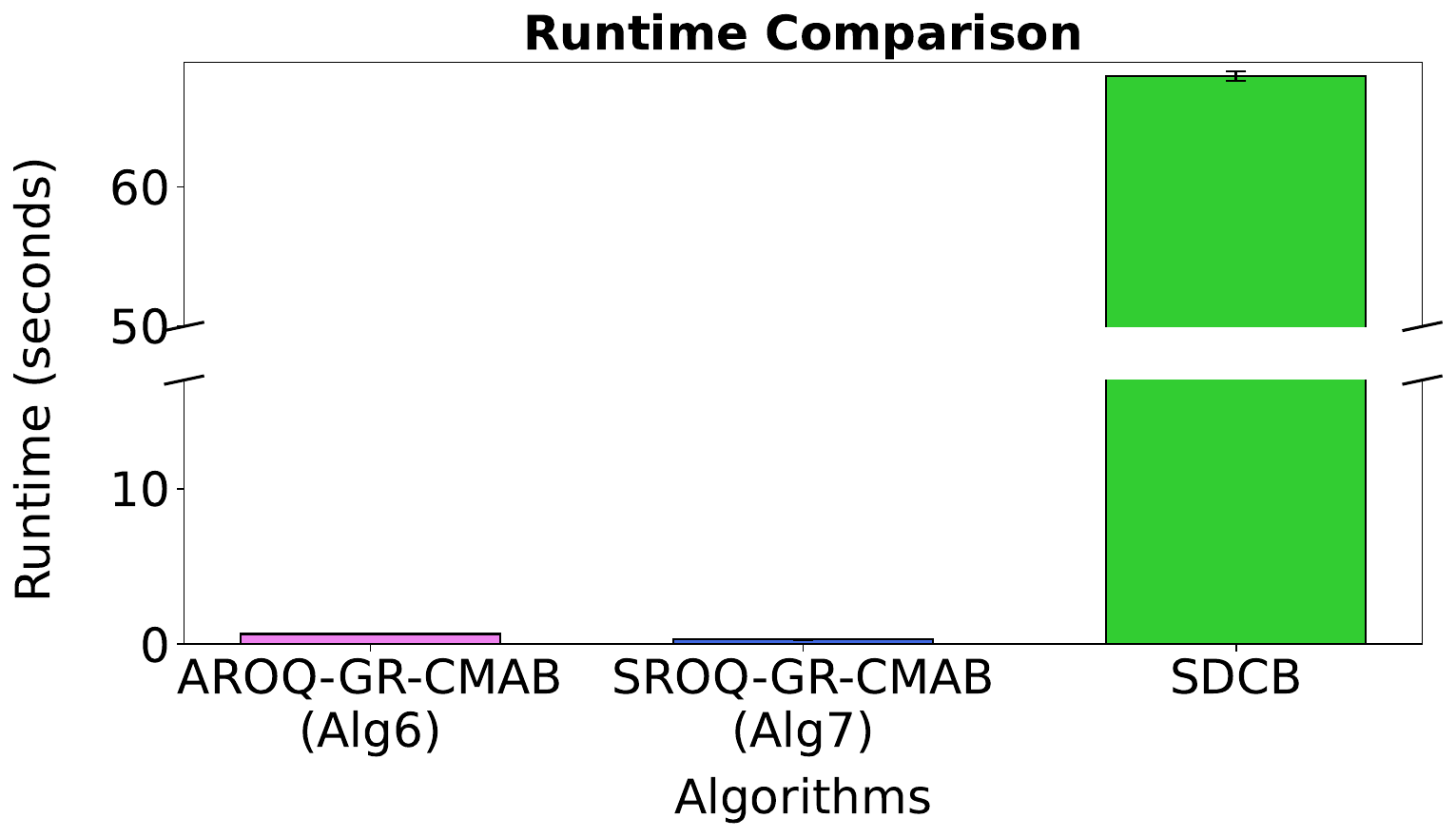}
\hfill
\includegraphics[width=0.33\linewidth]{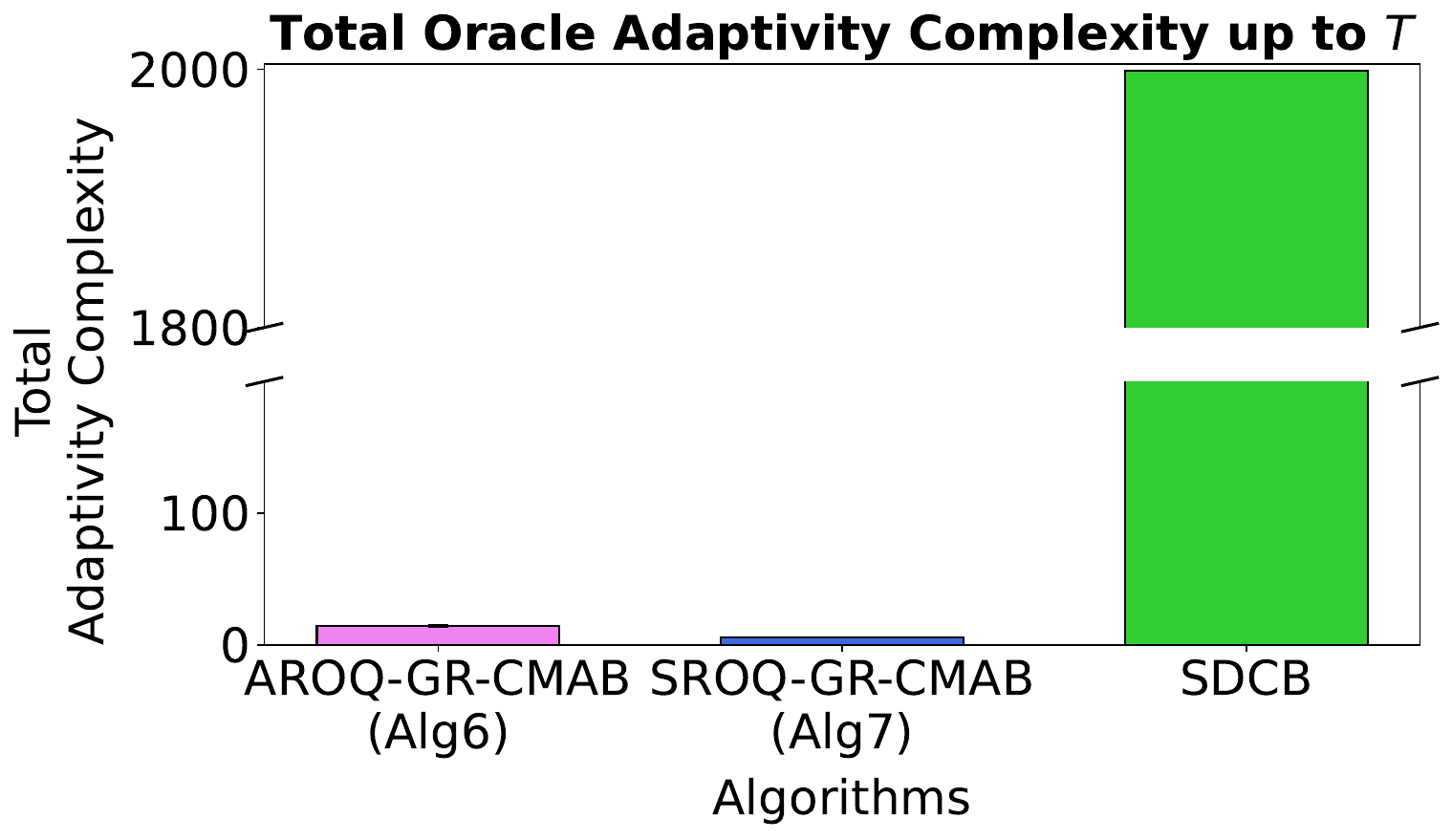}\hfill
\includegraphics[width=0.33\linewidth]{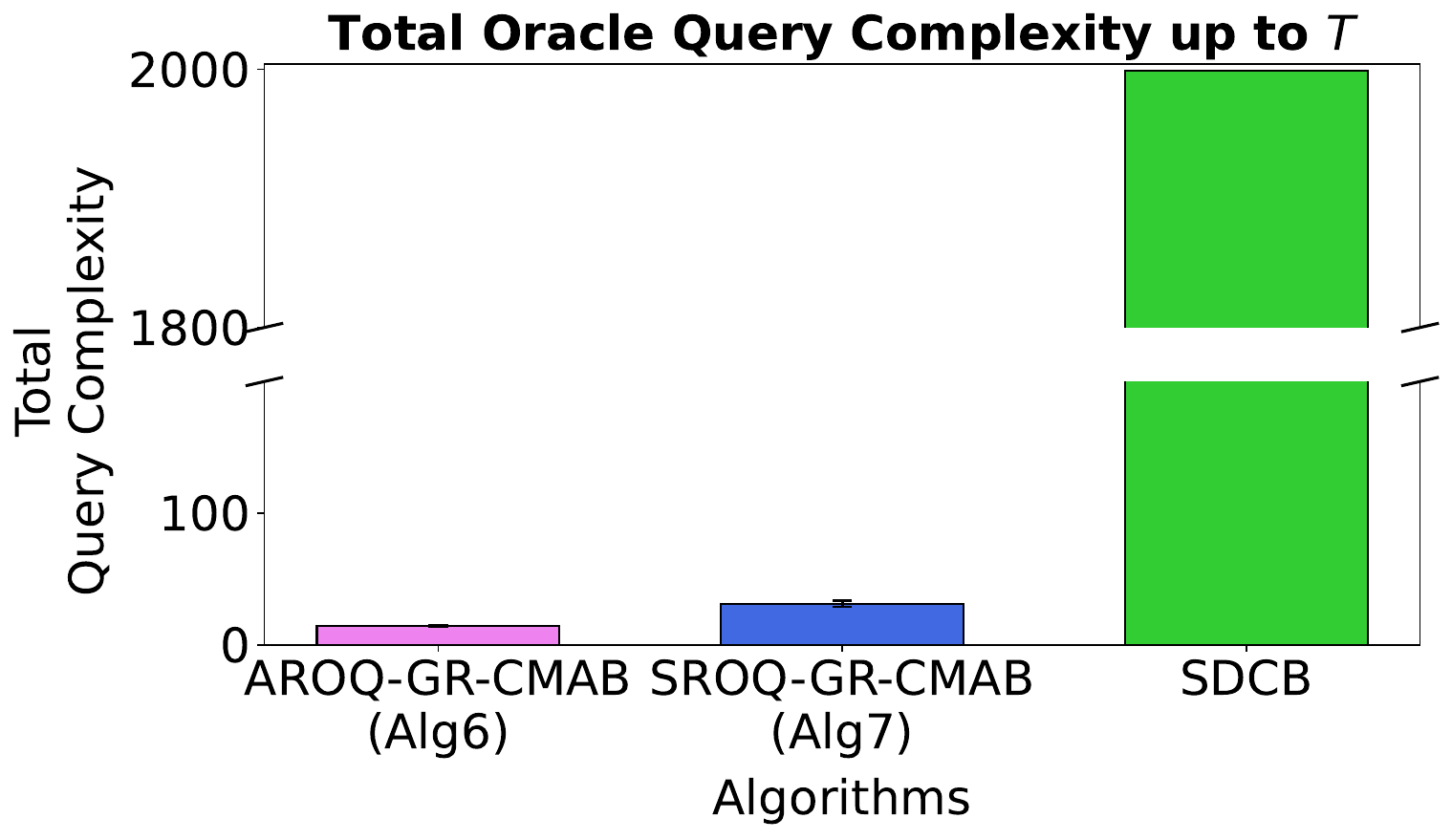}

\par\vspace{0.5em}
\makebox[0.33\linewidth]{\centering (d)}
\hfill
\makebox[0.32\linewidth]{\centering (e)}
\hfill
\makebox[0.32\linewidth]{\centering (f)}

\caption{Experimental results for non-linear reward with $d=5$ and $m=2$: (a) cumulative oracle adaptivity complexity, (b) cumulative oracle query complexity, (c) regret, (d) runtime, (e) overall oracle adaptivity complexity, and (f) overall oracle query complexity of algorithms.}
\label{fig:exp3}
\end{figure}

\newpage
\hfill
\newpage

\section*{NeurIPS Paper Checklist}

\begin{enumerate}

\item {\bf Claims}
    \item[] Question: Do the main claims made in the abstract and introduction accurately reflect the paper's contributions and scope?
    \item[] Answer:  \answerYes{} 
    \item[] Justification: We explain our contributions and scope in the abstract and introduction.
    \item[] Guidelines:
    \begin{itemize}
        \item The answer NA means that the abstract and introduction do not include the claims made in the paper.
        \item The abstract and/or introduction should clearly state the claims made, including the contributions made in the paper and important assumptions and limitations. A No or NA answer to this question will not be perceived well by the reviewers. 
        \item The claims made should match theoretical and experimental results, and reflect how much the results can be expected to generalize to other settings. 
        \item It is fine to include aspirational goals as motivation as long as it is clear that these goals are not attained by the paper. 
    \end{itemize}

\item {\bf Limitations}
    \item[] Question: Does the paper discuss the limitations of the work performed by the authors?
    \item[] Answer: \answerYes{} 
    \item[] Justification: We compare our algorithms to the previous methods. Please see Table~\ref{tab:algo-comparison}, lines $141\sim 145, 217\sim223, $ and $ 262\sim265.$
    \item[] Guidelines:
    \begin{itemize}
        \item The answer NA means that the paper has no limitation while the answer No means that the paper has limitations, but those are not discussed in the paper. 
        \item The authors are encouraged to create a separate "Limitations" section in their paper.
        \item The paper should point out any strong assumptions and how robust the results are to violations of these assumptions (e.g., independence assumptions, noiseless settings, model well-specification, asymptotic approximations only holding locally). The authors should reflect on how these assumptions might be violated in practice and what the implications would be.
        \item The authors should reflect on the scope of the claims made, e.g., if the approach was only tested on a few datasets or with a few runs. In general, empirical results often depend on implicit assumptions, which should be articulated.
        \item The authors should reflect on the factors that influence the performance of the approach. For example, a facial recognition algorithm may perform poorly when image resolution is low or images are taken in low lighting. Or a speech-to-text system might not be used reliably to provide closed captions for online lectures because it fails to handle technical jargon.
        \item The authors should discuss the computational efficiency of the proposed algorithms and how they scale with dataset size.
        \item If applicable, the authors should discuss possible limitations of their approach to address problems of privacy and fairness.
        \item While the authors might fear that complete honesty about limitations might be used by reviewers as grounds for rejection, a worse outcome might be that reviewers discover limitations that aren't acknowledged in the paper. The authors should use their best judgment and recognize that individual actions in favor of transparency play an important role in developing norms that preserve the integrity of the community. Reviewers will be specifically instructed to not penalize honesty concerning limitations.
    \end{itemize}

\item {\bf Theory assumptions and proofs}
    \item[] Question: For each theoretical result, does the paper provide the full set of assumptions and a complete (and correct) proof?
    \item[] Answer: \answerYes{} 
    \item[] Justification: All assumptions are clearly stated, and complete proofs are provided in the main text and appendix.
    \item[] Guidelines:
    \begin{itemize}
        \item The answer NA means that the paper does not include theoretical results. 
        \item All the theorems, formulas, and proofs in the paper should be numbered and cross-referenced.
        \item All assumptions should be clearly stated or referenced in the statement of any theorems.
        \item The proofs can either appear in the main paper or the supplemental material, but if they appear in the supplemental material, the authors are encouraged to provide a short proof sketch to provide intuition. 
        \item Inversely, any informal proof provided in the core of the paper should be complemented by formal proofs provided in appendix or supplemental material.
        \item Theorems and Lemmas that the proof relies upon should be properly referenced. 
    \end{itemize}

    \item {\bf Experimental result reproducibility}
    \item[] Question: Does the paper fully disclose all the information needed to reproduce the main experimental results of the paper to the extent that it affects the main claims and/or conclusions of the paper (regardless of whether the code and data are provided or not)?
    \item[] Answer: \answerYes{} 
    \item[] Justification: The paper includes details to reproduce the experimental results.
    \item[] Guidelines:
    \begin{itemize}
        \item The answer NA means that the paper does not include experiments.
        \item If the paper includes experiments, a No answer to this question will not be perceived well by the reviewers: Making the paper reproducible is important, regardless of whether the code and data are provided or not.
        \item If the contribution is a dataset and/or model, the authors should describe the steps taken to make their results reproducible or verifiable. 
        \item Depending on the contribution, reproducibility can be accomplished in various ways. For example, if the contribution is a novel architecture, describing the architecture fully might suffice, or if the contribution is a specific model and empirical evaluation, it may be necessary to either make it possible for others to replicate the model with the same dataset, or provide access to the model. In general. releasing code and data is often one good way to accomplish this, but reproducibility can also be provided via detailed instructions for how to replicate the results, access to a hosted model (e.g., in the case of a large language model), releasing of a model checkpoint, or other means that are appropriate to the research performed.
        \item While NeurIPS does not require releasing code, the conference does require all submissions to provide some reasonable avenue for reproducibility, which may depend on the nature of the contribution. For example
        \begin{enumerate}
            \item If the contribution is primarily a new algorithm, the paper should make it clear how to reproduce that algorithm.
            \item If the contribution is primarily a new model architecture, the paper should describe the architecture clearly and fully.
            \item If the contribution is a new model (e.g., a large language model), then there should either be a way to access this model for reproducing the results or a way to reproduce the model (e.g., with an open-source dataset or instructions for how to construct the dataset).
            \item We recognize that reproducibility may be tricky in some cases, in which case authors are welcome to describe the particular way they provide for reproducibility. In the case of closed-source models, it may be that access to the model is limited in some way (e.g., to registered users), but it should be possible for other researchers to have some path to reproducing or verifying the results.
        \end{enumerate}
    \end{itemize}

\item {\bf Open access to data and code}
    \item[] Question: Does the paper provide open access to the data and code, with sufficient instructions to faithfully reproduce the main experimental results, as described in supplemental material?
    \item[] Answer: \answerYes{} 
    \item[] Justification: The paper provides a link to the source code in the experimental section.
    \item[] Guidelines:
    \begin{itemize}
        \item The answer NA means that paper does not include experiments requiring code.
        \item Please see the NeurIPS code and data submission guidelines (\url{https://nips.cc/public/guides/CodeSubmissionPolicy}) for more details.
        \item While we encourage the release of code and data, we understand that this might not be possible, so “No” is an acceptable answer. Papers cannot be rejected simply for not including code, unless this is central to the contribution (e.g., for a new open-source benchmark).
        \item The instructions should contain the exact command and environment needed to run to reproduce the results. See the NeurIPS code and data submission guidelines (\url{https://nips.cc/public/guides/CodeSubmissionPolicy}) for more details.
        \item The authors should provide instructions on data access and preparation, including how to access the raw data, preprocessed data, intermediate data, and generated data, etc.
        \item The authors should provide scripts to reproduce all experimental results for the new proposed method and baselines. If only a subset of experiments are reproducible, they should state which ones are omitted from the script and why.
        \item At submission time, to preserve anonymity, the authors should release anonymized versions (if applicable).
        \item Providing as much information as possible in supplemental material (appended to the paper) is recommended, but including URLs to data and code is permitted.
    \end{itemize}

\item {\bf Experimental setting/details}
    \item[] Question: Does the paper specify all the training and test details (e.g., data splits, hyperparameters, how they were chosen, type of optimizer, etc.) necessary to understand the results?
    \item[] Answer: \answerYes{}{} 
    \item[] Justification: The paper provides details to understand and reproduce the experimental results.
    \item[] Guidelines:
    \begin{itemize}
        \item The answer NA means that the paper does not include experiments.
        \item The experimental setting should be presented in the core of the paper to a level of detail that is necessary to appreciate the results and make sense of them.
        \item The full details can be provided either with the code, in appendix, or as supplemental material.
    \end{itemize}

\item {\bf Experiment statistical significance}
    \item[] Question: Does the paper report error bars suitably and correctly defined or other appropriate information about the statistical significance of the experiments?
    \item[] Answer: \answerYes{} 
    \item[] Justification: The experimental results include appropriate error bars to indicate statistical variation across repeated trials.
    \item[] Guidelines:
    \begin{itemize}
        \item The answer NA means that the paper does not include experiments.
        \item The authors should answer "Yes" if the results are accompanied by error bars, confidence intervals, or statistical significance tests, at least for the experiments that support the main claims of the paper.
        \item The factors of variability that the error bars are capturing should be clearly stated (for example, train/test split, initialization, random drawing of some parameter, or overall run with given experimental conditions).
        \item The method for calculating the error bars should be explained (closed form formula, call to a library function, bootstrap, etc.)
        \item The assumptions made should be given (e.g., Normally distributed errors).
        \item It should be clear whether the error bar is the standard deviation or the standard error of the mean.
        \item It is OK to report 1-sigma error bars, but one should state it. The authors should preferably report a 2-sigma error bar than state that they have a 96\% CI, if the hypothesis of Normality of errors is not verified.
        \item For asymmetric distributions, the authors should be careful not to show in tables or figures symmetric error bars that would yield results that are out of range (e.g. negative error rates).
        \item If error bars are reported in tables or plots, The authors should explain in the text how they were calculated and reference the corresponding figures or tables in the text.
    \end{itemize}

\item {\bf Experiments compute resources}
    \item[] Question: For each experiment, does the paper provide sufficient information on the computer resources (type of compute workers, memory, time of execution) needed to reproduce the experiments?
    \item[] Answer: \answerNo{} 
    \item[] Justification: The experiments are simple and do not require significant computational resources.
    \item[] Guidelines:
    \begin{itemize}
        \item The answer NA means that the paper does not include experiments.
        \item The paper should indicate the type of compute workers CPU or GPU, internal cluster, or cloud provider, including relevant memory and storage.
        \item The paper should provide the amount of compute required for each of the individual experimental runs as well as estimate the total compute. 
        \item The paper should disclose whether the full research project required more compute than the experiments reported in the paper (e.g., preliminary or failed experiments that didn't make it into the paper). 
    \end{itemize}
    
\item {\bf Code of ethics}
    \item[] Question: Does the research conducted in the paper conform, in every respect, with the NeurIPS Code of Ethics \url{https://neurips.cc/public/EthicsGuidelines}?
    \item[] Answer: \answerYes{} 
    \item[] Justification: The research does not involve human subjects, sensitive data, or applications with foreseeable negative societal impact.
    \item[] Guidelines:
    \begin{itemize}
        \item The answer NA means that the authors have not reviewed the NeurIPS Code of Ethics.
        \item If the authors answer No, they should explain the special circumstances that require a deviation from the Code of Ethics.
        \item The authors should make sure to preserve anonymity (e.g., if there is a special consideration due to laws or regulations in their jurisdiction).
    \end{itemize}

\item {\bf Broader impacts}
    \item[] Question: Does the paper discuss both potential positive societal impacts and negative societal impacts of the work performed?
    \item[] Answer: \answerYes{} 
    \item[] Justification: Please see Section~\ref{sec:con}.
    \item[] Guidelines:
    \begin{itemize}
        \item The answer NA means that there is no societal impact of the work performed.
        \item If the authors answer NA or No, they should explain why their work has no societal impact or why the paper does not address societal impact.
        \item Examples of negative societal impacts include potential malicious or unintended uses (e.g., disinformation, generating fake profiles, surveillance), fairness considerations (e.g., deployment of technologies that could make decisions that unfairly impact specific groups), privacy considerations, and security considerations.
        \item The conference expects that many papers will be foundational research and not tied to particular applications, let alone deployments. However, if there is a direct path to any negative applications, the authors should point it out. For example, it is legitimate to point out that an improvement in the quality of generative models could be used to generate deepfakes for disinformation. On the other hand, it is not needed to point out that a generic algorithm for optimizing neural networks could enable people to train models that generate Deepfakes faster.
        \item The authors should consider possible harms that could arise when the technology is being used as intended and functioning correctly, harms that could arise when the technology is being used as intended but gives incorrect results, and harms following from (intentional or unintentional) misuse of the technology.
        \item If there are negative societal impacts, the authors could also discuss possible mitigation strategies (e.g., gated release of models, providing defenses in addition to attacks, mechanisms for monitoring misuse, mechanisms to monitor how a system learns from feedback over time, improving the efficiency and accessibility of ML).
    \end{itemize}
    
\item {\bf Safeguards}
    \item[] Question: Does the paper describe safeguards that have been put in place for responsible release of data or models that have a high risk for misuse (e.g., pretrained language models, image generators, or scraped datasets)?
    \item[] Answer: \answerNA{} 
    \item[] Justification: This work is primarily theoretical.

    \item[] Guidelines:
    \begin{itemize}
        \item The answer NA means that the paper poses no such risks.
        \item Released models that have a high risk for misuse or dual-use should be released with necessary safeguards to allow for controlled use of the model, for example by requiring that users adhere to usage guidelines or restrictions to access the model or implementing safety filters. 
        \item Datasets that have been scraped from the Internet could pose safety risks. The authors should describe how they avoided releasing unsafe images.
        \item We recognize that providing effective safeguards is challenging, and many papers do not require this, but we encourage authors to take this into account and make a best faith effort.
    \end{itemize}

\item {\bf Licenses for existing assets}
    \item[] Question: Are the creators or original owners of assets (e.g., code, data, models), used in the paper, properly credited and are the license and terms of use explicitly mentioned and properly respected?
    \item[] Answer: \answerNA{} 
    \item[] Justification: This work is   primarily theoretical.

    \item[] Guidelines:
    \begin{itemize}
        \item The answer NA means that the paper does not use existing assets.
        \item The authors should cite the original paper that produced the code package or dataset.
        \item The authors should state which version of the asset is used and, if possible, include a URL.
        \item The name of the license (e.g., CC-BY 4.0) should be included for each asset.
        \item For scraped data from a particular source (e.g., website), the copyright and terms of service of that source should be provided.
        \item If assets are released, the license, copyright information, and terms of use in the package should be provided. For popular datasets, \url{paperswithcode.com/datasets} has curated licenses for some datasets. Their licensing guide can help determine the license of a dataset.
        \item For existing datasets that are re-packaged, both the original license and the license of the derived asset (if it has changed) should be provided.
        \item If this information is not available online, the authors are encouraged to reach out to the asset's creators.
    \end{itemize}

\item {\bf New assets}
    \item[] Question: Are new assets introduced in the paper well documented and is the documentation provided alongside the assets?
    \item[] Answer: \answerNA{} 
    \item[] Justification: This work is   primarily theoretical.

    \item[] Guidelines:
    \begin{itemize}
        \item The answer NA means that the paper does not release new assets.
        \item Researchers should communicate the details of the dataset/code/model as part of their submissions via structured templates. This includes details about training, license, limitations, etc. 
        \item The paper should discuss whether and how consent was obtained from people whose asset is used.
        \item At submission time, remember to anonymize your assets (if applicable). You can either create an anonymized URL or include an anonymized zip file.
    \end{itemize}

\item {\bf Crowdsourcing and research with human subjects}
    \item[] Question: For crowdsourcing experiments and research with human subjects, does the paper include the full text of instructions given to participants and screenshots, if applicable, as well as details about compensation (if any)? 
    \item[] Answer: \answerNA{} 
    \item[] Justification: This work is   primarily theoretical.

    \item[] Guidelines:
    \begin{itemize}
        \item The answer NA means that the paper does not involve crowdsourcing nor research with human subjects.
        \item Including this information in the supplemental material is fine, but if the main contribution of the paper involves human subjects, then as much detail as possible should be included in the main paper. 
        \item According to the NeurIPS Code of Ethics, workers involved in data collection, curation, or other labor should be paid at least the minimum wage in the country of the data collector. 
    \end{itemize}

\item {\bf Institutional review board (IRB) approvals or equivalent for research with human subjects}
    \item[] Question: Does the paper describe potential risks incurred by study participants, whether such risks were disclosed to the subjects, and whether Institutional Review Board (IRB) approvals (or an equivalent approval/review based on the requirements of your country or institution) were obtained?
    \item[] Answer: \answerNA{} 
    \item[] Justification: This work is   primarily theoretical.
    \item[] Guidelines:
    \begin{itemize}
        \item The answer NA means that the paper does not involve crowdsourcing nor research with human subjects.
        \item Depending on the country in which research is conducted, IRB approval (or equivalent) may be required for any human subjects research. If you obtained IRB approval, you should clearly state this in the paper. 
        \item We recognize that the procedures for this may vary significantly between institutions and locations, and we expect authors to adhere to the NeurIPS Code of Ethics and the guidelines for their institution. 
        \item For initial submissions, do not include any information that would break anonymity (if applicable), such as the institution conducting the review.
    \end{itemize}

\item {\bf Declaration of LLM usage}
    \item[] Question: Does the paper describe the usage of LLMs if it is an important, original, or non-standard component of the core methods in this research? Note that if the LLM is used only for writing, editing, or formatting purposes and does not impact the core methodology, scientific rigorousness, or originality of the research, declaration is not required.
    \item[] Answer: \answerNA{} 
    \item[] Justification: LLMs were not used in the development of core methods; they were used only for improving writing.
    \item[] Guidelines:
    \begin{itemize}
        \item The answer NA means that the core method development in this research does not involve LLMs as any important, original, or non-standard components.
        \item Please refer to our LLM policy (\url{https://neurips.cc/Conferences/2025/LLM}) for what should or should not be described.
    \end{itemize}

\end{enumerate}



\end{document}